\documentclass[11pt]{article}

\usepackage[margin=1in]{geometry}
\usepackage[numbers,sort&compress]{natbib}

\usepackage[utf8]{inputenc}
\usepackage[T1]{fontenc}
\usepackage{url}
\usepackage{amsmath,amsthm,amssymb,amsfonts}
\usepackage{enumerate}
\usepackage{kotex}
\usepackage{multirow}
\usepackage{hhline}
\usepackage{xcolor}
\usepackage{tikz}
\usetikzlibrary{arrows.meta}
\usepackage{mdframed}
\usepackage[dvipdfmx,hypertexnames=false]{hyperref}
\usepackage{graphicx}
\usepackage{graphics}
\usepackage{algorithm}
\usepackage{algpseudocode}
\usepackage{epsfig,psfrag}
\usepackage{tabularx}
\usepackage{tabulary}
\usepackage{booktabs}
\usepackage{nicefrac}
\usepackage{microtype}
\usepackage[noabbrev,capitalise,nameinlink]{cleveref}

\hypersetup{
  breaklinks=true,
  colorlinks=true,
  linkcolor=blue,
  citecolor=blue,
  urlcolor=blue
}

\DeclareMathOperator{\argmax}{arg\,max}
\DeclareMathOperator{\argmin}{arg\,min}
\DeclareMathOperator{\co}{co}
\DeclareMathOperator{\diag}{diag}

\newcommand{\R}{\mathbb{R}}
\newcommand{\E}{\mathbb{E}}

\newcommand{\calF}{\mathcal{F}}
\newcommand{\calS}{\mathcal{S}}
\newcommand{\calA}{\mathcal{A}}
\newcommand{\calH}{\mathcal{H}}

\newcommand{\jsr}{\rho}
\newcommand{\bPi}{\boldsymbol{\Pi}}
\newcommand{\bT}{\mathbf{T}}
\newcommand{\bA}{\mathbf{A}}

\newtheorem{theorem}{Theorem}
\newtheorem{example}{Example}
\newtheorem{assumption}{Assumption}

\newtheorem{corollary}{Corollary}

\newtheorem{lemma}{Lemma}

\newtheorem{proposition}{Proposition}

\crefname{assumption}{Assumption}{Assumptions}
\Crefname{assumption}{Assumption}{Assumptions}

\newmdenv[
  linewidth=0.6pt,
  linecolor=black,
  skipabove=6pt,
  skipbelow=6pt,
  innertopmargin=6pt,
  innerbottommargin=6pt,
  innerleftmargin=8pt,
  innerrightmargin=8pt
]{restatementbox}

\title{A Switching System Theory of Q-Learning with Linear Function Approximation}

\author{%
Donghwan Lee and Han-Dong Lim\\
Department of Electrical Engineering\\
Korea Advanced Institute of Science and Technology (KAIST)\\
Daejeon 34141, South Korea\\
\texttt{donghwan@kaist.ac.kr}
}

\date{}

\begin{document}

\maketitle

\begin{abstract}
Q-learning is a fundamental algorithmic primitive in reinforcement learning. This paper develops a new framework for analyzing linear Q-learning from a switching linear system (SLS) viewpoint, where linear Q-learning denotes Q-learning with linear function approximation. We derive a stochastic SLS representation of the linear Q-learning error and obtain a finite-time error analysis for linear Q-learning through the joint spectral radius (JSR) of the associated SLS family; the JSR is the exact worst-case exponential rate of the corresponding SLSs. The JSR-based rate is tied to the intrinsic worst-case exponential rate of the SLS representation. Moreover, we provide a JSR-based certificate for convergence of linear Q-learning, which can be less conservative than one-step norm bounds.
\end{abstract}

\section{Introduction}
Q-learning~\cite{watkins1992q} is a foundational algorithm for reinforcement learning in discounted Markov decision processes (MDPs)~\cite{sutton1998reinforcement,puterman2014markov,bertsekas2015dynamic}. For large state-action spaces, tabular Q-learning can be computationally prohibitive because it maintains and updates a separate value for every state-action pair. A standard remedy is to approximate the Q-function within a parameterized function class, thereby replacing the search over a high-dimensional table by a search over a lower-dimensional parameter space~\cite{sutton1998reinforcement,bertsekas2015dynamic}. The simplest and most widely studied choice is linear function approximation (LFA), in which the Q-function is represented as a linear combination of features. The convergence and stability of Q-learning with LFA, also called linear Q-learning, have been investigated extensively~\cite{melo2007convergence,carvalho2020new,zhang2021breaking,chen2023target,che2024target,meyn2024projected,liu2025linear}.

In this paper, we study linear Q-learning through switching linear system (SLS) theory~\cite{liberzon2003switching,lin2009stability}. The stability object is the joint spectral radius (JSR) of the associated switching matrix family~\cite{rota1960note,tsitsiklis1997lyapunov,jungers2009joint}.
This representation gives a switched linear model for the deterministic recursion of linear Q-learning. When the corresponding JSR is less than one, a piecewise quadratic Lyapunov norm~\cite{hushenzhang2010generating} certifies contraction of the mean linear Q-learning map, uniqueness of the projected Bellman fixed point, and exponential convergence. The JSR analysis permits arbitrary switching, whereas the deterministic linear Q-learning realizes only the stochastic-policy modes generated along its trajectory. Therefore, the proposed JSR condition is a robust sufficient certificate for the actual recursion. The proposed Lyapunov certificate is then used for stochastic linear Q-learning under independent and identically distributed (i.i.d.) observations, where the sampled recursion is the switched mean dynamics plus martingale-difference noise. We also apply the switching viewpoint to regularized Q-learning with LFA~\cite{limlee2024regularized}. Regularization shifts each direct switching mode, which leads to a regularized JSR and stability conditions depending on the regularization parameter. The analysis gives a unified way to compare unregularized and regularized linear Q-learning through the switched matrix families induced by the Bellman optimality error via stochastic-policy linearization.

\section{Related Work}\label{sec:related-work}
Classical convergence analyses of Q-learning rely on stochastic approximation, ordinary differential equation (ODE) methods, Bellman contraction, monotonicity, and finite-time error estimates~\cite{tsitsiklis1994asynchronous,szepesvari1998asymptotic,kearnssingh1998finite,borkarmeyn2000ode,evendar2003learning,becksrikant2012error,quwierman2020finite,li2021sample,chen2021lyapunov}. Convergence theory for linear Q-learning has also been developed beyond the tabular setting. Early sufficient conditions for convergence with LFA were given by~\cite{melo2007convergence}. Subsequent work introduced convergent variants and stabilization mechanisms, including two-time-scale variants~\cite{carvalho2020new}, target networks~\cite{zhang2021breaking}, target networks with truncation~\cite{chen2023target}, and target networks with over-parameterization~\cite{che2024target}. Finite-sample nonlinear stochastic approximation has also been used to study reinforcement learning algorithms under Markovian noise~\cite{chen2022finite}. Recent work on projected Bellman equations and linear Q-learning has established existence, stability, and bounded-set convergence results under suitable conditions~\cite{meyn2024projected,liu2025linear}. These results address related stability questions through stochastic approximation, algorithmic modification, target network structure, or projected equation analysis.
Linear approximation and projected Bellman equations have been studied in approximate dynamic programming, projected value iteration, linear Q-learning, and regularized Q-learning~\cite{limlee2025understanding,limlee2024regularized,goyalgrandclement2023firstorder}. These works clarify projection structure, approximation error, algorithmic differences, and the role of regularization. The approach here is complementary: it extracts the switched matrix family of the deterministic linear Q-learning recursion and uses its JSR as the stability certificate.

\section{Preliminaries}\label{sec:preliminaries}

\subsection{Notation}
The set of real numbers is denoted by \(\R\); \(\R^m\) is the \(m\)-dimensional Euclidean space; and \(\R^{m\times r}\) is the set of all \(m\times r\) real matrices. For a matrix \(A\), \(A^\top\) denotes its transpose. The identity matrix is denoted by \(I\). For vectors, \(e_i\) is the \(i\)th standard basis vector, with dimension clear from context, and \(\otimes\) denotes the Kronecker product. For a finite set \(\calS\), \(|\calS|\) denotes its cardinality. We write $\Delta_m:=\{q\in\R^m:q_i\ge0,\ \sum_{i=1}^m q_i=1\}$ for the probability simplex in \(\R^m\). For a finite matrix family \(\calH=\{\bA_1,\ldots,\bA_N\}\), $\co(\calH)
:=
\left\{
\sum_{i=1}^N \lambda_i \bA_i:
\lambda_i\ge0,\ \sum_{i=1}^N\lambda_i=1
\right\}$ denotes its convex hull.

\subsection{Switching Systems}\label{subsec:switching-systems}
A discrete-time switching linear system (SLS) is a dynamical system whose state evolves according to one matrix selected from a prescribed family at each time~\cite{liberzon2003switching,lin2009stability,jungers2009joint}.  For a matrix family \(\calH=\{\bA_1,\ldots,\bA_M\}\subset\R^{m\times m}\), the arbitrary-switching system is
\begin{equation*}
  x_{k+1}=\bA_{\sigma_k}x_k,
  \qquad
  \sigma_k\in\{1,\ldots,M\},
  \qquad k\in\{0,1,\ldots\}.
\end{equation*}
The switching signal \(\{\sigma_k\}_{k\ge0}\) may be deterministic, state dependent, or generated by an external process. The SLS is uniformly exponentially stable under arbitrary switching if there exist constants \(C\ge1\) and \(\eta\in(0,1)\) such that
\[
  \|\bA_{\sigma_{k-1}}\cdots \bA_{\sigma_0}x\|_2
  \le
  C\eta^k\|x\|_2
\]
for every horizon \(k\ge 0\), every initial state \(x\in \R^m\), and every switching sequence. A common Lyapunov function for \(\calH\) is a positive definite function that decreases along every mode in the family. In the analysis below, the Bellman maximum in linear Q-learning induces finite-difference stochastic-policy switching, and the relevant Lyapunov function is built directly from products of the induced mode matrices.

\subsection{Joint Spectral Radius}\label{subsec:jsr}
For a bounded set of matrices \(\calH\subset\R^{m\times m}\), its joint spectral radius (JSR)~\cite{rota1960note,tsitsiklis1997lyapunov,jungers2009joint} is
\[
\jsr(\calH)
:=
\lim_{k\to\infty}
\sup_{\bA_1,\ldots,\bA_k\in\calH}
\|\bA_k\cdots \bA_1\|^{1/k},
\]
where the value is independent of the chosen submultiplicative norm. When \(\calH\) is finite, the supremum for each fixed product length is a maximum over products generated by matrices in \(\calH\). The JSR is the exact worst-case exponential growth rate of the switched products generated by \(\calH\). In this paper, a JSR less than one is used to construct a common piecewise quadratic Lyapunov certificate for arbitrary-switching stability and for contraction of the Bellman-generated nonlinear recursion.

The following piecewise quadratic Lyapunov construction is the finite-family common Lyapunov lemma introduced in~\cite{lee2026lyapunovcertified,hushenzhang2010generating}. It is stated here because it is the Lyapunov certificate used throughout the deterministic, stochastic, and regularized analyses.
\begin{lemma}\label{lem:common-lyapunov-construction}
Let
\[
  \calH=\{\bA_1,\bA_2,\ldots,\bA_M\}\subset\R^{m\times m},
  \qquad
  \rho:=\jsr(\calH),
\]
and fix \(\varepsilon>0\) such that \(\beta_\varepsilon:=\rho+\varepsilon\in(0,1)\). For a sequence of modes \(\sigma=(\sigma_1,\ldots,\sigma_k)\in\{1,\ldots,M\}^k\), write
\[
  \bA_\sigma:=\bA_{\sigma_k}\cdots \bA_{\sigma_1},
\]
with the convention that, for \(k=0\), the empty word gives \(\bA_\sigma=I\).  For each integer \(t\ge0\), define
\begin{equation*}
  V_\varepsilon^t(x)
  :=
  \sum_{k=0}^{t}
  \beta_\varepsilon^{-2k}
  \max_{\sigma\in\{1,\ldots,M\}^k}
  \|\bA_\sigma x\|_2^2,
  \qquad x\in\R^m .
\end{equation*}
Then the following statements hold.
\begin{enumerate}[(i)]
\item For every \(t\ge0\),
\[
  V_\varepsilon^{t+1}(x)
  \ge
  \|x\|_2^2
  +
  \beta_\varepsilon^{-2}
  \max_{i\in\{1,\ldots,M\}}V_\varepsilon^t(\bA_i x),
  \qquad \forall x\in\R^m .
\]
\item For every \(t\ge0\), the function \(V_\varepsilon^t\) is absolutely homogeneous of degree two and is monotone in \(t\):
\[
  V_\varepsilon^t(\lambda x)=|\lambda|^2V_\varepsilon^t(x),
  \qquad
  V_\varepsilon^t(x)\le V_\varepsilon^{t+1}(x).
\]
\item There exists \(C_\varepsilon>0\) such that
\[
  \|x\|_2^2
  \le
  V_\varepsilon^t(x)
  \le
  C_\varepsilon\|x\|_2^2,
  \qquad
  \forall x\in\R^m,
  \quad
  \forall t\ge0 .
\]
\item The pointwise limit
\[
  V_\varepsilon^\infty(x)
  :=
  \lim_{t\to\infty}V_\varepsilon^t(x)
\]
exists and satisfies
\[
  \|x\|_2^2
  \le
  V_\varepsilon^\infty(x)
  \le
  C_\varepsilon\|x\|_2^2,
  \qquad
  \forall x\in\R^m .
\]
\item The function \(p_\varepsilon(x):=\sqrt{V_\varepsilon^\infty(x)}\) is a norm on \(\R^m\).
\item For every \(i\in\{1,\ldots,M\}\),
\begin{equation*}
  V_\varepsilon^\infty(\bA_i x)
  \le
  \beta_\varepsilon^2
  \left(V_\varepsilon^\infty(x)-\|x\|_2^2\right)
  \le
  \beta_\varepsilon^2V_\varepsilon^\infty(x),
  \qquad \forall x\in\R^m .
\end{equation*}
Equivalently, \(p_\varepsilon(\bA_i x)\le\beta_\varepsilon p_\varepsilon(x)\) for every \(x\in\R^m\).
\end{enumerate}
\end{lemma}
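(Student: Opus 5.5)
The plan is to verify the six items in order, using only the definition of the JSR and elementary properties of maxima and norms. The one nontrivial analytic input is a uniform bound on products. By definition of \(\jsr(\calH)=\rho<\beta_\varepsilon\), there is \(k_0\) such that
\[
\max_{|\sigma|=k}\|\bA_\sigma\|_2\le(\rho+\varepsilon/2)^k \quad\text{for } k\ge k_0 .
\]
The finitely many lengths \(k<k_0\) can be absorbed into a constant, which gives \(\max_{|\sigma|=k}\|\bA_\sigma\|_2\le K(\rho+\varepsilon/2)^k\) for all \(k\ge0\).

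\emph{Item (i).} Split the sum defining \(V_\varepsilon^{t+1}(x)\) into the \(k=0\) term, which equals \(\|x\|_2^2\), and the terms \(k=1,\ldots,t+1\). For each \(k\ge1\), a word of length \(k\) ending with the first-applied letter \(i\) gives
\[
\max_{|\sigma|=k}\|\bA_\sigma x\|_2^2=\max_i\max_{|\tau|=k-1}\|\bA_\tau\bA_i x\|_2^2 .
\]
Reindexing \(k-1\to k\) produces the factor \(\beta_\varepsilon^{-2}\). Then \(\sum_k\max_i(\cdot)\ge\max_i\sum_k(\cdot)\) yields (i). The inequality direction is exactly the one we need, because a sum of maxima dominates the maximum of sums.

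\emph{Item (ii).} Homogeneity is immediate since \(\|\bA_\sigma\lambda x\|_2^2=|\lambda|^2\|\bA_\sigma x\|_2^2\). Monotonicity in \(t\) holds because every term in the sum is nonnegative.

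\emph{Item (iii).} The lower bound comes from the \(k=0\) term. For the upper bound, the uniform product estimate gives
\[
V_\varepsilon^t(x)\le K^2\sum_{k\ge0}\Big(\tfrac{\rho+\varepsilon/2}{\rho+\varepsilon}\Big)^{2k}\|x\|_2^2=:C_\varepsilon\|x\|_2^2 ,
\]
a convergent geometric series. This step is the main obstacle: the margin \(\varepsilon/2\) must be chosen so that the ratio is strictly below one. When \(\rho=0\) the same argument goes through, since \(\beta_\varepsilon>0\) by assumption.

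\emph{Item (iv).} The sequence \(V_\varepsilon^t(x)\) is monotone by (ii) and bounded by (iii), so the limit exists. The two-sided bounds pass to the limit.

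\emph{Item (v).} Positive definiteness and homogeneity follow from (iv) and (ii). For the triangle inequality, note that
\[
p_\varepsilon(x)=\Big(\sum_{k\ge0}\beta_\varepsilon^{-2k}\,m_k(x)^2\Big)^{1/2},\qquad m_k(x):=\max_{|\sigma|=k}\|\bA_\sigma x\|_2 .
\]
Each \(m_k\) is a seminorm, being a maximum of seminorms, so \(m_k(x+y)\le m_k(x)+m_k(y)\). Then Minkowski's inequality in weighted \(\ell^2\) gives \(p_\varepsilon(x+y)\le p_\varepsilon(x)+p_\varepsilon(y)\).

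\emph{Item (vi).} From (i), for a fixed \(i\) we have
\[
V_\varepsilon^t(\bA_i x)\le\beta_\varepsilon^2\big(V_\varepsilon^{t+1}(x)-\|x\|_2^2\big).
\]
Letting \(t\to\infty\) on both sides gives the first inequality. The second inequality drops the nonnegative term \(\beta_\varepsilon^2\|x\|_2^2\). Taking square roots gives the norm form \(p_\varepsilon(\bA_i x)\le\beta_\varepsilon p_\varepsilon(x)\).
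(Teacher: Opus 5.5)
Your proposal is correct and follows essentially the same route as the paper's appendix proof. Both use word-splitting at the first-applied matrix for (i), a uniform product bound $\|\bA_\sigma\|_2\le C\eta^{k}$ with $\rho<\eta<\beta_\varepsilon$ giving a geometric series for (iii), monotone convergence for (iv), Minkowski's inequality over the seminorms $\beta_\varepsilon^{-k}\max_\sigma\|\bA_\sigma x\|_2$ for (v), and $t\to\infty$ in the rearranged form of (i) for (vi); the only differences are cosmetic, namely your explicit choice $\eta=\rho+\varepsilon/2$ and applying Minkowski to the infinite weighted sequence rather than to finite truncations followed by a limit.
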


\subsection{Discounted Markov Decision Processes and Linear Function Approximation}
In this paper, we consider a finite discounted Markov decision process (MDP)~\cite{puterman2014markov} with state-space \(\calS=\{1,\ldots,|\calS|\}\), action-space \(\calA=\{1,\ldots,|\calA|\}\), transition probability \(P(s'\mid s,a)\), real-valued one-step reward \(r(s,a,s')\), expected reward
\[
R(s,a):=\sum_{s'\in\calS}P(s'\mid s,a)r(s,a,s'),
\]
and discount factor \(\gamma\in(0,1)\). State-action functions are viewed as vectors in \(\R^{|\calS||\calA|}\) using the action-block ordering $(1,1),(2,1),\ldots,(|\calS|,1),(1,2),(2,2),\ldots,(|\calS|,|\calA|)$.
All matrices and vectors indexed by state-action pairs use this ordering. Let us define the matrix
\[
P:=
\begin{bmatrix}
P_1\\ \vdots\\ P_{|\calA|}
\end{bmatrix}
\in\R^{|\calS||\calA|\times |\calS|},
\qquad
R:=
\begin{bmatrix}
R(\cdot,1)\\ \vdots\\ R(\cdot,|\calA|)
\end{bmatrix}
\in\R^{|\calS||\calA|},
\]
where \(P_a=P(\cdot\mid\cdot,a)\in\R^{|\calS|\times|\calS|}\).
For the finite MDP above, let us define
\[
  R_{\max}:=\max_{(s,a,s')\in\calS\times\calA\times\calS}|r(s,a,s')|.
\]
Because the state and action spaces are finite and rewards are real-valued, \(R_{\max}<\infty\).

Let \(\Theta\) denote the set of deterministic stationary policies \(\pi:\calS\to\calA\). For any stochastic policy \(\mu:\calS\to\Delta_{|\calA|}\), we define
\[
\bPi^\mu
:=
\begin{bmatrix}
\mu(1)^\top\otimes e_1^\top\\
\mu(2)^\top\otimes e_2^\top\\
\vdots\\
\mu(|\calS|)^\top\otimes e_{|\calS|}^\top
\end{bmatrix}
\in\R^{|\calS|\times |\calS||\calA|}.
\]
For a deterministic policy \(\pi\in\Theta\), we use the same notation \(\bPi^\pi\) by identifying \(\pi(s)\) with its one-hot encoding. For \(Q\in\R^{|\calS||\calA|}\), define
\[
  V_Q(s):=\max_{a\in\calA}Q(s,a),
  \qquad
  V_Q:=(V_Q(1),\ldots,V_Q(|\calS|))^\top.
\]
The Bellman optimality operator is
\[
  F(Q):=R+\gamma PV_Q.
\]
Let \(\Phi\in\R^{|\calS||\calA|\times m}\) be a feature matrix. In this paper, we assume that the feature matrix \(\Phi\in\R^{|\calS||\calA|\times m}\) has full column rank, which is standard in the literature.
Its row corresponding to \((s,a)\) is denoted by \(\phi(s,a)^\top\), where \(\phi(s,a)\in\R^m\). The linear function approximation (LFA) of the Q-function is
\[
  Q_\theta:=\Phi\theta.
\]
For an LFA parameter \(\theta\), define the corresponding value function with the greedy policy
\[
  V_\theta(s):=\max_{a\in\calA}\phi(s,a)^\top\theta,
  \qquad
  V_\theta:=(V_\theta(1),\ldots,V_\theta(|\calS|))^\top .
\]

We use \(d\) to denote a state-action sampling distribution on \(\calS\times\calA\). In the independent and identically distributed (i.i.d.) observation model, \(d\) is the sampling distribution of \((s_k,a_k)\).
Throughout the paper, we assume that the sampling distribution satisfies \(d(s,a)>0\) for every \((s,a)\in {\mathcal S}\times {\mathcal A}\).
Under the assumption, \(\Phi^\top D\Phi\succ0\). Since the state-action space is finite, the feature radius
\[
  \phi_{\max}:=\max_{(s,a)\in\calS\times\calA}\|\phi(s,a)\|_2
\]
is finite as well.

\subsection{Projected Bellman Equation and Projected Q-Value Iteration}
For a sampling distribution \(d\) on state-action pairs, define the diagonal weighting matrix
\[
  D:=\diag(d(s,a))_{(s,a)\in\calS\times\calA}
  \in \R^{|\calS||\calA|\times |\calS||\calA|}.
\]
Whenever \(\Phi^\top D\Phi\) is nonsingular, the \(D\)-orthogonal projection onto \(\operatorname{range}(\Phi)\) is
\[
  \bPi_D
  :=
  \Phi(\Phi^\top D\Phi)^{-1}\Phi^\top D .
\]
The projected Bellman equation~\cite{limlee2025understanding} is
\begin{equation}\label{eq:pbe-q-space-general}
  \Phi\theta
  =
  \bPi_D F(\Phi\theta)
  =
  \bPi_D\left(R+\gamma PV_\theta\right).
\end{equation}
A projected Bellman fixed point is a parameter \(\theta^\star\) satisfying
\begin{equation*}
  \Phi\theta^\star
  =
  \bPi_D F(\Phi\theta^\star)
  =
  \bPi_D\left(R+\gamma PV_{\theta^\star}\right).
\end{equation*}
In general, such a fixed point need not exist, and when it exists it need not be unique~\cite{limlee2025understanding}.
The projected Q-value iteration (PQVI) associated with \cref{eq:pbe-q-space-general} is
\[
  \Phi\theta_{k+1}^{\mathrm{PQVI}}
  =
  \bPi_D\left(R+\gamma PV_{\theta_k^{\mathrm{PQVI}}}\right),
  \qquad k\in\{0,1,\ldots\},
\]
or, equivalently, in parameter space,
\[
  \theta_{k+1}^{\mathrm{PQVI}}
  =
  (\Phi^\top D\Phi)^{-1}\Phi^\top D
  \left(R+\gamma PV_{\theta_k^{\mathrm{PQVI}}}\right),
  \qquad k\in\{0,1,\ldots\}.
\]

\subsection{Linear Q-Learning and Deterministic Linear Q-Learning}
Given a transition sample \((s_k,a_k,r_{k+1},s'_k)\), where \(r_{k+1}=r(s_k,a_k,s'_k)\), the scalar step-size linear Q-learning update is
\[
\theta_{k+1}
=
\theta_k+
\alpha\phi(s_k,a_k)
\left(
 r_{k+1}
 +
 \gamma\max_{u\in\calA}\phi(s'_k,u)^\top\theta_k
 -
 \phi(s_k,a_k)^\top\theta_k
\right),\qquad k\in\{0,1,\ldots\},
\]
where $\alpha \in (0,1)$ is the step-size, and the initial parameter \(\theta_0\) is assumed to be deterministic throughout this paper.
The parameter \(\theta_k\) determines the approximate action-value function \(Q_{\theta_k}=\Phi\theta_k\). The term inside parentheses is the temporal-difference error formed from the greedy next-action value under the same parameter. The rest of the paper studies the deterministic averaged version of this update and then returns to sampled updates under the i.i.d.\ observation model.

Taking expectation with respect to the sampling distribution and the transition kernel gives the deterministic linear Q-learning recursion
\begin{equation}\label{eq:deterministic-lfa-recursion}
  \theta_{k+1}
  =
  \theta_k+
  \alpha\Phi^\top D
  \left(R+\gamma PV_{\theta_k}-\Phi\theta_k\right),\qquad k\in\{0,1,\ldots\}.
\end{equation}
Let us define the deterministic linear Q-learning map
\begin{equation}\label{eq:Talpha-def}
  \bT_\alpha(\theta)
  :=
  \theta+
  \alpha g(\theta),
\end{equation}
where
\begin{equation}\label{eq:g-theta-def}
  g(\theta)
  :=
  \Phi^\top D\left(R+\gamma PV_\theta-\Phi\theta\right)
\end{equation}
will be called the projected Bellman residual. With this notation, \cref{eq:deterministic-lfa-recursion} is written compactly as
\[
  \theta_{k+1}=\bT_\alpha(\theta_k),
  \qquad k\in\{0,1,\ldots\}.
\]

The following lemma states the exact relation between the projected Bellman equation in \cref{eq:pbe-q-space-general} and the projected Bellman residual equation \(g(\theta)=0\).
\begin{lemma}\label{lem:pbe-normal-equivalence}
Suppose that \(\Phi^\top D\Phi\) is nonsingular. Then the solution set of \(g(\theta)=0\) is identical to the solution set of the projected Bellman equation~\cref{eq:pbe-q-space-general}.
\end{lemma}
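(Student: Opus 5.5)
The plan is to prove the two inclusions directly, using only the definition of \(\bPi_D\) and the nonsingularity of \(\Phi^\top D\Phi\). Write \(y(\theta):=R+\gamma PV_\theta\in\R^{|\calS||\calA|}\). Then \(g(\theta)=\Phi^\top D(y(\theta)-\Phi\theta)\), and the projected Bellman equation \cref{eq:pbe-q-space-general} reads \(\Phi\theta=\Phi(\Phi^\top D\Phi)^{-1}\Phi^\top D\,y(\theta)\). The argument is pointwise in \(\theta\), so the nonlinearity of \(\theta\mapsto V_\theta\) causes no difficulty: we fix \(\theta\) and treat \(y(\theta)\) as a given vector.

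\emph{(PBE \(\Rightarrow\) \(g=0\)).} Suppose \(\theta\) solves \cref{eq:pbe-q-space-general}. Multiply both sides on the left by \(\Phi^\top D\). The right side becomes
\[
\Phi^\top D\Phi(\Phi^\top D\Phi)^{-1}\Phi^\top D\,y(\theta)=\Phi^\top D\,y(\theta).
\]
Hence \(\Phi^\top D\Phi\theta=\Phi^\top D\,y(\theta)\), which is exactly \(g(\theta)=0\).

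\emph{(\(g=0\) \(\Rightarrow\) PBE).} Conversely, suppose \(g(\theta)=0\), that is, \(\Phi^\top D\Phi\theta=\Phi^\top D\,y(\theta)\). Since \(\Phi^\top D\Phi\) is invertible, this gives \(\theta=(\Phi^\top D\Phi)^{-1}\Phi^\top D\,y(\theta)\). Multiplying on the left by \(\Phi\) yields \(\Phi\theta=\bPi_D y(\theta)=\bPi_D F(\Phi\theta)\), which is \cref{eq:pbe-q-space-general}.

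I expect no real obstacle. The only point to watch is that \(F(\Phi\theta)=R+\gamma PV_{\Phi\theta}\) coincides with \(R+\gamma PV_\theta\), because \(V_Q\) with \(Q=\Phi\theta\) equals \(V_\theta\) entrywise by definition. Full column rank of \(\Phi\) is not needed for the set equality; it would only be used to recover \(\theta\) from \(\Phi\theta\), which the argument above avoids.
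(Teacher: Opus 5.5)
Your proof is correct and follows the paper's own argument: left-multiply the projected Bellman equation by \(\Phi^\top D\) for one inclusion, and for the other invert \(\Phi^\top D\Phi\) and then multiply by \(\Phi\). Your side remarks are also accurate: \(V_{\Phi\theta}=V_\theta\), and full column rank of \(\Phi\) is never used beyond the stated nonsingularity of \(\Phi^\top D\Phi\).
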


\begin{proof}
Since \(\Phi^\top D\Phi\) is nonsingular by assumption, the projected Bellman equation \cref{eq:pbe-q-space-general} can be written as
\begin{align}
  \Phi\theta
  =
  \Phi(\Phi^\top D\Phi)^{-1}\Phi^\top D
  \left(R+\gamma PV_\theta\right).\label{eq:1}
\end{align}
We prove the two implications separately.

First, suppose that \(\theta\) satisfies the projected Bellman equation. Multiplying~\cref{eq:1}
from the left by \(\Phi^\top D\) gives
\[
  \Phi^\top D\Phi\theta
  =
  \Phi^\top D\Phi(\Phi^\top D\Phi)^{-1}\Phi^\top D
  \left(R+\gamma PV_\theta\right)
  =
  \Phi^\top D\left(R+\gamma PV_\theta\right).
\]
Therefore
\[
  g(\theta)
  =
  \Phi^\top D\left(R+\gamma PV_\theta-\Phi\theta\right)
  =0.
\]
Thus every solution of the projected Bellman equation is a zero of the normal-equation residual.

Conversely, suppose that \(g(\theta)=0\).  Then
\[
  0=
  \Phi^\top D\left(R+\gamma PV_\theta-\Phi\theta\right)
  =
  \Phi^\top D\left(R+\gamma PV_\theta\right)-\Phi^\top D\Phi\theta,
\]
and hence
\[
  \Phi^\top D\Phi\theta
  =
  \Phi^\top D\left(R+\gamma PV_\theta\right).
\]
Since \(\Phi^\top D\Phi\) is nonsingular,
\[
  \theta
  =
  (\Phi^\top D\Phi)^{-1}\Phi^\top D
  \left(R+\gamma PV_\theta\right).
\]
Multiplying this identity by \(\Phi\) yields
\[
  \Phi\theta
  =
  \Phi(\Phi^\top D\Phi)^{-1}\Phi^\top D
  \left(R+\gamma PV_\theta\right)
  =
  \bPi_D\left(R+\gamma PV_\theta\right).
\]
This is exactly \cref{eq:pbe-q-space-general}. Therefore, every zero of \(g\) solves the projected Bellman equation.
\end{proof}

By~\cref{lem:pbe-normal-equivalence}, a projected Bellman fixed point \(\theta^\star\) can equivalently be characterized by the equation
\begin{equation}\label{eq:pbe}
\Phi^\top D\left(R+\gamma PV_{\theta^\star}-\Phi\theta^\star\right) = g(\theta^\star)=0.
\end{equation}
Therefore, a projected Bellman fixed point is precisely a zero of the projected Bellman residual, \(g(\theta^\star)=0\). Equivalently, because \(\bT_\alpha(\theta)=\theta+\alpha g(\theta)\) and \(\alpha>0\), the same point satisfies
\[
  \bT_\alpha(\theta^\star)=\theta^\star .
\]

It is important to distinguish the deterministic linear Q-learning recursion in~\cref{eq:deterministic-lfa-recursion} from projected Q-VI\@. The deterministic linear Q-learning recursion can be written as a residual step toward the projected Q-VI update:
\[
  \theta_{k+1}
  =
  \theta_k+
  \alpha\Phi^\top D\Phi
  \left(\theta_{k+1}^{\mathrm{PQVI}}-\theta_k\right),
  \qquad k\in\{0,1,\ldots\}.
\]
The two iterations coincide only in the special scalar step-size case
\[
  \alpha\Phi^\top D\Phi=I.
\]
Equality of the fixed points of the deterministic linear Q-learning and the projected Q-VI does not imply equality of their convergence behavior. In general, convergence of one iteration does not automatically imply convergence of the other. A closely related discussion appears in~\cite{limlee2025understanding}, where examples show that convergence of either one of the two iterations need not guarantee convergence of the other. The following two examples make this separation explicit.

\begin{example}\label{ex:elq-converges-pqvi-diverges}
Consider a two-state MDP with one action, zero reward, discount factor $\gamma=0.9$, $\alpha=0.1$, and transition matrix
\[
  P=
  \begin{bmatrix}
  0 & 1\\
  0 & 1
  \end{bmatrix}.
\]
Let the sampling distribution and feature matrix be
\[
  d(1,1)=0.99,
  \qquad
  d(2,1)=0.01,
  \qquad
  \Phi=
  \begin{bmatrix}
  1\\ -10
  \end{bmatrix}.
\]
Since there is only one action, the maximization is trivial and \(V_\theta=\Phi\theta\).  Hence
\[
  \Phi^\top D\Phi
  =0.99\cdot 1^2+0.01\cdot (-10)^2
  =1.99,
\]
and, since \(P\Phi=(-10,-10)^\top\),
\[
  \Phi^\top DP\Phi
  =0.99\cdot 1\cdot (-10)+0.01\cdot (-10)\cdot (-10)
  =-8.9.
\]
The projected Q-VI is the scalar recursion
\[
  \theta_{k+1}^{\mathrm{PQVI}}
  =\gamma\frac{\Phi^\top DP\Phi}{\Phi^\top D\Phi}\theta_k
  =-\frac{8.01}{1.99}\theta_k,
  \qquad k\in\{0,1,\ldots\}.
\]
Since \(8.01/1.99>1\), projected Q-VI diverges for every nonzero initial condition.  On the other hand, the deterministic linear Q-learning recursion is
\[
\begin{aligned}
  \theta_{k+1}
  &=\left(1-\alpha\Phi^\top D\Phi+\alpha\gamma\Phi^\top DP\Phi\right)\theta_k \\
  &=\left(1-0.1\cdot1.99+0.1\cdot0.9\cdot(-8.9)\right)\theta_k
  =0,
  \qquad k\in\{0,1,\ldots\}.
\end{aligned}
\]
Thus deterministic linear Q-learning reaches the projected Bellman fixed point \(\theta^\star=0\) in one step, while projected Q-VI diverges.
\end{example}

\begin{example}\label{ex:pqvi-converges-elq-diverges}
Consider a one-state MDP with one action, zero reward, deterministic self-transition, $\gamma=0.9$, $\alpha=0.5$, $d(1,1)=1$, and the one-dimensional feature representation $\Phi=10$. Again, the maximization is trivial. In this case
\[
  \Phi^\top D\Phi=100,
  \qquad
  \Phi^\top DP\Phi=100.
\]
Therefore projected Q-VI satisfies
\[
  \theta_{k+1}^{\mathrm{PQVI}}
  =\gamma\frac{\Phi^\top DP\Phi}{\Phi^\top D\Phi}\theta_k
  =0.9\theta_k,
  \qquad k\in\{0,1,\ldots\},
\]
which converges to \(0\).  However, deterministic linear Q-learning satisfies
\[
\begin{aligned}
  \theta_{k+1}
  &=\left(1-\alpha\Phi^\top D\Phi+\alpha\gamma\Phi^\top DP\Phi\right)\theta_k \\
  &=\left(1-0.5\cdot100+0.5\cdot0.9\cdot100\right)\theta_k
  =-4\theta_k,
  \qquad k\in\{0,1,\ldots\}.
\end{aligned}
\]
Thus deterministic linear Q-learning diverges for every nonzero initial condition, even though projected Q-VI converges to the same fixed point.
\end{example}

\section{Analysis of Deterministic Linear Q-Learning}

\subsection{Switching System Representation}

The next lemma is the key step that expresses Q-learning recursions as switched-system dynamics, and it is motivated by~\cite{goyalgrandclement2023firstorder}.
\begin{lemma}\label{lem:lfa-max-linearization}
For every \(\theta,\bar\theta\in\R^m\), there exists a stochastic policy \(\mu_{\theta,\bar\theta}:\calS\to\Delta_{|\calA|}\) such that
\[
  V_\theta-V_{\bar\theta}
  =
  \bPi^{\mu_{\theta,\bar\theta}}\Phi(\theta-\bar\theta).
\]
Moreover, \(\mu_{\theta,\bar\theta}\) can be chosen as a measurable function of \((\theta,\bar\theta)\).
\end{lemma}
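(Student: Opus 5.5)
The plan is to work state by state, since the rows of both sides decouple. Fix \(s\in\calS\) and set \(x_a:=\phi(s,a)^\top\theta\) and \(y_a:=\phi(s,a)^\top\bar\theta\) for \(a\in\calA\). By the definition of \(\bPi^\mu\) and the action-block ordering, the \(s\)th row of \(\bPi^{\mu}\Phi(\theta-\bar\theta)\) equals \(\sum_{a}\mu(s)_a\,(x_a-y_a)\). The \(s\)th row of the left-hand side is \(\max_a x_a-\max_a y_a\). It therefore suffices to find, for each \(s\), a probability vector \(q\in\Delta_{|\calA|}\) with \(\sum_a q_a(x_a-y_a)=\max_a x_a-\max_a y_a\). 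The entries of \(q\) must then depend measurably on \((\theta,\bar\theta)\).

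For existence I would use a sandwich bound. Let \(a^\star\in\argmax_a x_a\) and \(b^\star\in\argmax_a y_a\), each chosen as the smallest index attaining the maximum. Then
\[
x_{b^\star}-y_{b^\star}\le \max_a x_a-\max_a y_a \le x_{a^\star}-y_{a^\star},
\]
since \(\max_a y_a\ge y_{a^\star}\) and \(\max_a x_a\ge x_{b^\star}\). The target value \(\Delta:=\max_a x_a-\max_a y_a\) thus lies between the two numbers \(u:=x_{b^\star}-y_{b^\star}\) and \(w:=x_{a^\star}-y_{a^\star}\). If \(w>u\), take \(\lambda:=(\Delta-u)/(w-u)\in[0,1]\) and \(q:=\lambda e_{a^\star}+(1-\lambda)e_{b^\star}\). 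If \(w=u\), take \(q:=e_{a^\star}\). In either case \(\sum_a q_a(x_a-y_a)=\lambda w+(1-\lambda)u=\Delta\). This also covers \(a^\star=b^\star\), where \(u=w=\Delta\). Setting \(\mu_{\theta,\bar\theta}(s):=q\) and stacking over \(s\) gives the identity in the statement.

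Measurability is the only delicate step, and the explicit construction above is what handles it. The map \((\theta,\bar\theta)\mapsto a^\star\) with the smallest-index tie-breaking is Borel. Each event \(\{a^\star=a\}\) is a finite intersection of sets of the form \(\{x_a\ge x_{a'}\}\) and \(\{x_a>x_{a'}\}\), and these are closed or open sets in \((\theta,\bar\theta)\). The same holds for \(b^\star\). On each of the finitely many Borel cells determined by \((a^\star,b^\star)\) and by whether \(w>u\) or \(w=u\), the weight \(\lambda\) is a ratio of continuous functions with a nonvanishing denominator, or else a constant. So \(q\) is Borel measurable on every cell, and hence on all of \(\R^m\times\R^m\). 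Beyond verifying this piecewise structure, I do not expect any real obstacle.
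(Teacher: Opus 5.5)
Your proof is correct and follows essentially the same route as the paper: a state-by-state sandwich of $V_\theta(s)-V_{\bar\theta}(s)$, then a two-action convex combination with an explicit weight, then measurability from finitely many Borel cells with continuous formulas on each. The only difference is that you bracket the target between the values of $\Phi(\theta-\bar\theta)$ at the greedy actions of $\theta$ and $\bar\theta$, whereas the paper brackets it between the minimum and maximum of $\Phi(\theta-\bar\theta)(s,\cdot)$ over all actions; your bracket is tighter (so $\mu_{\theta,\bar\theta}$ mixes only the two greedy actions) and your cell-by-cell measurability argument is more explicit than the paper's, but neither changes the substance.
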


\begin{proof}
Fix \(\theta,\bar\theta\in\R^m\) and a state \(s\in\calS\).  Let
\[
  e(s,a):=(\Phi(\theta-\bar\theta))(s,a).
\]
We show that
\[
  y_s:=V_\theta(s)-V_{\bar\theta}(s)
\]
lies in the interval between \(\min_a e(s,a)\) and \(\max_a e(s,a)\).  Since
\[
V_\theta(s)
=
\max_a\{(\Phi\bar\theta)(s,a)+e(s,a)\},
\]
we have
\[
V_\theta(s)
\le
V_{\bar\theta}(s)+\max_a e(s,a),
\]
and hence \(y_s\le\max_a e(s,a)\).  If \(a_\star\in\argmax_a(\Phi\bar\theta)(s,a)\), then
\[
V_\theta(s)
\ge
(\Phi\bar\theta)(s,a_\star)+e(s,a_\star)
\ge
V_{\bar\theta}(s)+\min_a e(s,a),
\]
so \(y_s\ge\min_a e(s,a)\).  Thus \(y_s\) belongs to the convex hull of the finite set \(\{e(s,a):a\in\calA\}\).

Choose lexicographic minimizers and maximizers
\[
  a_{\min}(s)\in\argmin_a e(s,a),
  \qquad
  a_{\max}(s)\in\argmax_a e(s,a).
\]
If the minimum and maximum are equal, assign probability one to \(a_{\min}(s)\).  Otherwise define
\[
  \lambda_s
  :=
  \frac{y_s-e(s,a_{\min}(s))}
  {e(s,a_{\max}(s))-e(s,a_{\min}(s))}
  \in[0,1]
\]
and set
\[
  \mu_{\theta,\bar\theta}(a_{\max}(s)\mid s)=\lambda_s,
  \qquad
  \mu_{\theta,\bar\theta}(a_{\min}(s)\mid s)=1-\lambda_s,
\]
with all other action probabilities set to zero.  Then
\[
  y_s
  =
  \sum_{a\in\calA}\mu_{\theta,\bar\theta}(a\mid s)e(s,a).
\]
Stacking these identities over all states gives
\[
  V_\theta-V_{\bar\theta}
  =
  \bPi^{\mu_{\theta,\bar\theta}}\Phi(\theta-\bar\theta).
\]
The lexicographic rule makes the construction single-valued.  Because it uses finitely many comparisons of continuous functions and explicit continuous formulas away from equality regions, the selected policy is measurable.
\end{proof}

The above lemma converts a nonlinear maximization difference into a policy-indexed linear operator.
For a stochastic policy \(\mu\), let us define
\[
\bA_\mu
:=
I-
\alpha\Phi^\top D\Phi
+
\alpha\gamma\Phi^\top DP\bPi^\mu\Phi
\in\R^{m\times m}.
\]
The next proposition shows that the deterministic linear Q-learning update inherits identical linear switching system representation.
\begin{proposition}\label{prop:pairwise-direct-lfa}
For every \(\theta,\bar\theta\in\R^m\), there exists a stochastic policy \(\mu_{\theta,\bar\theta}\) such that
\[
  \bT_\alpha(\theta)-\bT_\alpha(\bar\theta)
  =
  \bA_{\mu_{\theta,\bar\theta}}(\theta-\bar\theta).
\]
In particular, if \(\theta^\star\) is a projected Bellman fixed point and \(x_k:=\theta_k-\theta^\star\), then the deterministic linear Q-learning recursion \(\theta_{k+1}=\bT_\alpha(\theta_k)\) satisfies
\begin{equation}\label{eq:det-direct-error}
  x_{k+1}=\bA_{\mu_k} x_k,  \qquad k\in\{0,1,\ldots\},
\end{equation}
where \(\mu_k\) is a stochastic policy depending measurably on \(\theta_k\) and \(\theta^\star\).
\end{proposition}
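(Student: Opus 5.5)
The plan is a direct algebraic computation built on \cref{lem:lfa-max-linearization}. Fix \(\theta,\bar\theta\in\R^m\). From the definitions \cref{eq:Talpha-def} and \cref{eq:g-theta-def}, the reward term \(R\) cancels in the difference, leaving
\[
  \bT_\alpha(\theta)-\bT_\alpha(\bar\theta)
  =
  (\theta-\bar\theta)
  -\alpha\Phi^\top D\Phi(\theta-\bar\theta)
  +\alpha\gamma\Phi^\top DP\left(V_\theta-V_{\bar\theta}\right).
\]
The only nonlinear piece is \(V_\theta-V_{\bar\theta}\). I will invoke \cref{lem:lfa-max-linearization} to obtain a measurable stochastic policy \(\mu_{\theta,\bar\theta}\) with \(V_\theta-V_{\bar\theta}=\bPi^{\mu_{\theta,\bar\theta}}\Phi(\theta-\bar\theta)\). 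Substituting and factoring out \(\theta-\bar\theta\) gives exactly \(\bA_{\mu_{\theta,\bar\theta}}(\theta-\bar\theta)\) by the definition of \(\bA_\mu\).

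For the second claim, let \(\theta^\star\) be a projected Bellman fixed point. By \cref{lem:pbe-normal-equivalence} and \cref{eq:pbe}, \(g(\theta^\star)=0\), so \(\bT_\alpha(\theta^\star)=\theta^\star\). Then
\[
x_{k+1}=\theta_{k+1}-\theta^\star=\bT_\alpha(\theta_k)-\bT_\alpha(\theta^\star).
\]
Applying the first part with \(\theta=\theta_k\) and \(\bar\theta=\theta^\star\) yields \cref{eq:det-direct-error} with \(\mu_k:=\mu_{\theta_k,\theta^\star}\).

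Measurability of \(\mu_k\) in \((\theta_k,\theta^\star)\) is inherited from the measurable selection in \cref{lem:lfa-max-linearization}, composed with the deterministic map \(\theta\mapsto\theta\). There is no real obstacle: all the substance sits in the max-linearization lemma, which is already established. The only care needed is the bookkeeping that the same policy is used at every state and that \(R\) cancels.
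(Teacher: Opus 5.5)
Your proposal is correct and follows the paper's proof essentially line for line: you expand \(\bT_\alpha(\theta)-\bT_\alpha(\bar\theta)\), substitute the max-linearization of \cref{lem:lfa-max-linearization}, factor out \(\theta-\bar\theta\) to get \(\bA_{\mu_{\theta,\bar\theta}}\), and then specialize to \(\bar\theta=\theta^\star\) using \(\bT_\alpha(\theta^\star)=\theta^\star\). Your explicit appeal to \cref{lem:pbe-normal-equivalence} to justify that fixed-point identity is a harmless extra detail that the paper states just before the proposition.
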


\begin{proof}
Using \cref{eq:Talpha-def},
\[
\begin{aligned}
\bT_\alpha(\theta)-\bT_\alpha(\bar\theta)
&=
\theta-\bar\theta
+\alpha\Phi^\top D
\left(\gamma P(V_\theta-V_{\bar\theta})-\Phi(\theta-\bar\theta)\right).
\end{aligned}
\]
By \cref{lem:lfa-max-linearization}, there exists a stochastic policy \(\mu_{\theta,\bar\theta}\) such that
\[
  V_\theta-V_{\bar\theta}
  =
  \bPi^{\mu_{\theta,\bar\theta}}\Phi(\theta-\bar\theta).
\]
Substitution gives
\[
\begin{aligned}
\bT_\alpha(\theta)-\bT_\alpha(\bar\theta)
&=
\left(
I-
\alpha\Phi^\top D\Phi
+
\alpha\gamma\Phi^\top DP\bPi^{\mu_{\theta,\bar\theta}}\Phi
\right)(\theta-\bar\theta)\\
&=
\bA_{\mu_{\theta,\bar\theta}}(\theta-\bar\theta).
\end{aligned}
\]
If \(\theta^\star\) is a fixed point and \(\bar\theta=\theta^\star\), then \(\bT_\alpha(\theta^\star)=\theta^\star\), and \cref{eq:det-direct-error} follows.
\end{proof}

An explicit trajectory-level example illustrating \cref{prop:pairwise-direct-lfa} is given in \cref{ex:det-qlearning-switching-trajectory}.
Therefore, the deterministic linear Q-learning recursion in~\cref{eq:deterministic-lfa-recursion} is exactly represented by the switched linear error recursion in~\cref{eq:det-direct-error}, relative to a projected Bellman fixed point. This allows the convergence of deterministic linear Q-learning to be analyzed through the stability of~\cref{eq:det-direct-error}.

The set of all possible such matrices corresponding to deterministic policies is defined as
\[
\mathcal A_\alpha
:=
\left\{
\bA_\pi
:=
I-
\alpha\Phi^\top D\Phi
+
\alpha\gamma\Phi^\top DP\bPi^\pi\Phi:
\pi\in\Theta
\right\}.
\]
The JSR of this finite family is \(\jsr(\mathcal A_\alpha)\).

The JSR \(\jsr(\mathcal A_\alpha)\) above is defined for the finite deterministic matrix family \(\mathcal A_\alpha\).  However, the switched system in \cref{eq:det-direct-error} can involve stochastic-policy modes \(\bA_\mu\), so a direct arbitrary-switching analysis would appear to require the JSR over all such stochastic-policy matrices.  To close this gap, \cref{app:direct-convexification} proves in \cref{lem:lfa-convex-hull} that every stochastic-policy mode lies in the convex hull of the finite deterministic modes, and that the JSR of this convex hull is equal to the finite-family JSR\@.  Thus the finite matrices in \(\mathcal A_\alpha\) are sufficient for certifying all Bellman-induced stochastic modes.

\subsection{Analysis of Deterministic Linear Q-Learning}

This section analyzes~\cref{eq:deterministic-lfa-recursion} using the JSR of the switching family \(\mathcal A_\alpha\). The first step is to build a Lyapunov norm directly from products of the finite switching family $\mathcal A_\alpha$.
To this end, let us assume
\[
  \jsr(\mathcal A_\alpha)<1.
\]
Fix any \(\varepsilon>0\) such that
\[
  \beta_\varepsilon
  :=
  \jsr(\mathcal A_\alpha)+\varepsilon
  <1.
\]
For each integer \(t\ge0\), let us define
\[
V_{\varepsilon}^t(x)
:=
\sum_{\ell=0}^{t}
\beta_\varepsilon^{-2\ell}
\max_{\pi_1,\ldots,\pi_\ell\in\Theta}
\left\|
\bA_{\pi_\ell}\cdots \bA_{\pi_1} x
\right\|_2^2,
\qquad x\in\R^m,
\]
where the \(\ell=0\) term is \(\|x\|_2^2\). Now, define
\begin{equation}\label{eq:lfa-Veps-infty}
  V_{\varepsilon}^\infty(x)
  :=
  \lim_{t\to\infty}V_{\varepsilon}^t(x).
\end{equation}
The piecewise quadratic construction below turns the JSR condition into a global contraction certificate for the nonlinear projected Bellman map. The following theorem states the deterministic convergence result.
\begin{theorem}\label{thm:deterministic-jsr-lfa}
Suppose that
\[
  \jsr(\mathcal A_\alpha)<1.
\]
Fix \(\varepsilon>0\) such that \(\beta_\varepsilon:=\jsr(\mathcal A_\alpha)+\varepsilon<1\).  Then \(V_{\varepsilon}^\infty\) in \cref{eq:lfa-Veps-infty} is well-defined, and there exists \(C_{\varepsilon}\ge1\) such that
\begin{equation}\label{eq:lfa-Veps-equivalence}
  \|x\|_2^2
  \le
  V_{\varepsilon}^\infty(x)
  \le
  C_{\varepsilon}\|x\|_2^2,
  \qquad
  \forall x\in\R^m.
\end{equation}
Moreover,
\[
  p_{\varepsilon}(x)
  :=
  \sqrt{V_{\varepsilon}^\infty(x)}
\]
is a norm, and for every stochastic policy \(\mu\),
\begin{equation}\label{eq:lfa-strong-drift}
V_{\varepsilon}^\infty(\bA_\mu x)
\le
\beta_\varepsilon^2
\left(
V_{\varepsilon}^\infty(x)-\|x\|_2^2
\right)
\le
\beta_\varepsilon^2V_{\varepsilon}^\infty(x).
\end{equation}
Consequently,
\begin{equation}\label{eq:lfa-p-contraction-mode}
  p_{\varepsilon}(\bA_\mu x)
  \le
  \beta_\varepsilon p_{\varepsilon}(x),
  \qquad
  \forall x\in\R^m.
\end{equation}
The deterministic map \(\bT_\alpha\) is a global contraction in \(p_{\varepsilon}\):
\begin{equation}\label{eq:lfa-T-contraction}
  p_{\varepsilon}(\bT_\alpha(\theta)-\bT_\alpha(\bar\theta))
  \le
  \beta_\varepsilon p_{\varepsilon}(\theta-\bar\theta),
  \qquad
  \forall \theta,\bar\theta\in\R^m.
\end{equation}
Hence there exists a unique projected Bellman fixed point \(\theta^\star\) satisfying \cref{eq:pbe}.  For the deterministic recursion \cref{eq:deterministic-lfa-recursion},
\begin{equation}\label{eq:lfa-det-V-convergence}
  V_{\varepsilon}^\infty(\theta_k-\theta^\star)
  \le
  \beta_\varepsilon^{2k}
  V_{\varepsilon}^\infty(\theta_0-\theta^\star),
\end{equation}
and therefore
\[
  \|\theta_k-\theta^\star\|_2
  \le
  \sqrt{C_{\varepsilon}}\,
  \beta_\varepsilon^k
  \|\theta_0-\theta^\star\|_2.
\]
In Q-function norm,
\[
  \|\Phi\theta_k-\Phi\theta^\star\|_2
  \le
  \|\Phi\|_2\sqrt{C_{\varepsilon}}\,
  \beta_\varepsilon^k
  \|\theta_0-\theta^\star\|_2.
\]
\end{theorem}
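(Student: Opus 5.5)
The plan is to apply \cref{lem:common-lyapunov-construction} to the finite family \(\mathcal A_\alpha=\{\bA_\pi:\pi\in\Theta\}\), with \(M=|\Theta|\). Items (iii)--(v) of that lemma give directly that \(V_\varepsilon^\infty\) is well defined, that the equivalence \cref{eq:lfa-Veps-equivalence} holds with some \(C_\varepsilon\), and that \(p_\varepsilon\) is a norm. We may take \(C_\varepsilon\ge1\), since \(V_\varepsilon^\infty(x)\ge\|x\|_2^2\). Item (vi) gives the drift inequality \cref{eq:lfa-strong-drift}, but only for deterministic modes \(\bA_\pi\).

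The main obstacle is extending this drift inequality to an arbitrary stochastic-policy mode \(\bA_\mu\). The first step is to note that \(\mu\mapsto\bPi^\mu\) is affine in the row-wise probabilities. The set of stochastic policies is a product of simplices whose extreme points are the deterministic policies. Hence \(\bA_\mu=\sum_{\pi}\lambda_\pi\bA_\pi\) is a convex combination, which is the content of \cref{lem:lfa-convex-hull}. Concretely, one can take \(\lambda_\pi=\prod_s\mu(\pi(s)\mid s)\).

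Next, I will show that \(V_\varepsilon^\infty\) is convex, being the square of the norm \(p_\varepsilon\). The same holds for \(x\mapsto V_\varepsilon^\infty(x)-\|x\|_2^2\), which is better handled through the series itself. For each fixed \(\ell\), the function \(x\mapsto\max_{\sigma}\|\bA_\sigma x\|_2\) is a seminorm, so its square is convex. I then need, for each \(\ell\ge1\),
\[
\max_{\sigma}\|\bA_\sigma\bA_\mu x\|_2
\le\sum_\pi\lambda_\pi\max_\sigma\|\bA_\sigma\bA_\pi x\|_2
\le\max_{\sigma'\text{ of length }\ell+1}\|\bA_{\sigma'}x\|_2 .
\]
Squaring, multiplying by \(\beta_\varepsilon^{-2\ell}\), and summing over \(\ell=0,\dots,t\) gives
\[
V_\varepsilon^t(\bA_\mu x)\le\beta_\varepsilon^2\bigl(V_\varepsilon^{t+1}(x)-\|x\|_2^2\bigr).
\]
Letting \(t\to\infty\) yields \cref{eq:lfa-strong-drift} for every stochastic \(\mu\). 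Taking square roots then gives \cref{eq:lfa-p-contraction-mode}. This avoids needing convexity of the difference \(V-\|\cdot\|^2\) directly, since the termwise bound already holds.

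For \cref{eq:lfa-T-contraction}, I will use \cref{prop:pairwise-direct-lfa}: \(\bT_\alpha(\theta)-\bT_\alpha(\bar\theta)=\bA_{\mu_{\theta,\bar\theta}}(\theta-\bar\theta)\), and then apply \cref{eq:lfa-p-contraction-mode}. Because \(p_\varepsilon\) is a norm equivalent to \(\|\cdot\|_2\), the space \((\R^m,p_\varepsilon)\) is complete. Banach's fixed point theorem then gives a unique fixed point \(\theta^\star\) of \(\bT_\alpha\). Since \(\alpha>0\), this is equivalent to \(g(\theta^\star)=0\), i.e.\ \cref{eq:pbe}, and by \cref{lem:pbe-normal-equivalence} to the projected Bellman equation.

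Iterating the contraction with \(\bar\theta=\theta^\star\) gives \cref{eq:lfa-det-V-convergence}. Combining that with \cref{eq:lfa-Veps-equivalence} gives \(\|\theta_k-\theta^\star\|_2^2\le C_\varepsilon\beta_\varepsilon^{2k}\|\theta_0-\theta^\star\|_2^2\). The Q-function bound follows from \(\|\Phi y\|_2\le\|\Phi\|_2\|y\|_2\).
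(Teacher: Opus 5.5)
Your proposal is correct and follows the paper's proof: apply \cref{lem:common-lyapunov-construction} to $\mathcal A_\alpha$, extend the drift inequality to stochastic modes $\bA_\mu$ via the convex decomposition of \cref{lem:lfa-convex-hull}, then conclude with \cref{prop:pairwise-direct-lfa} and Banach's fixed-point theorem. The one difference is where convexity is used. You prove the stochastic-mode drift termwise on the truncated sums $V_\varepsilon^t$ and then let $t\to\infty$, whereas the paper applies Jensen's inequality directly to the convex limit $V_\varepsilon^\infty$ together with the deterministic-mode bound; both routes are valid. Note that your termwise chain should be stated for every $\ell\ge0$ rather than $\ell\ge1$, where the $\ell=0$ case is just $\|\bA_\mu x\|_2\le\max_\pi\|\bA_\pi x\|_2$.
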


\begin{proof}
Apply \cref{lem:common-lyapunov-construction} to the finite family \(\mathcal A_\alpha\).  This proves the existence of \(V_{\varepsilon}^\infty\), the norm equivalence \cref{eq:lfa-Veps-equivalence}, and the deterministic-policy inequality.

For a stochastic policy \(\mu\), \cref{lem:lfa-convex-hull} gives
\[
  \bA_\mu=
  \sum_{\pi\in\Theta}c_\pi(\mu)\bA_\pi
\]
with convex weights.  The function \(V_{\varepsilon}^\infty\) is convex because it is a pointwise limit of nondecreasing convex functions \(V_{\varepsilon}^t\).  Hence Jensen's inequality gives
\[
\begin{aligned}
V_{\varepsilon}^\infty(\bA_\mu x) \le
\sum_{\pi\in\Theta}c_\pi(\mu)V_{\varepsilon}^\infty(\bA_\pi x) \le
\beta_\varepsilon^2
\left(V_{\varepsilon}^\infty(x)-\|x\|_2^2\right).
\end{aligned}
\]
Taking square roots gives \cref{eq:lfa-p-contraction-mode}.

For arbitrary \(\theta,\bar\theta\), \cref{prop:pairwise-direct-lfa} gives
\[
  \bT_\alpha(\theta)-\bT_\alpha(\bar\theta)
  =
  \bA_{\mu_{\theta,\bar\theta}}(\theta-\bar\theta).
\]
Using \cref{eq:lfa-p-contraction-mode} gives the contraction inequality \cref{eq:lfa-T-contraction}.  Since \((\R^m,p_{\varepsilon})\) is complete, Banach's fixed-point theorem implies that \(\bT_\alpha\) has a unique fixed point \(\theta^\star\).  By the definition of \(\bT_\alpha\), this fixed point satisfies \cref{eq:pbe}.

Applying \cref{eq:lfa-strong-drift} to \(x_k=\theta_k-\theta^\star\) gives
\[
  V_{\varepsilon}^\infty(x_{k+1})
  \le
  \beta_\varepsilon^2V_{\varepsilon}^\infty(x_k).
\]
Iteration proves \cref{eq:lfa-det-V-convergence}.  The Euclidean and Q-function bounds follow from \cref{eq:lfa-Veps-equivalence} and \(\|\Phi x\|_2\le\|\Phi\|_2\|x\|_2\).
\end{proof}

\begin{figure}[t]
\centering
\includegraphics[width=0.58\textwidth]{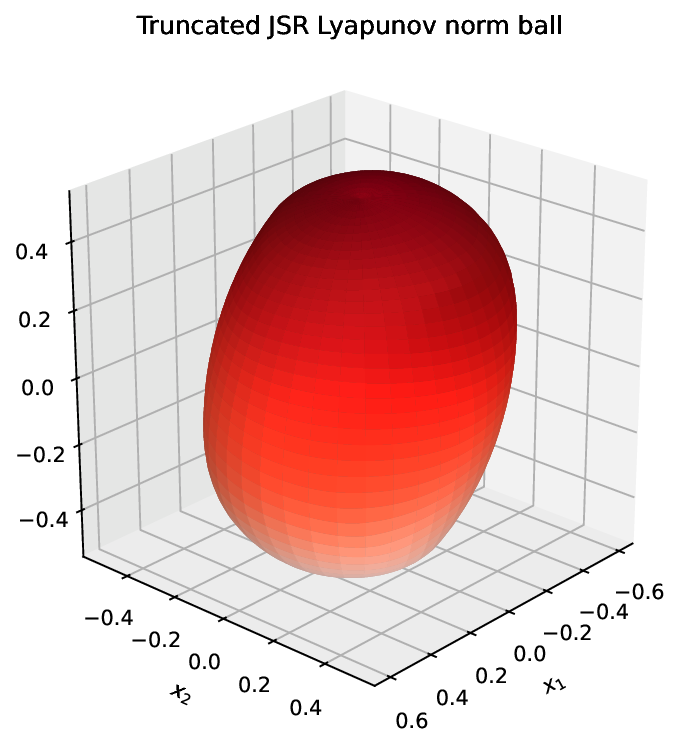}
\caption{Truncated norm ball induced by the Lyapunov construction in \cref{thm:deterministic-jsr-lfa} for a three-dimensional MDP\@. Since the feature dimension is three, the set \(\{x\in\mathbb{R}^3 : p_{\varepsilon,T}(x)\le 1\}\) can be plotted directly, without projection.}
\label{fig:thm1-norm-ball-3d}
\end{figure}

\begin{example}\label{ex:thm1-norm-ball-3d}
Consider a three-state, two-action MDP with \(\calS=\{1,2,3\}\) and \(\calA=\{1,2\}\).  Enumerate the state-action pairs in the action-block order as
\[
(1,1),(2,1),(3,1),(1,2),(2,2),(3,2)
\]
and let the transition matrix be
\[
P=
\begin{bmatrix}
0.7325 & 0.0122 & 0.2552\\
0.6359 & 0.2104 & 0.1537\\
0.5133 & 0.1950 & 0.2917\\
0.4722 & 0.0379 & 0.4899\\
0.0023 & 0.8670 & 0.1307\\
0.7437 & 0.0553 & 0.2010
\end{bmatrix},
\]
let $\gamma=0.7965$, $\alpha=0.9000$, and use the rounded sampling distribution
\[
D\approx\diag(0.1595,\,0.0198,\,0.1480,\,0.2228,\,0.2155,\,0.2343),
\]
and choose the feature matrix
\[
\Phi=
\begin{bmatrix}
-0.0957 & -0.3996 & -0.5050\\
 0.0242 &  0.1328 &  0.1858\\
 0.7378 &  0.4582 &  0.0919\\
-0.4882 &  0.5305 &  0.2531\\
-0.2158 & -0.1461 & -0.2595\\
 0.4013 &  0.5568 & -0.7554
\end{bmatrix}
\in\R^{6\times 3}.
\]
Since there are three states and two actions, there are \(2^3=8\) deterministic policies. For each deterministic policy \(\pi\in\Theta\), the corresponding direct mode is
\[
\bA_\pi
=
I-
\alpha\Phi^\top D\Phi
+
\alpha\gamma\Phi^\top DP\bPi^\pi\Phi
\in\R^{3\times 3}.
\]
For this example, the eight mode norms are approximately
\[
0.8966,\; 0.9324,\; 0.9135,\; 0.9461,\; 0.9335,\; 0.9653,\; 0.9338,\; 0.9678.
\]
Hence, for every product length \(k\), submultiplicativity gives the common induced-norm bound
\[
\begin{aligned}
\jsr(\mathcal A_\alpha)
&=
\lim_{k\to\infty}
\max_{\pi_1,\ldots,\pi_k\in\Theta}
\left\|
\bA_{\pi_k}\cdots\bA_{\pi_1}
\right\|_2^{1/k} \\
&\le
\lim_{k\to\infty}
\max_{\pi_1,\ldots,\pi_k\in\Theta}
\left(
\prod_{j=1}^{k}\|\bA_{\pi_j}\|_2
\right)^{1/k} \\
&\le
\max_{\pi\in\Theta}\|\bA_\pi\|_2
=
0.9678
<1.
\end{aligned}
\]
Therefore, \cref{thm:deterministic-jsr-lfa} applies. We choose $\beta_\varepsilon=0.975$, $T=4$, so that \(\max_{\pi\in\Theta}\|\bA_\pi\|_2<\beta_\varepsilon<1\).
For intuition, \cref{fig:thm1-norm-ball-3d} shows a three-dimensional visualization of the truncated norm ball induced by~\cref{thm:deterministic-jsr-lfa}
\[
  p_{\varepsilon,T}(x)
  :=
  \sqrt{
    \sum_{\ell=0}^{T}
    \beta_\varepsilon^{-2\ell}
    \max_{\pi_1,\ldots,\pi_\ell\in\Theta}
    \left\|
    \bA_{\pi_\ell}\cdots \bA_{\pi_1} x
    \right\|_2^2
  },
\]
which approximates the norm \(p_\varepsilon(x)=\sqrt{V_\varepsilon^\infty(x)}\) from~\cref{thm:deterministic-jsr-lfa}.
The resulting surface in~\cref{fig:thm1-norm-ball-3d} is visibly non-Euclidean: the maximizing switched products produce a slightly bumpy but smooth curved norm ball rather than a sphere.
\end{example}

The previous theorem gives convergence of deterministic linear Q-learning from \(\jsr(\mathcal A_\alpha)<1\). The same certificate also yields the following arbitrary-switching inclusion result.

\begin{theorem}\label{thm:direct-inclusion-lfa}
Consider the direct switched linear inclusion
\begin{equation}\label{eq:lfa-inclusion}
  x_{k+1}=\bA_kx_k,
  \qquad
  \bA_k\in\co(\mathcal A_\alpha),
  \qquad k\in\{0,1,\ldots\}.
\end{equation}
If
\[
  \jsr(\mathcal A_\alpha)<1,
\]
then \cref{eq:lfa-inclusion} is uniformly exponentially stable under arbitrary switching.  More specifically, for every \(\varepsilon>0\) with \(\jsr(\mathcal A_\alpha)+\varepsilon<1\), the norm \(p_{\varepsilon}\) defined by \cref{eq:lfa-Veps-infty} satisfies
\[
  p_{\varepsilon}(Ax)
  \le
  (\jsr(\mathcal A_\alpha)+\varepsilon)p_{\varepsilon}(x),
  \qquad
  \forall \bA\in\co(\mathcal A_\alpha),
  \quad
  \forall x\in\R^m.
\]
Consequently, there exist \(C\ge1\) and \(\eta\in(0,1)\) such that
\[
  \|\bA_{k-1}\cdots \bA_0x\|_2
  \le
  C\eta^k\|x\|_2
\]
for every \(k\ge0\), every \(x\in\R^m\), and every switching sequence \(\bA_j\in\co(\mathcal A_\alpha)\).
If instead \(\jsr(\mathcal A_\alpha)>1\), then the arbitrary-switching inclusion admits a periodically switched trajectory whose state does not converge to zero.
\end{theorem}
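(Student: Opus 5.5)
The plan is to reuse the Lyapunov norm \(p_\varepsilon\) that \cref{lem:common-lyapunov-construction} builds for the finite family \(\mathcal A_\alpha\), and to transfer its contraction property to the convex hull by convexity.

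\textbf{Step 1 (contraction on the hull).} \Cref{thm:deterministic-jsr-lfa} already gives \(p_\varepsilon(\bA_\pi x)\le\beta_\varepsilon p_\varepsilon(x)\) for every \(\pi\in\Theta\). Take \(\bA=\sum_\pi\lambda_\pi\bA_\pi\in\co(\mathcal A_\alpha)\). Since \(p_\varepsilon\) is a norm, the triangle inequality and homogeneity give
\[
p_\varepsilon(\bA x)\le\sum_\pi\lambda_\pi p_\varepsilon(\bA_\pi x)\le\beta_\varepsilon p_\varepsilon(x).
\]
(Equivalently, one can apply the Jensen argument used for \(V_\varepsilon^\infty\) in \cref{thm:deterministic-jsr-lfa}.)

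\textbf{Step 2 (uniform exponential bound).} Iterating Step 1 along any sequence \(\bA_j\in\co(\mathcal A_\alpha)\) gives
\[
p_\varepsilon(\bA_{k-1}\cdots\bA_0x)\le\beta_\varepsilon^k p_\varepsilon(x).
\]
The equivalence \cref{eq:lfa-Veps-equivalence} reads \(\|x\|_2\le p_\varepsilon(x)\le\sqrt{C_\varepsilon}\|x\|_2\). Applying it at both ends yields the claimed bound with \(C=\sqrt{C_\varepsilon}\ge1\) and \(\eta=\beta_\varepsilon\in(0,1)\). This constant is independent of the switching sequence, which is what uniform exponential stability requires.

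\textbf{Step 3 (instability when \(\jsr(\mathcal A_\alpha)>1\)).} This is the only part not already contained in earlier results. I would use the standard identity, for finite families,
\[
\jsr(\calH)=\limsup_{k\to\infty}\max_{\bA_{i_1},\ldots,\bA_{i_k}\in\calH}\rho(\bA_{i_k}\cdots\bA_{i_1})^{1/k},
\]
the Berger--Wang theorem (cf.~\cite{jungers2009joint}). If \(\jsr(\mathcal A_\alpha)>1\), there is some length \(k\) and some product \(M=\bA_{\pi_k}\cdots\bA_{\pi_1}\) with \(\rho(M)>1\). Switch periodically by repeating the word \(\pi_1,\ldots,\pi_k\); these modes lie in \(\mathcal A_\alpha\subset\co(\mathcal A_\alpha)\).

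The state is then \(x_{nk}=M^nx_0\) at the end of each period. Choose \(x_0\) as follows:
\begin{itemize}
\item If the eigenvalue \(\lambda\) with \(|\lambda|>1\) is real, take \(x_0\) to be a real eigenvector. Then \(\|x_{nk}\|_2=|\lambda|^n\|x_0\|_2\to\infty\).
\item If \(\lambda\) is complex with eigenvector \(v\), take \(x_0=\operatorname{Re}v\) or \(x_0=\operatorname{Im}v\). Since \(M^n v=\lambda^n v\), at least one of \(M^n\operatorname{Re}v\) and \(M^n\operatorname{Im}v\) must be unbounded.
\end{itemize}
In either case the trajectory does not converge to zero.

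\textbf{Main obstacle.} I expect Step 3 to be the main difficulty, because it relies on Berger--Wang, which has not been stated earlier in the paper. If invoking it is undesirable, I would try the elementary alternative first. Note that \(\jsr(\calH)\le\max_{\bA\in\calH^k}\rho(\bA)^{1/k}\) fails in general, so that bound cannot be used directly. Instead, use the three-member inequality
\[
\max_{\text{len }k}\rho(\cdot)^{1/k}\le\jsr\le\max_{\text{len }k}\|\cdot\|^{1/k}.
\]
Combined with the fact that, for finite families, the supremum of spectral radii of products equals the JSR, this supplies the needed product \(M\) with \(\rho(M)>1\).
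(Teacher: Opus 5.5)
Your proposal is correct and follows the paper's proof essentially step for step: convexity transfers the mode-wise contraction to $\co(\mathcal A_\alpha)$ (you use the triangle inequality for $p_\varepsilon$ where the paper applies Jensen to $V_\varepsilon^\infty$), iteration plus norm equivalence gives the uniform bound, and Berger--Wang yields a product with spectral radius above one that is repeated periodically (you draw the word from $\mathcal A_\alpha$ itself and choose $x_0$ explicitly, a harmless concretization of the paper's invariant-subspace choice). One remark: the ``elementary alternative'' in your last paragraph is not independent of Berger--Wang, because the fact that normalized spectral radii of products recover the JSR for finite families is exactly the nontrivial direction of that theorem.
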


\begin{proof}
Because of \cref{lem:lfa-convex-hull}, the JSR of \(\co(\mathcal A_\alpha)\) equals \(\jsr(\mathcal A_\alpha)\).

Assume first that \(\jsr(\mathcal A_\alpha)<1\).  Fix any \(\varepsilon>0\) with \(\beta_\varepsilon:=\jsr(\mathcal A_\alpha)+\varepsilon<1\).  The Lyapunov construction in \cref{thm:deterministic-jsr-lfa} gives
\[
  V_\varepsilon^\infty(\bA_\pi x)
  \le
  \beta_\varepsilon^2
  \left(V_\varepsilon^\infty(x)-\|x\|_2^2\right)
\]
for every deterministic policy \(\pi\in\Theta\).  Now take an arbitrary \(\bA\in\co(\mathcal A_\alpha)\) and write
\[
  \bA=\sum_{\pi\in\Theta}c_\pi\bA_\pi,
  \qquad
  c_\pi\ge0,
  \qquad
  \sum_{\pi\in\Theta}c_\pi=1.
\]
Since \(V_\varepsilon^\infty\) is convex, Jensen's inequality gives
\[
\begin{aligned}
V_\varepsilon^\infty(\bA x)
&=
V_\varepsilon^\infty\left(\sum_{\pi\in\Theta}c_\pi\bA_\pi x\right)\\
&\le
\sum_{\pi\in\Theta}c_\pi V_\varepsilon^\infty(\bA_\pi x)\\
&\le
\beta_\varepsilon^2
\left(V_\varepsilon^\infty(x)-\|x\|_2^2\right)
\le
\beta_\varepsilon^2V_\varepsilon^\infty(x).
\end{aligned}
\]
Taking square roots yields
\[
  p_{\varepsilon}(\bA x)
  \le
  \beta_\varepsilon p_{\varepsilon}(x)
  =
  (\jsr(\mathcal A_\alpha)+\varepsilon)p_{\varepsilon}(x)
\]
for every \(\bA\in\co(\mathcal A_\alpha)\).  Iterating this inequality gives exponential decay in the \(p_\varepsilon\)-norm.  Since all norms on \(\R^m\) are equivalent, the same decay holds in Euclidean norm with adjusted constants, proving uniform exponential stability of \cref{eq:lfa-inclusion}.

Assume next that \(\jsr(\mathcal A_\alpha)>1\).  By the Berger--Wang formula for finite bounded matrix families~\cite{jungers2009joint}, there exist matrices \(\bA_1,\ldots,\bA_\ell\in\co(\mathcal A_\alpha)\) such that the product
\[
\mathbf{P}_\sigma:=\bA_\ell\cdots \bA_1
\]
satisfies \(\rho(\mathbf{P}_\sigma)>1\).  Choose an initial vector \(x_0\) in an invariant real subspace associated with an eigenvalue of modulus larger than one.  Repeating the word \(\sigma=(\bA_1,\ldots,\bA_\ell)\) periodically gives
\[
  x_{n\ell}=\mathbf{P}_\sigma^n x_0,
\]
which does not converge to zero.  Hence the arbitrary-switching inclusion admits a periodically switched nonconvergent trajectory.
\end{proof}

The arbitrary-switching inclusion in \cref{eq:lfa-inclusion} is a conservative worst-case model: it allows switching sequences that may never be generated by the Bellman maximum along an actual trajectory. The actual deterministic nonlinear recursion \(\theta_{k+1}=\bT_\alpha(\theta_k)\) realizes only those stochastic policies generated by Bellman maximization along a trajectory. Therefore \(\jsr(\mathcal A_\alpha)<1\) is used here as a sufficient condition. A value \(\jsr(\mathcal A_\alpha)>1\) shows that the larger arbitrary-switching inclusion can be unstable, but it does not by itself show nonconvergence of the Bellman-generated nonlinear recursion.

The preceding discussion shows that the arbitrary-switching inclusion and the actual nonlinear recursion should not be identified. The following example makes this distinction explicit: the direct JSR is larger than one, so the arbitrary-switching inclusion can be unstable, but the Bellman-generated nonlinear recursion is globally convergent.
\begin{example}\label{ex:jsr-greater-than-one-convergent}
Consider a one-state MDP with \(\calS=\{1\}\), two actions \(\calA=\{1,2\}\), zero reward, and deterministic self-transition, $P(1\mid 1,a)=1$, $R(1,a)=0$. Let $\gamma=0.9$, $\alpha=0.9$, $d(1,1)=0.9$, $d(1,2)=0.1$, and use the one-dimensional feature representation
\[
\phi(1,1)=1,
\qquad
\phi(1,2)=-2,
\qquad
\Phi=\begin{bmatrix}1\\ -2\end{bmatrix}.
\]
Then
\[
\Phi^\top D\Phi
=0.9\cdot1^2+0.1\cdot(-2)^2
=1.3,
\qquad
\Phi^\top DP
=0.9\cdot1+0.1\cdot(-2)
=0.7.
\]
There are two deterministic policies, selecting action 1 or action 2.  The corresponding scalar direct modes are
\[
\bA_1
=1-0.9\cdot1.3+0.9\cdot0.9\cdot0.7\cdot1
=0.397,
\qquad
\bA_2
=1-0.9\cdot1.3+0.9\cdot0.9\cdot0.7\cdot(-2)
=-1.304.
\]
Hence
\[
\jsr(\mathcal A_\alpha)
=
\rho(\{0.397,-1.304\})
=1.304>1.
\]
The arbitrary-switching inclusion can therefore generate the divergent trajectory \(x_k=(-1.304)^kx_0\) by repeatedly selecting the second mode.

The actual deterministic nonlinear recursion behaves differently.  Since rewards are zero,
\[
V_\theta
=\max\{\theta,-2\theta\},
\]
and the mean update is
\[
\bT_\alpha(\theta)
=\theta+\alpha\Phi^\top D\left(\gamma PV_\theta-\Phi\theta\right)
=-0.17\theta+0.567\max\{\theta,-2\theta\}.
\]
Thus
\[
\bT_\alpha(\theta)
=
\begin{cases}
0.397\theta, & \theta\ge0,\\
-1.304\theta, & \theta<0.
\end{cases}
\]
If \(\theta_0\ge0\), then \(\theta_k=0.397^k\theta_0\to0\).  If \(\theta_0<0\), then \(\theta_1=-1.304\theta_0>0\), and from that point onward
\[
\theta_k=0.397^{k-1}\theta_1\to0.
\]
Therefore the actual deterministic nonlinear recursion converges globally to the unique fixed point \(\theta^\star=0\), even though \(\jsr(\mathcal A_\alpha)>1\).

The reason is that the unstable second mode is not repeatedly admissible under Bellman maximization. It is selected only when \(\theta<0\), and one application sends the iterate to the region \(\theta>0\), where the stable first mode is selected. Therefore, the product \(\bA_2\bA_2\bA_2\cdots\), although allowed by the arbitrary-switching inclusion and counted by the JSR, is not realized by the actual Bellman-generated dynamics.
\end{example}

\section{Finite-Time Error Analysis of Linear Q-Learning}\label{sec:iid-lfa}
This section analyzes the sampled linear Q-learning recursion under an independent and identically distributed (i.i.d.) observation model. Unlike the deterministic analysis, the stochastic recursion contains persistent martingale-difference noise. For clarity, we impose an i.i.d.\ observation model; the same switching decomposition can be extended to Markovian observation models using the techniques of~\cite{lee2026lyapunovcertified}.

\subsection{Analysis with a Boundedness Assumption}
At each time \(k\), we sample \((s_k,a_k)\) independently according to \(d\), then sample \(s'_k\sim P(\cdot\mid s_k,a_k)\), and set \(r_{k+1}:=r(s_k,a_k,s'_k)\). Define the filtration explicitly by
\[
\calF_0:=\sigma(\theta_0),
\qquad
\calF_k:=\sigma\left(\theta_0,\{(s_t,a_t,s'_t,r_{t+1}):0\le t\le k-1\}\right),
\quad k\ge1 .
\]
Under this sampling model, we consider the following linear Q-learning update.

\begin{equation}\label{eq:iid-lfa-update}
\theta_{k+1}
=
\theta_k+
\alpha\phi(s_k,a_k)
\left(
 r_{k+1}
 +
 \gamma\max_{u\in\calA}\phi(s'_k,u)^\top\theta_k
 -
 \phi(s_k,a_k)^\top\theta_k
\right),
\qquad k\in\{0,1,\ldots\}.
\end{equation}

We first assume that the stochastic iterates remain in a fixed \(\ell_\infty\)-ball. This assumption isolates the main switching-system argument by keeping the martingale-difference noise uniformly controlled. A projection-free small-step-size analysis later in this section removes this assumption.

\begin{assumption}\label{ass:iid-boundedness}
For some fixed radius \(B_\theta>0\), the iterates of \cref{eq:iid-lfa-update} satisfy
\begin{equation}\label{eq:iid-boundedness-assumption}
  \|\theta_k\|_\infty\le B_\theta,
  \qquad \forall k\ge0,
  \quad\text{a.s.}
\end{equation}
\end{assumption}
To proceed, let us define the sample update vector
\begin{equation*}
\widehat g_k(\theta)
:=
\phi(s_k,a_k)
\left(
 r_{k+1}
 +
 \gamma\max_{u\in\calA}\phi(s'_k,u)^\top\theta
 -
 \phi(s_k,a_k)^\top\theta
\right).
\end{equation*}
Then
\begin{equation}\label{eq:mean-g-iid}
  \E[\widehat g_k(\theta_k)\mid\calF_k]
  =
  g(\theta_k),
\end{equation}
where \(g\) is defined in \cref{eq:g-theta-def}. Moreover, let us define the martingale-difference noise
\begin{equation}\label{eq:lfa-wk-def}
  w_k
  :=
  \widehat g_k(\theta_k)-g(\theta_k).
\end{equation}
Then \(\E[w_k\mid\calF_k]=0\), and \cref{eq:iid-lfa-update} is
\begin{equation}\label{eq:iid-lfa-vector}
  \theta_{k+1}
  =
  \bT_\alpha(\theta_k)+\alpha w_k,
  \qquad k\in\{0,1,\ldots\}.
\end{equation}

The exact switching linear system representation applies to this recursion.
\begin{proposition}\label{prop:iid-direct-lfa}
Suppose that \(\jsr(\mathcal A_\alpha)<1\), and let \(\theta^\star\) be the unique projected Bellman fixed point from \cref{thm:deterministic-jsr-lfa}. Define
\[
  x_k:=\theta_k-\theta^\star .
\]
Then, for each \(k\in\{0,1,\ldots\}\), there exists an \(\calF_k\)-measurable stochastic policy \(\mu_k\) such that
\begin{equation}\label{eq:iid-direct-error-lfa}
  x_{k+1}
  =
  \bA_{\mu_k} x_k+
  \alpha w_k,
  \qquad k\in\{0,1,\ldots\},
\end{equation}
where \(\E[w_k\mid\calF_k]=0\).
\end{proposition}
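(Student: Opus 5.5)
The plan is to combine the stochastic rewriting \cref{eq:iid-lfa-vector} with the pairwise linearization of \cref{prop:pairwise-direct-lfa}. Since \(\jsr(\mathcal A_\alpha)<1\), \cref{thm:deterministic-jsr-lfa} supplies a unique projected Bellman fixed point \(\theta^\star\), which is deterministic and satisfies \(\bT_\alpha(\theta^\star)=\theta^\star\). Subtracting \(\theta^\star\) from both sides of \cref{eq:iid-lfa-vector} gives
\[
  x_{k+1}
  =
  \bT_\alpha(\theta_k)-\bT_\alpha(\theta^\star)+\alpha w_k .
\]

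Next, apply \cref{prop:pairwise-direct-lfa} with \(\theta=\theta_k\) and \(\bar\theta=\theta^\star\). This yields the stochastic policy \(\mu_k:=\mu_{\theta_k,\theta^\star}\) with \(\bT_\alpha(\theta_k)-\bT_\alpha(\theta^\star)=\bA_{\mu_k}x_k\). Substituting into the display above gives \cref{eq:iid-direct-error-lfa}.

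The only point needing care is the measurability of \(\mu_k\) with respect to \(\calF_k\). By induction on the update \cref{eq:iid-lfa-update}, \(\theta_k\) is a measurable function of \(\theta_0\) and the samples \(\{(s_t,a_t,s'_t,r_{t+1})\}_{t<k}\), so \(\theta_k\) is \(\calF_k\)-measurable. \Cref{lem:lfa-max-linearization} states that the policy map \((\theta,\bar\theta)\mapsto\mu_{\theta,\bar\theta}\) is measurable, and \(\theta^\star\) is a constant. Hence \(\mu_k\), as a composition of measurable maps, is \(\calF_k\)-measurable. Note that \(\mu_k\) depends only on \(\theta_k\), not on the fresh sample \((s_k,a_k,s'_k)\), because the linearization is applied to the mean map \(\bT_\alpha\) and all randomness from the fresh sample sits in \(w_k\).

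Finally, \(\E[w_k\mid\calF_k]=0\) follows from \cref{eq:mean-g-iid}. Under the i.i.d. model, \((s_k,a_k,s'_k)\) is independent of \(\calF_k\) with law \(d(s,a)P(s'\mid s,a)\). Meanwhile \(\theta_k\) is \(\calF_k\)-measurable, and \(\widehat g_k(\theta)\) is bounded for each fixed \(\theta\) since the spaces are finite and \(|r|\le R_{\max}\). Conditioning therefore amounts to averaging over the fresh sample with \(\theta=\theta_k\) frozen, which gives exactly \(\Phi^\top D(R+\gamma PV_{\theta_k}-\Phi\theta_k)=g(\theta_k)\). Integrability is immediate under \cref{ass:iid-boundedness}; without it, \(\theta_k\) is still a finite-valued function of finitely many bounded samples from a finite space, so it is bounded and the conditional expectation is well defined. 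The measurability of \(\mu_k\) is the only step I expect to require any real care, and it is settled by the measurable selection already recorded in \cref{lem:lfa-max-linearization}.
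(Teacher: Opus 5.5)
Your proposal is correct and takes the same route as the paper: subtract $\theta^\star=\bT_\alpha(\theta^\star)$ from $\theta_{k+1}=\bT_\alpha(\theta_k)+\alpha w_k$, apply \cref{prop:pairwise-direct-lfa} with $\bar\theta=\theta^\star$, and read off $\E[w_k\mid\calF_k]=0$ from \cref{eq:mean-g-iid}. Your additional details are all sound and only spell out what the paper's proof takes for granted: the $\calF_k$-measurability of $\mu_k$ via the measurable selection in \cref{lem:lfa-max-linearization}, the independence (freezing) argument behind \cref{eq:mean-g-iid}, and integrability from $\theta_k$ taking only finitely many values.
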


\begin{proof}
From \cref{eq:iid-lfa-vector},
\[
  \theta_{k+1}=\bT_\alpha(\theta_k)+\alpha w_k,
  \qquad k\in\{0,1,\ldots\}.
\]
Subtract \(\theta^\star=\bT_\alpha(\theta^\star)\) from both sides:
\[
  x_{k+1}
  =
  \bT_\alpha(\theta_k)-\bT_\alpha(\theta^\star)+\alpha w_k.
\]
By \cref{prop:pairwise-direct-lfa}, there exists an \(\calF_k\)-measurable stochastic policy \(\mu_k\) such that
\[
  \bT_\alpha(\theta_k)-\bT_\alpha(\theta^\star)=\bA_{\mu_k}x_k.
\]
This proves the direct identity.  The identity \(\E[w_k\mid\calF_k]=0\) follows from \cref{eq:mean-g-iid,eq:lfa-wk-def}.
\end{proof}

\subsection{Finite-time error bound}
We now turn to the finite-time error analysis of linear Q-learning.
Fix \(\varepsilon>0\) such that \(\beta_\varepsilon:=\jsr(\mathcal A_\alpha)+\varepsilon<1\), and define the product-growth constant
\begin{equation}\label{eq:iid-Kbeta-def}
K_{\beta_\varepsilon}
:=
\sup_{\ell\ge0}
\beta_\varepsilon^{-\ell}
\sup_{\mathbf{M}_0,\ldots,\mathbf{M}_{\ell-1}\in\co(\mathcal A_\alpha)}
\left\|
\mathbf{M}_{\ell-1}\cdots \mathbf{M}_0
\right\|_2 .
\end{equation}
We start with three elementary estimates used in the finite-time bound.
\begin{lemma}\label{lem:iid-product-growth-finite}
Suppose that \(\jsr(\mathcal A_\alpha)<1\).  Fix \(\varepsilon>0\) such that \(\beta_\varepsilon:=\jsr(\mathcal A_\alpha)+\varepsilon<1\), and let \(K_{\beta_\varepsilon}\) be defined by \cref{eq:iid-Kbeta-def}.  Then \(K_{\beta_\varepsilon}<\infty\).
\end{lemma}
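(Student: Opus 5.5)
The plan is to bound every product of matrices from \(\co(\mathcal A_\alpha)\) through the Lyapunov norm \(p_\varepsilon\) of \cref{thm:deterministic-jsr-lfa}. That norm contracts uniformly over the whole convex hull, so no direct combinatorial estimate on products is needed.

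First, by \cref{thm:direct-inclusion-lfa}, for every \(\mathbf{M}\in\co(\mathcal A_\alpha)\) and every \(x\in\R^m\) we have \(p_\varepsilon(\mathbf{M}x)\le\beta_\varepsilon p_\varepsilon(x)\). Iterating this along an arbitrary word \(\mathbf{M}_0,\ldots,\mathbf{M}_{\ell-1}\in\co(\mathcal A_\alpha)\) gives
\[
  p_\varepsilon(\mathbf{M}_{\ell-1}\cdots\mathbf{M}_0x)\le\beta_\varepsilon^{\ell}p_\varepsilon(x),
\]
and for \(\ell=0\) this is trivial with the empty product equal to \(I\).

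Second, we pass to the Euclidean norm using the equivalence \cref{eq:lfa-Veps-equivalence}, namely \(\|x\|_2\le p_\varepsilon(x)\le\sqrt{C_\varepsilon}\|x\|_2\). This yields
\[
  \|\mathbf{M}_{\ell-1}\cdots\mathbf{M}_0x\|_2
  \le p_\varepsilon(\mathbf{M}_{\ell-1}\cdots\mathbf{M}_0x)
  \le\beta_\varepsilon^\ell p_\varepsilon(x)
  \le\sqrt{C_\varepsilon}\,\beta_\varepsilon^\ell\|x\|_2 .
\]
Taking the supremum over unit vectors \(x\) gives \(\|\mathbf{M}_{\ell-1}\cdots\mathbf{M}_0\|_2\le\sqrt{C_\varepsilon}\,\beta_\varepsilon^\ell\). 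Hence \(\beta_\varepsilon^{-\ell}\|\mathbf{M}_{\ell-1}\cdots\mathbf{M}_0\|_2\le\sqrt{C_\varepsilon}\) uniformly in \(\ell\) and in the word, so \(K_{\beta_\varepsilon}\le\sqrt{C_\varepsilon}<\infty\). Since the \(\ell=0\) term equals one, we also get \(K_{\beta_\varepsilon}\ge1\).

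The only point needing care is that the contraction must hold on the convex hull \(\co(\mathcal A_\alpha)\), not merely on the finite family. This is exactly what the Jensen argument in \cref{thm:direct-inclusion-lfa} provides, and it relies on the convexity of \(V_\varepsilon^\infty\). Finiteness of \(C_\varepsilon\) is part (iii) of \cref{lem:common-lyapunov-construction}, and it holds precisely because \(\beta_\varepsilon>\jsr(\mathcal A_\alpha)\), so I do not expect any genuine obstacle beyond citing these results correctly.
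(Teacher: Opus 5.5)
Your proof is correct, and it takes a different route from the paper. The paper argues straight from the definition of the JSR. It invokes the convex-hull invariance $\jsr(\co(\mathcal A_\alpha))=\jsr(\mathcal A_\alpha)$ from \cref{lem:lfa-convex-hull}, which the paper only cites as a standard fact. It then picks $\eta\in(\jsr(\mathcal A_\alpha),\beta_\varepsilon)$ and applies the JSR limit to the bounded family $\co(\mathcal A_\alpha)$ to get $\|\mathbf M_{\ell-1}\cdots\mathbf M_0\|_2\le C\eta^\ell$, which yields $K_{\beta_\varepsilon}\le\sup_\ell C(\eta/\beta_\varepsilon)^\ell=C$.

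You instead carry the contraction of the Lyapunov norm $p_\varepsilon$ from the finite family to the whole hull, using Jensen's inequality for the convex function $V_\varepsilon^\infty$ (the step proved in \cref{thm:direct-inclusion-lfa}). You then convert to the Euclidean norm with \cref{eq:lfa-Veps-equivalence}. In your argument the only JSR estimate needed is for the finite family $\mathcal A_\alpha$, inside the construction of $C_\varepsilon$. So you never use the convex-hull invariance of the JSR; your argument in effect establishes its nontrivial half, $\jsr(\co(\mathcal A_\alpha))\le\jsr(\mathcal A_\alpha)$, in the present setting. It also gives the explicit estimate $1\le K_{\beta_\varepsilon}\le\sqrt{C_\varepsilon}$, which ties the product-growth constant to the norm-equivalence constant of the Lyapunov certificate. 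There is no circularity, since \cref{thm:deterministic-jsr-lfa,thm:direct-inclusion-lfa} precede this lemma and their Jensen step does not use the JSR invariance.

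The paper's route is shorter and independent of the Lyapunov machinery. The cost is that it imports convex-hull invariance as a black box and leaves the constant $C$ implicit.
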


\begin{proof}
By \cref{lem:lfa-convex-hull}, \(\rho(\co(\mathcal A_\alpha))=\jsr(\mathcal A_\alpha)\). Since \(\beta_\varepsilon>\rho(\co(\mathcal A_\alpha))\), the definition of the JSR implies that there exist constants \(C<\infty\) and \(\eta\in(\rho(\co(\mathcal A_\alpha)),\beta_\varepsilon)\) such that every length-\(\ell\) product generated by \(\co(\mathcal A_\alpha)\) has Euclidean norm at most \(C\eta^\ell\).  Therefore
\[
\beta_\varepsilon^{-\ell}
\sup_{\mathbf{M}_0,\ldots,\mathbf{M}_{\ell-1}\in\co(\mathcal A_\alpha)}
\left\|
\mathbf{M}_{\ell-1}\cdots \mathbf{M}_0
\right\|_2
\le
C\left(\frac{\eta}{\beta_\varepsilon}\right)^\ell
\le C,
\]
for all \(\ell\ge0\).  Taking the supremum over \(\ell\) proves \(K_{\beta_\varepsilon}<\infty\).
\end{proof}

To proceed further, let us define
\begin{equation}\label{eq:iid-L-def}
L
:=
\sup_{\mu,\nu}
\left\|
\Phi^\top D P(\bPi^\mu-\bPi^\nu)\Phi
\right\|_2,
\end{equation}
where the supremum is over all stochastic policies \(\mu,\nu:\calS\to\Delta_{|\calA|}\).

\begin{lemma}\label{lem:iid-mode-difference-bound}
For stochastic policies \(\mu\) and \(\nu\),
\begin{equation}\label{eq:iid-mode-difference-bound}
  \|\bA_\mu-\bA_\nu\|_2
  \le
  \alpha\gamma L .
\end{equation}
\end{lemma}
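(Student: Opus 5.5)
The plan is to compute the difference \(\bA_\mu-\bA_\nu\) directly from the definition of \(\bA_\mu\). Every policy-independent term should then cancel, and the remaining term should be controlled by the constant \(L\) from \cref{eq:iid-L-def}. The key step is the algebraic identity
\[
  \bA_\mu-\bA_\nu
  =
  \left(I-\alpha\Phi^\top D\Phi+\alpha\gamma\Phi^\top DP\bPi^\mu\Phi\right)
  -
  \left(I-\alpha\Phi^\top D\Phi+\alpha\gamma\Phi^\top DP\bPi^\nu\Phi\right)
  =
  \alpha\gamma\,\Phi^\top DP\left(\bPi^\mu-\bPi^\nu\right)\Phi .
\]
This holds because the terms \(I\) and \(-\alpha\Phi^\top D\Phi\) do not depend on the policy and cancel.

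Next I would take the Euclidean operator norm and pull out the nonnegative scalar \(\alpha\gamma\), using \(\alpha\in(0,1)\) and \(\gamma\in(0,1)\). This gives
\[
  \|\bA_\mu-\bA_\nu\|_2
  =
  \alpha\gamma\left\|\Phi^\top DP(\bPi^\mu-\bPi^\nu)\Phi\right\|_2 .
\]
The pair \((\mu,\nu)\) is one admissible pair in the supremum defining \(L\) in \cref{eq:iid-L-def}, so the right-hand side is at most \(\alpha\gamma L\). That proves \cref{eq:iid-mode-difference-bound}.

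The only point that needs care is that \(L\) is finite, so that the bound is not vacuous. The map \(\mu\mapsto\bPi^\mu\) is affine in the entries of \(\mu\), and the set of stochastic policies is the compact product of simplices \(\Delta_{|\calA|}^{|\calS|}\). Alternatively, each \(\bPi^\mu\) is row-stochastic with nonnegative entries, so \(\|\bPi^\mu\|_2\le\sqrt{|\calS|\,|\calS||\calA|}\). Either argument gives
\[
  L\le 2\|\Phi^\top DP\|_2\,\sup_\mu\|\bPi^\mu\|_2\,\|\Phi\|_2<\infty .
\]
I do not expect any real obstacle here; the statement is essentially a one-line cancellation followed by the definition of \(L\).
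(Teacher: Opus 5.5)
Your proposal is correct and follows the same route as the paper: the policy-independent terms cancel, leaving \(\alpha\gamma\Phi^\top DP(\bPi^\mu-\bPi^\nu)\Phi\), and taking the Euclidean operator norm together with the definition of \(L\) gives the bound. Your remark that \(L<\infty\) is a harmless extra that the paper does not state explicitly; your bound on \(\|\bPi^\mu\|_2\) is valid, though in fact \(\|\bPi^\mu\|_2\le1\).
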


\begin{proof}
The definition of \(\bA_\mu\) gives
\[
  \bA_\mu-\bA_\nu
  =
  \alpha\gamma\Phi^\top DP(\bPi^\mu-\bPi^\nu)\Phi .
\]
Taking Euclidean operator norms and using the definition of \(L\) in \cref{eq:iid-L-def} gives \cref{eq:iid-mode-difference-bound}.
\end{proof}

\begin{lemma}\label{lem:iid-projected-noise-bound}
Assume \cref{ass:iid-boundedness}.  Then, for \(w_k\) defined in \cref{eq:lfa-wk-def},
\[
  \E[w_k\mid\calF_k]=0,
  \qquad
  \E[\|w_k\|_2^2\mid\calF_k]
  \le
  4\phi_{\max}^2\left(R_{\max}+(1+\gamma)\max_{(s,a)\in\calS\times\calA}\|\phi(s,a)\|_1\,B_\theta\right)^2,
  \qquad \forall k\ge0.
\]
\end{lemma}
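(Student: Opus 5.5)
The zero-mean part of the statement is immediate from \cref{eq:mean-g-iid} and \cref{eq:lfa-wk-def}. Under the i.i.d.\ model, the triple \((s_k,a_k,s'_k)\) is independent of \(\calF_k\), while \(\theta_k\) is \(\calF_k\)-measurable. Conditioning on \(\calF_k\) therefore averages over \((s_k,a_k)\sim d\) and \(s'_k\sim P(\cdot\mid s_k,a_k)\), which reproduces \(\Phi^\top D(R+\gamma PV_{\theta_k}-\Phi\theta_k)=g(\theta_k)\). Hence \(\E[w_k\mid\calF_k]=0\).

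For the second moment, I will use the variance inequality \(\E[\|X-\E[X\mid\calF_k]\|_2^2\mid\calF_k]\le\E[\|X\|_2^2\mid\calF_k]\) with \(X=\widehat g_k(\theta_k)\). A cruder route also works: the triangle inequality gives \(\|w_k\|_2\le\|\widehat g_k(\theta_k)\|_2+\|g(\theta_k)\|_2\), and Jensen's inequality gives \(\|g(\theta_k)\|_2\le\E[\|\widehat g_k(\theta_k)\|_2\mid\calF_k]\). Either way, it suffices to bound \(\|\widehat g_k(\theta)\|_2\) uniformly by some constant \(G\) over \(\|\theta\|_\infty\le B_\theta\). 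The crude route then yields \(\|w_k\|_2\le 2G\), hence \(\|w_k\|_2^2\le 4G^2\), which accounts for the factor \(4\) in the statement.

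The uniform bound has two steps. First, \(\|\widehat g_k(\theta)\|_2\le\|\phi(s_k,a_k)\|_2\,|\delta_k|\le\phi_{\max}|\delta_k|\), where \(\delta_k\) is the temporal-difference term in parentheses. Second, Hölder's inequality gives \(|\phi(s,a)^\top\theta|\le\|\phi(s,a)\|_1\|\theta\|_\infty\le c_1B_\theta\) with \(c_1:=\max_{(s,a)}\|\phi(s,a)\|_1\). The maximum over \(u\) of such terms inherits the same bound, since \(|\max_u x_u|\le\max_u|x_u|\). Combining these with \(|r_{k+1}|\le R_{\max}\), we get \(|\delta_k|\le R_{\max}+\gamma c_1B_\theta+c_1B_\theta\). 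Thus \(G=\phi_{\max}(R_{\max}+(1+\gamma)c_1B_\theta)\), and \(4G^2\) is exactly the stated constant.

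The only delicate point is justifying that the bound \(\|\theta_k\|_\infty\le B_\theta\) from \cref{ass:iid-boundedness} can be used inside the conditional expectation. This is fine because the bound holds almost surely and \(\theta_k\) is \(\calF_k\)-measurable, so the deterministic estimate above applies pathwise, both to \(\widehat g_k(\theta_k)\) and to \(g(\theta_k)\). Everything else is routine.
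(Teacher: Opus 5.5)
Your proposal is correct and follows the paper's proof essentially step by step. Hölder gives $|\phi(s,a)^\top\theta_k|\le \max_{(s,a)}\|\phi(s,a)\|_1\,B_\theta$, this yields a pathwise bound $G$ on $\|\widehat g_k(\theta_k)\|_2$, Jensen transfers the bound to $\|g(\theta_k)\|_2$, and the triangle inequality produces the factor $4$. The variance-inequality route you mention would in fact give the sharper bound $G^2$, but it is not needed for the stated constant.
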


\begin{proof}
The assumed bound \cref{eq:iid-boundedness-assumption} implies
\[
  \|\Phi\theta_k\|_\infty
  \le
  \max_{(s,a)\in\calS\times\calA}\|\phi(s,a)\|_1\,B_\theta.
\]
Therefore, for every sample,
\[
\begin{aligned}
\|\widehat g_k(\theta_k)\|_2
&\le
\|\phi(s_k,a_k)\|_2
\left(
R_{\max}
+
\gamma\|\Phi\theta_k\|_\infty
+
\|\Phi\theta_k\|_\infty
\right)\\
&\le
\phi_{\max}
\left(
R_{\max}+(1+\gamma)
\max_{\substack{(s,a)\in\\\calS\times\calA}}
\|\phi(s,a)\|_1\,B_\theta
\right).
\end{aligned}
\]
Since \(g(\theta_k)=\E[\widehat g_k(\theta_k)\mid\calF_k]\), Jensen's inequality gives the same bound for \(\|g(\theta_k)\|_2\).  Hence
\[
\begin{aligned}
  \|w_k\|_2
  &\le
  \|\widehat g_k(\theta_k)\|_2+
  \|g(\theta_k)\|_2\\
  &\le
  2\phi_{\max}
  \left(
  R_{\max}+(1+\gamma)
  \max_{\substack{(s,a)\in\\\calS\times\calA}}
  \|\phi(s,a)\|_1\,B_\theta
  \right).
\end{aligned}
\]
Squaring and taking conditional expectation gives
\[
\begin{aligned}
  \E[\|w_k\|_2^2\mid\calF_k]
  \le
  4\phi_{\max}^2
  \left(
  R_{\max}+(1+\gamma)
  \max_{\substack{(s,a)\in\\\calS\times\calA}}
  \|\phi(s,a)\|_1\,B_\theta
  \right)^2.
\end{aligned}
\]
The martingale-difference property follows from \cref{eq:mean-g-iid,eq:lfa-wk-def}.
\end{proof}

The next theorem combines the preceding estimates to obtain a finite-time error bound under~\cref{ass:iid-boundedness}.
\begin{theorem}\label{thm:iid-jsr-lfa}
Suppose that \(\jsr(\mathcal A_\alpha)<1\).  Fix \(\varepsilon>0\) such that \(\beta_\varepsilon:=\jsr(\mathcal A_\alpha)+\varepsilon<1\), and let \(K_{\beta_\varepsilon}\) be defined in \cref{eq:iid-Kbeta-def}.  Let \(\theta^\star\) be the unique projected Bellman fixed point from \cref{thm:deterministic-jsr-lfa}.  Assume that \cref{ass:iid-boundedness} holds for some \(B_\theta>0\).  Fix an arbitrary stochastic policy \(\bar\mu\).  Then the linear Q-learning recursion \cref{eq:iid-lfa-update} satisfies, for all \(k\ge0\),
\begin{equation}\label{eq:iid-theta-bound}
\begin{aligned}
\E[\|\theta_k-\theta^\star\|_2]
\le\;&
K_{\beta_\varepsilon}\beta_\varepsilon^k\|\theta_0-\theta^\star\|_2 \\
&+
\alpha\gamma K_{\beta_\varepsilon}^2 k\beta_\varepsilon^{k-1}
\|\theta_0-\theta^\star\|_2
L \\
&+
\left(
1+
\frac{\alpha\gamma K_{\beta_\varepsilon}}{1-\beta_\varepsilon}
L
\right) \\
&\qquad\times
\frac{2\alpha K_{\beta_\varepsilon}\phi_{\max}}{\sqrt{1-\beta_\varepsilon^2}}
\left(
R_{\max}+(1+\gamma)
\max_{\substack{(s,a)\in\\\calS\times\calA}}
\|\phi(s,a)\|_1\,B_\theta
\right),
\end{aligned}
\end{equation}
with the convention that \(k\beta_\varepsilon^{k-1}=0\) when \(k=0\).  Consequently,

\begin{equation}\label{eq:iid-limsup-bound}
\begin{aligned}
\limsup_{k\to\infty}
\E[\|\theta_k-\theta^\star\|_2]
\le\;&
\left(
1+
\frac{\alpha\gamma K_{\beta_\varepsilon}}{1-\beta_\varepsilon}
L
\right) \\
&\times
\frac{2\alpha K_{\beta_\varepsilon}\phi_{\max}}{\sqrt{1-\beta_\varepsilon^2}}
\left(
R_{\max}+(1+\gamma)
\max_{\substack{(s,a)\in\\\calS\times\calA}}
\|\phi(s,a)\|_1\,B_\theta
\right).
\end{aligned}
\end{equation}
\end{theorem}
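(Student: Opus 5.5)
The bound has a transient part and a noise part, and its shape suggests splitting the error of \cref{prop:iid-direct-lfa} into two pieces. The first piece is a \emph{fixed-mode} linear system driven by the martingale noise, frozen at the arbitrary reference policy \(\bar\mu\). The second piece is a purely switched remainder, driven by the mismatch \(\bA_{\mu_k}-\bA_{\bar\mu}\). The constant \(K_{\beta_\varepsilon}\) controls products of arbitrary matrices in \(\co(\mathcal A_\alpha)\) (\cref{lem:iid-product-growth-finite}). Every stochastic-policy mode lies in that convex hull (\cref{lem:lfa-convex-hull}). So both pieces can be bounded \emph{pathwise} by deterministic product estimates, and the random switching sequence \(\mu_k\) causes no difficulty.

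Concretely, with \(x_k=\theta_k-\theta^\star\), I define \(e_0:=x_0\) and \(e_{k+1}:=\bA_{\bar\mu}e_k+\alpha w_k\), and set \(u_k:=x_k-e_k\), so that \(u_0=0\). Subtracting the two recursions and using \cref{eq:iid-direct-error-lfa} gives
\[
u_{k+1}=\bA_{\mu_k}x_k-\bA_{\bar\mu}e_k=\bA_{\mu_k}u_k+(\bA_{\mu_k}-\bA_{\bar\mu})e_k .
\]

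For the reference piece I write \(e_k=\bA_{\bar\mu}^k x_0+\alpha\sum_{j=0}^{k-1}\bA_{\bar\mu}^{k-1-j}w_j=:\bA_{\bar\mu}^kx_0+N_k\). The first term has norm at most \(K_{\beta_\varepsilon}\beta_\varepsilon^k\|x_0\|_2\). For \(N_k\), the matrix \(\bA_{\bar\mu}\) is deterministic and each \(w_j\) is a martingale difference with \(w_i\) being \(\calF_j\)-measurable for \(i<j\). Hence all cross terms vanish in \(\E\|N_k\|_2^2\), which gives
\[
\E\|N_k\|_2^2\le\alpha^2K_{\beta_\varepsilon}^2\sum_{j}\beta_\varepsilon^{2(k-1-j)}\sigma^2\le\frac{\alpha^2K_{\beta_\varepsilon}^2\sigma^2}{1-\beta_\varepsilon^2}.
\]
Here \(\sigma=2\phi_{\max}(R_{\max}+(1+\gamma)\max\|\phi\|_1B_\theta)\) comes from \cref{lem:iid-projected-noise-bound}. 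Jensen's inequality then yields \(\E\|N_k\|_2\le \alpha K_{\beta_\varepsilon}\sigma/\sqrt{1-\beta_\varepsilon^2}=:\bar N\), uniformly in \(k\).

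For the switched piece I unroll with \(u_0=0\):
\[
u_k=\sum_{j=0}^{k-1}\bA_{\mu_{k-1}}\cdots\bA_{\mu_{j+1}}(\bA_{\mu_j}-\bA_{\bar\mu})e_j .
\]
Each product is a product of elements of \(\co(\mathcal A_\alpha)\), so its norm is at most \(K_{\beta_\varepsilon}\beta_\varepsilon^{k-1-j}\). By \cref{lem:iid-mode-difference-bound}, \(\|\bA_{\mu_j}-\bA_{\bar\mu}\|_2\le\alpha\gamma L\). Inserting \(\|e_j\|_2\le K_{\beta_\varepsilon}\beta_\varepsilon^j\|x_0\|_2+\|N_j\|_2\) and taking expectations gives
\[
\E\|u_k\|_2\le \alpha\gamma K_{\beta_\varepsilon}^2Lk\beta_\varepsilon^{k-1}\|x_0\|_2+\frac{\alpha\gamma K_{\beta_\varepsilon}L}{1-\beta_\varepsilon}\bar N .
\]
Adding \(\E\|e_k\|_2\le K_{\beta_\varepsilon}\beta_\varepsilon^k\|x_0\|_2+\bar N\) reproduces \cref{eq:iid-theta-bound} exactly, including the convention \(k\beta_\varepsilon^{k-1}=0\) at \(k=0\), since then the sum is empty. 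The limsup bound \cref{eq:iid-limsup-bound} follows because \(K_{\beta_\varepsilon}\beta_\varepsilon^k\to0\) and \(k\beta_\varepsilon^{k-1}\to0\).

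I expect the main care to be needed in the noise term. The orthogonality of the martingale differences must be applied only to \(N_k\), where the propagation matrix is the fixed, deterministic \(\bA_{\bar\mu}\). It would fail if the noise were propagated through the random, state-dependent products \(\bA_{\mu_{k-1}}\cdots\). That is precisely why the fixed reference mode \(\bar\mu\) is introduced, and why the switched remainder is handled by deterministic worst-case product bounds rather than second moments. One should also check that \(\mu_j\) is \(\calF_j\)-measurable (\cref{prop:iid-direct-lfa}), although the pathwise bound on \(u_k\) does not actually require it.
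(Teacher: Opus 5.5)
Your proposal is correct and follows essentially the same route as the paper: a fixed-policy reference filter driven by $w_k$, the martingale-orthogonality second-moment bound on its noise part, and a pathwise bound on the unrolled switched residual using $K_{\beta_\varepsilon}$ and $\alpha\gamma L$. The only cosmetic difference is that you apply the triangle inequality to $e_j$ inside the unrolled sum, whereas the paper splits the residual into two recursions, one driven by the deterministic part of the reference state and one by its noise part; both give the same constants.
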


\begin{proof}
Set \(x_k:=\theta_k-\theta^\star\).  By \cref{prop:iid-direct-lfa}, the error process satisfies the exact switched recursion
\[
  x_{k+1}=\bA_{\mu_k}x_k+\alpha w_k,
  \qquad
  \E[w_k\mid\calF_k]=0,
\]
for an \(\calF_k\)-measurable stochastic policy \(\mu_k\).  By \cref{lem:lfa-convex-hull}, each stochastic-policy mode lies in \(\co(\mathcal A_\alpha)\). Hence the definition of \(K_{\beta_\varepsilon}\) implies the pathwise product bound
\[
  \|\bA_{\nu_{\ell-1}}\cdots \bA_{\nu_0}\|_2
  \le
  K_{\beta_\varepsilon}\beta_\varepsilon^\ell,
  \qquad
  \ell\ge0,
\]
for every finite sequence of stochastic policies \(\nu_0,\ldots,\nu_{\ell-1}\), with the empty product interpreted as the identity.

We next derive the fixed-policy reference-filter estimates used below.  Fix an arbitrary stochastic policy \(\bar\mu\).  Define the fixed-policy reference filter
\begin{equation}\label{eq:iid-reference-filter-app}
  y_{k+1}
  =
  \bA_{\bar\mu} y_k+
  \alpha w_k,
  \qquad
  y_0=x_0.
\end{equation}
Split
\[
  y_k=\bar y_k+\eta_k,
\]
where
\[
  \bar y_{k+1}=\bA_{\bar\mu}\bar y_k,
  \quad \bar y_0=x_0,
  \qquad
  \eta_{k+1}=\bA_{\bar\mu}\eta_k+
  \alpha w_k,
  \quad \eta_0=0.
\]
Since \(\bA_{\bar\mu}\in\co(\mathcal A_\alpha)\), the preceding product bound gives
\begin{equation}\label{eq:iid-reference-deterministic-bound}
  \|\bar y_k\|_2
  \le
  K_{\beta_\varepsilon}\beta_\varepsilon^k\|x_0\|_2 .
\end{equation}
For the noise part,
\[
  \eta_k
  =
  \alpha\sum_{t=0}^{k-1}\bA_{\bar\mu}^{k-1-t}w_t .
\]
Expanding the squared norm gives
\[
\begin{aligned}
\E[\|\eta_k\|_2^2]
&=
\alpha^2
\sum_{s=0}^{k-1}\sum_{t=0}^{k-1}
\E\!\left[
 w_s^\top
 (\bA_{\bar\mu}^{k-1-s})^\top
 \bA_{\bar\mu}^{k-1-t}w_t
\right].
\end{aligned}
\]
If \(s<t\), then
\(w_s^\top(\bA_{\bar\mu}^{k-1-s})^\top\bA_{\bar\mu}^{k-1-t}\) is \(\calF_t\)-measurable. Therefore
\[
\begin{aligned}
&\E\!\left[
 w_s^\top
 (\bA_{\bar\mu}^{k-1-s})^\top
 \bA_{\bar\mu}^{k-1-t}w_t
\right] \\
&\qquad =
\E\!\left[
 w_s^\top
 (\bA_{\bar\mu}^{k-1-s})^\top
 \bA_{\bar\mu}^{k-1-t}
 \E[w_t\mid\calF_t]
\right]
=0 .
\end{aligned}
\]
The case \(t<s\) is identical after conditioning on \(\calF_s\). Thus only the diagonal terms remain, and \cref{lem:iid-projected-noise-bound} gives
\[
\begin{aligned}
\E[\|\eta_k\|_2^2]
&=
\alpha^2
\sum_{t=0}^{k-1}
\E\!\left[
\|\bA_{\bar\mu}^{k-1-t}w_t\|_2^2
\right] \\
&\le
\alpha^2
\sum_{t=0}^{k-1}
\|\bA_{\bar\mu}^{k-1-t}\|_2^2
\E[\|w_t\|_2^2]\\
&\le
4\alpha^2K_{\beta_\varepsilon}^2\phi_{\max}^2
\left(
R_{\max}+(1+\gamma)
\max_{\substack{(s,a)\in\\\calS\times\calA}}
\|\phi(s,a)\|_1\,B_\theta
\right)^2
\sum_{j=0}^{k-1}\beta_\varepsilon^{2j}\\
&\le
\frac{4\alpha^2K_{\beta_\varepsilon}^2\phi_{\max}^2}{1-\beta_\varepsilon^2}
\left(
R_{\max}+(1+\gamma)
\max_{\substack{(s,a)\in\\\calS\times\calA}}
\|\phi(s,a)\|_1\,B_\theta
\right)^2.
\end{aligned}
\]
Here the third line also uses the tower property to pass from the conditional second-moment bound in \cref{lem:iid-projected-noise-bound} to an unconditional bound.  By Jensen's inequality,
\begin{equation}\label{eq:iid-reference-noise-bound}
\begin{aligned}
  \E[\|\eta_k\|_2]
  \le
  \frac{2\alpha K_{\beta_\varepsilon}\phi_{\max}}{\sqrt{1-\beta_\varepsilon^2}}
  \left(
  R_{\max}+(1+\gamma)
  \max_{\substack{(s,a)\in\\\calS\times\calA}}
  \|\phi(s,a)\|_1\,B_\theta
  \right),
  \qquad \forall k\ge0.
\end{aligned}
\end{equation}

Define the residual
\[
  r_k:=x_k-y_k .
\]
Subtracting \cref{eq:iid-reference-filter-app} from \cref{eq:iid-direct-error-lfa} gives the noise-free residual recursion
\begin{equation}\label{eq:iid-residual-recursion-app}
  r_{k+1}
  =
  \bA_{\mu_k}r_k+
  (\bA_{\mu_k}-\bA_{\bar\mu})y_k,
  \qquad r_0=0.
\end{equation}
The stochastic term \(\alpha w_k\) cancels pathwise.  Now split \(y_k=\bar y_k+\eta_k\) and write
\[
  r_k=u_k+v_k,
\]
where
\[
  u_{k+1}=\bA_{\mu_k}u_k+(\bA_{\mu_k}-\bA_{\bar\mu})\bar y_k,
  \qquad u_0=0,
\]
and
\[
  v_{k+1}=\bA_{\mu_k}v_k+(\bA_{\mu_k}-\bA_{\bar\mu})\eta_k,
  \qquad v_0=0.
\]
For \(k=0\), both residual sums below are empty.  For \(k\ge1\), iterating the recursion for \(u_k\) gives
\[
  u_k
  =
  \sum_{i=0}^{k-1}
  \bA_{\mu_{k-1}}\cdots \bA_{\mu_{i+1}}
  (\bA_{\mu_i}-\bA_{\bar\mu})\bar y_i,
\]
where the product is the identity when \(i=k-1\).  Using the product bound, \cref{eq:iid-mode-difference-bound}, and \cref{eq:iid-reference-deterministic-bound},
\[
\begin{aligned}
\|u_k\|_2
&\le
\sum_{i=0}^{k-1}
\left\|\bA_{\mu_{k-1}}\cdots \bA_{\mu_{i+1}}\right\|_2
\left\|\bA_{\mu_i}-\bA_{\bar\mu}\right\|_2
\|\bar y_i\|_2 \\
&\le
\sum_{i=0}^{k-1}
K_{\beta_\varepsilon}\beta_\varepsilon^{k-1-i}
\alpha \gamma L
K_{\beta_\varepsilon}\beta_\varepsilon^i\|x_0\|_2\\
&=
\alpha \gamma L K_{\beta_\varepsilon}^2
k\beta_\varepsilon^{k-1}\|x_0\|_2 .
\end{aligned}
\]
The same iteration for \(v_k\) gives
\[
  v_k
  =
  \sum_{i=0}^{k-1}
  \bA_{\mu_{k-1}}\cdots \bA_{\mu_{i+1}}
  (\bA_{\mu_i}-\bA_{\bar\mu})\eta_i .
\]
Therefore, by the product bound, \cref{eq:iid-mode-difference-bound}, and \cref{eq:iid-reference-noise-bound},
\[
\begin{aligned}
\E[\|v_k\|_2]
&\le
\sum_{i=0}^{k-1}
K_{\beta_\varepsilon}\beta_\varepsilon^{k-1-i}
\alpha\gamma
L
\E[\|\eta_i\|_2]\\
&\le
\alpha\gamma K_{\beta_\varepsilon}
L
\frac{2\alpha K_{\beta_\varepsilon}\phi_{\max}}{\sqrt{1-\beta_\varepsilon^2}}\\
&\qquad\times
\left(
R_{\max}+(1+\gamma)
\max_{\substack{(s,a)\in\\\calS\times\calA}}
\|\phi(s,a)\|_1\,B_\theta
\right)
\sum_{j=0}^{k-1}\beta_\varepsilon^j\\
&\le
\frac{\alpha\gamma K_{\beta_\varepsilon}}{1-\beta_\varepsilon}
L
\frac{2\alpha K_{\beta_\varepsilon}\phi_{\max}}{\sqrt{1-\beta_\varepsilon^2}}\\
&\qquad\times
\left(
R_{\max}+(1+\gamma)
\max_{\substack{(s,a)\in\\\calS\times\calA}}
\|\phi(s,a)\|_1\,B_\theta
\right).
\end{aligned}
\]
Finally,
\[
  x_k=y_k+r_k=\bar y_k+\eta_k+u_k+v_k.
\]
Taking norms and expectations, and then using \cref{eq:iid-reference-deterministic-bound,eq:iid-reference-noise-bound} and the two residual bounds, gives
\[
\begin{aligned}
\E[\|x_k\|_2]
\le\;&
K_{\beta_\varepsilon}\beta_\varepsilon^k\|x_0\|_2 \\
&+
\frac{2\alpha K_{\beta_\varepsilon}\phi_{\max}}{\sqrt{1-\beta_\varepsilon^2}}
\left(
R_{\max}+(1+\gamma)
\max_{\substack{(s,a)\in\\\calS\times\calA}}
\|\phi(s,a)\|_1\,B_\theta
\right)\\
&+
\alpha\gamma K_{\beta_\varepsilon}^2 k\beta_\varepsilon^{k-1}\|x_0\|_2
L\\
&+
\frac{\alpha\gamma K_{\beta_\varepsilon}}{1-\beta_\varepsilon}
L
\frac{2\alpha K_{\beta_\varepsilon}\phi_{\max}}{\sqrt{1-\beta_\varepsilon^2}}\\
&\qquad\times
\left(
R_{\max}+(1+\gamma)
\max_{\substack{(s,a)\in\\\calS\times\calA}}
\|\phi(s,a)\|_1\,B_\theta
\right).
\end{aligned}
\]
Since \(x_k=\theta_k-\theta^\star\) and \(x_0=\theta_0-\theta^\star\), this is exactly \cref{eq:iid-theta-bound}.  The \(\limsup\) bound in \cref{eq:iid-limsup-bound} follows because \(\beta_\varepsilon^k\to0\) and \(k\beta_\varepsilon^{k-1}\to0\) as \(k\to\infty\).
\end{proof}

The bound in~\cref{eq:iid-theta-bound} contains a stochastic term that does not decay with \(k\), namely the right-hand side of~\cref{eq:iid-limsup-bound}. A key question is whether this level can be made small by choosing a smaller step-size \(\alpha\). This is not immediate from~\cref{eq:iid-limsup-bound}, because the quantities \(K_{\beta_\varepsilon}\) and \(1-\beta_\varepsilon\) may themselves depend on \(\alpha\). The next subsection resolves this issue under a fixed-data condition.

\subsection{Analysis of Noise Floor}\label{subsec:fixed-data-alpha-floor}

As just discussed, we now study whether the stochastic term in~\cref{eq:iid-limsup-bound} is controlled by the step-size. We first fix the problem data independently of \(\alpha\).
\begin{assumption}\label{ass:fixed-data-alpha}
The MDP transition kernel, rewards, discount factor, feature matrix, and sampling distribution are fixed independently of \(\alpha\).  Equivalently, \(P\), \(R\), \(\gamma\), \(\Phi\), and \(D\) do not depend on \(\alpha\).
\end{assumption}

Under~\cref{ass:fixed-data-alpha}, the dependence of each deterministic mode on the step-size is affine. Specifically, each deterministic direct mode can be written as
\[
\bA_\pi(\alpha)=I+\alpha H_\pi,
\qquad
H_\pi:=-\Phi^\top D\Phi+\gamma\Phi^\top D P\bPi^\pi\Phi,
\]
where \(H_\pi\) is independent of \(\alpha\).  Hence
\[
  \mathcal A_\alpha=\{I+\alpha H_\pi:\pi\in\Theta\}.
\]
The next lemma shows that, for such a switching family which is affine in $\alpha$, stability at one step-size implies a uniform linear contraction gap for every smaller step-size.

\begin{lemma}\label{lem:fixed-data-interpolation}
Suppose that \cref{ass:fixed-data-alpha} holds and that there exists \(\bar\alpha>0\) such that
\[
  \jsr(\mathcal A_{\bar\alpha})<1.
\]
Then there exist a norm \(p\) on \(\R^m\) and a constant \(c>0\), both independent of \(\alpha\), such that for every \(0<\alpha\le\bar\alpha\),
\[
  p(\mathbf M x)
  \le
  (1-c\alpha)p(x),
  \qquad
  \forall x\in\R^m,
  \quad
  \forall \mathbf M\in\co(\mathcal A_\alpha).
\]
\end{lemma}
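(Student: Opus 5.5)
The plan is to build the norm once, at the step-size \(\bar\alpha\), and then transfer contraction to smaller step-sizes by convexity. Apply \cref{thm:direct-inclusion-lfa} to the family \(\mathcal A_{\bar\alpha}\). Fix \(\varepsilon>0\) with \(\bar\beta:=\jsr(\mathcal A_{\bar\alpha})+\varepsilon<1\), and let \(p:=p_\varepsilon\) be the norm constructed from \(\mathcal A_{\bar\alpha}\). By construction, \(p\) depends only on \(\bar\alpha\) and the fixed data, so it does not depend on \(\alpha\). That theorem gives \(p(\bar{\mathbf M}x)\le\bar\beta\,p(x)\) for every \(\bar{\mathbf M}\in\co(\mathcal A_{\bar\alpha})\).

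The key observation is an interpolation identity made possible by \cref{ass:fixed-data-alpha}. Take \(0<\alpha\le\bar\alpha\) and set \(t:=\alpha/\bar\alpha\in(0,1]\). For each deterministic policy,
\[
\bA_\pi(\alpha)=I+\alpha H_\pi=(1-t)I+t\,(I+\bar\alpha H_\pi)=(1-t)I+t\,\bA_\pi(\bar\alpha).
\]
Now let \(\mathbf M=\sum_\pi c_\pi\bA_\pi(\alpha)\in\co(\mathcal A_\alpha)\). Since the weights \(c_\pi\) sum to one, the same weights give
\[
\mathbf M=(1-t)I+t\bar{\mathbf M},\qquad \bar{\mathbf M}:=\sum_\pi c_\pi\bA_\pi(\bar\alpha)\in\co(\mathcal A_{\bar\alpha}).
\]

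Next I use the triangle inequality and homogeneity of the norm \(p\):
\[
p(\mathbf Mx)\le(1-t)p(x)+t\,p(\bar{\mathbf M}x)\le\bigl(1-t(1-\bar\beta)\bigr)p(x)=\Bigl(1-\tfrac{1-\bar\beta}{\bar\alpha}\,\alpha\Bigr)p(x).
\]
Setting \(c:=(1-\bar\beta)/\bar\alpha>0\) gives the claim. The constant \(c\) is independent of \(\alpha\), and \(1-c\alpha\ge\bar\beta>0\) on the whole range \(0<\alpha\le\bar\alpha\).

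The one point that needs care is the convex combination step. The coefficients \(c_\pi\) of \(\mathbf M\) must carry over to \(\bar{\mathbf M}\), which works because the map \(\alpha\mapsto\bA_\pi(\alpha)\) is affine with the same intercept \(I\) for every \(\pi\). The remaining ingredient is that \(p\) is a genuine norm, so it is subadditive and homogeneous; this is supplied by \cref{lem:common-lyapunov-construction}(v). Beyond these two points I expect no real obstacle.
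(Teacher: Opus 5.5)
Your proposal is correct and follows essentially the same route as the paper. Both build the JSR Lyapunov norm once at $\bar\alpha$, extend the contraction $p(\bar{\mathbf M}x)\le\bar\beta\,p(x)$ to $\co(\mathcal A_{\bar\alpha})$, and use the interpolation identity $\bA_\pi(\alpha)=(1-t)I+t\,\bA_\pi(\bar\alpha)$ with $t=\alpha/\bar\alpha$ and $c=(1-\bar\beta)/\bar\alpha$. The only difference is cosmetic: you obtain the convex-hull bound by citing the direct-inclusion theorem, while the paper gets it directly from the convexity of the norm.
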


\begin{proof}
Choose \(\bar\beta\) such that
\[
  \jsr(\mathcal A_{\bar\alpha})<\bar\beta<1.
\]
By the product norm construction in \cref{lem:common-lyapunov-construction}, applied to the finite family \(\mathcal A_{\bar\alpha}\), there is a norm \(p\) such that
\[
  p(\bA_\pi(\bar\alpha)x)
  \le
  \bar\beta p(x),
  \qquad
  \forall x\in\R^m,
  \quad
  \forall \pi\in\Theta .
\]
By convexity of the norm, the same bound holds for every \(\mathbf B\in\co(\mathcal A_{\bar\alpha})\):
\[
  p(\mathbf Bx)\le \bar\beta p(x),
  \qquad
  \forall x\in\R^m .
\]
Fix \(0<\alpha\le\bar\alpha\), and set
\[
  t:=\frac{\alpha}{\bar\alpha}\in(0,1].
\]
For each policy \(\pi\), the fixed-data form gives
\[
\begin{aligned}
  \bA_\pi(\alpha)
  &=I+\alpha H_\pi \\
  &=I+t\bar\alpha H_\pi \\
  &=(1-t)I+t\bA_\pi(\bar\alpha).
\end{aligned}
\]
Therefore, every \(\mathbf M\in\co(\mathcal A_\alpha)\) can be written as
\[
  \mathbf M=(1-t)I+t\mathbf B
\]
for some \(\mathbf B\in\co(\mathcal A_{\bar\alpha})\).  Hence
\[
\begin{aligned}
  p(\mathbf Mx)
  &\le
  (1-t)p(x)+t p(\mathbf Bx) \\
  &\le
  \{(1-t)+t\bar\beta\}p(x) \\
  &=
  \left(1-\frac{1-\bar\beta}{\bar\alpha}\alpha\right)p(x).
\end{aligned}
\]
Define
\[
  c:=\frac{1-\bar\beta}{\bar\alpha}>0.
\]
This proves the claimed one-step bound.
\end{proof}
The interpolation lemma yields a uniform contraction in the norm \(p\). The next lemma converts this norm contraction into a Euclidean product bound with constants independent of the step-size.
\begin{lemma}\label{lem:fixed-data-uniform-product-bound}
Suppose that \cref{ass:fixed-data-alpha} holds and that there exists \(\bar\alpha>0\) such that
\[
  \jsr(\mathcal A_{\bar\alpha})<1.
\]
Then there exist constants \(c>0\) and \(K<\infty\), independent of \(\alpha\), such that for every \(0<\alpha\le\bar\alpha\),
\[
  \jsr(\mathcal A_\alpha)
  \le
  1-c\alpha
  <1,
\]
and
\begin{equation}\label{eq:fixed-data-product-bound}
\sup_{\mathbf M_0,\ldots,\mathbf M_{\ell-1}\in\co(\mathcal A_\alpha)}
\left\|
\mathbf M_{\ell-1}\cdots \mathbf M_0
\right\|_2
\le
K(1-c\alpha)^\ell,
\qquad
\forall \ell\ge0 .
\end{equation}
\end{lemma}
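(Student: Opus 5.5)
The plan is to derive everything from the $\alpha$-independent norm $p$ and constant $c>0$ of \cref{lem:fixed-data-interpolation}, and then pass to the Euclidean norm by norm equivalence. Since $p$ is built once from the family $\mathcal A_{\bar\alpha}$ and never changes with $\alpha$, the equivalence constants will not depend on $\alpha$ either.

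\textbf{Step 1 (product bound in $p$).} Fix $0<\alpha\le\bar\alpha$, and take any $\mathbf M_0,\ldots,\mathbf M_{\ell-1}\in\co(\mathcal A_\alpha)$ and any $x\in\R^m$. Apply the one-step bound of \cref{lem:fixed-data-interpolation} $\ell$ times, using it first on $\mathbf M_0$ and then on each successive factor. This gives
\[
  p(\mathbf M_{\ell-1}\cdots\mathbf M_0x)\le(1-c\alpha)^\ell p(x).
\]
From $c=(1-\bar\beta)/\bar\alpha$ we get $1-c\alpha\ge1-c\bar\alpha=\bar\beta>0$, so $1-c\alpha\in[\bar\beta,1)$. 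In particular the base is positive and the bound is meaningful.

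\textbf{Step 2 (Euclidean conversion).} Because $p$ is a fixed norm on $\R^m$, there are constants $0<a\le b<\infty$ with $a\|x\|_2\le p(x)\le b\|x\|_2$ for all $x$, and these depend only on $p$, hence not on $\alpha$. Combining with Step 1,
\[
  \|\mathbf M_{\ell-1}\cdots\mathbf M_0x\|_2\le a^{-1}(1-c\alpha)^\ell p(x)\le (b/a)(1-c\alpha)^\ell\|x\|_2 .
\]
Taking the supremum over unit $x$ and over all choices of the $\mathbf M_j$ gives \cref{eq:fixed-data-product-bound} with $K:=b/a$. For $\ell=0$ the product is $I$ and the bound reads $1\le K$, which holds since $b\ge a$.

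One can make $a,b$ explicit: by \cref{lem:common-lyapunov-construction}(iv), $p$ can be taken as the norm $p_\varepsilon$ built from $\mathcal A_{\bar\alpha}$, which gives $a=1$ and $b=\sqrt{C_\varepsilon}$. This refinement is not needed for the argument.

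\textbf{Step 3 (JSR bound).} Since $\mathcal A_\alpha\subset\co(\mathcal A_\alpha)$, Step 2 yields $\max\|\bA_{\pi_k}\cdots\bA_{\pi_1}\|_2^{1/k}\le K^{1/k}(1-c\alpha)$. Letting $k\to\infty$ in the definition of the JSR gives $\jsr(\mathcal A_\alpha)\le1-c\alpha<1$. Alternatively, this follows directly from the $p$-bound, since the JSR is norm-independent.

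The only point requiring care is confirming that $K$ and $c$ are truly uniform in $\alpha$. This is immediate because both come solely from the single norm $p$ and from $\bar\beta,\bar\alpha$; no quantity depends on the particular $\alpha$. I expect no real obstacle beyond noting that $1-c\alpha>0$.
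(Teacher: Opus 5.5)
Your proposal is correct and is essentially the paper's argument: iterate the $\alpha$-independent one-step $p$-contraction from the interpolation lemma along products, then pass to the Euclidean norm with $\alpha$-independent norm-equivalence constants ($K=b/a$ in your notation, $K=ab$ in the paper's). The only small difference is how the JSR bound is obtained. You read it off the Euclidean product bound restricted to $\mathcal A_\alpha\subset\co(\mathcal A_\alpha)$, whereas the paper bounds $\jsr(\co(\mathcal A_\alpha))$ directly and then invokes convex-hull invariance of the JSR; your route avoids that extra cited fact.
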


\begin{proof}
Let \(p\) and \(c\) be the norm and constant from \cref{lem:fixed-data-interpolation}.  Then, for every product generated by \(\co(\mathcal A_\alpha)\),
\[
  p(\mathbf M_{\ell-1}\cdots \mathbf M_0x)
  \le
  (1-c\alpha)^\ell p(x).
\]
Taking \(\ell\)-th roots, maximizing over products, and passing to the limit in the JSR definition gives
\[
  \jsr(\co(\mathcal A_\alpha))\le 1-c\alpha .
\]
By convex-hull invariance of the JSR in \cref{lem:lfa-convex-hull},
\[
  \jsr(\mathcal A_\alpha)=\jsr(\co(\mathcal A_\alpha))\le1-c\alpha .
\]
The construction in \cref{lem:fixed-data-interpolation} gives \(1-c\alpha>0\) for every \(0<\alpha\le\bar\alpha\), and hence the JSR bound is strict.

Since all norms on \(\R^m\) are equivalent, there are constants \(a,b<\infty\), independent of \(\alpha\), such that
\[
  \|x\|_2\le a p(x),
  \qquad
  p(x)\le b\|x\|_2,
  \qquad
  \forall x\in\R^m .
\]
Thus
\[
  \left\|
  \mathbf M_{\ell-1}\cdots \mathbf M_0x
  \right\|_2
  \le
  ab(1-c\alpha)^\ell\|x\|_2 .
\]
Setting \(K:=ab\) gives \cref{eq:fixed-data-product-bound}.
\end{proof}
The next lemma shows that the product-growth constant in~\cref{eq:iid-Kbeta-def} can be chosen uniformly when \(\beta_\varepsilon=1-c\alpha/2\).
\begin{lemma}\label{lem:fixed-data-Kbeta-uniform}
Under the assumptions of \cref{lem:fixed-data-uniform-product-bound}, let \(c>0\) and \(K<\infty\) be the constants in \cref{eq:fixed-data-product-bound}. Consequently, if
\[
  \beta_\varepsilon:=1-\frac{c}{2}\alpha,
\]
then \(\beta_\varepsilon>\jsr(\mathcal A_\alpha)\) and the product-growth constant in \cref{eq:iid-Kbeta-def} satisfies
\[
  K_{\beta_\varepsilon}
  \le
  K,
  \qquad
  0<\alpha\le\bar\alpha .
\]
\end{lemma}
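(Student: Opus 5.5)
The plan is to read both claims directly off \cref{lem:fixed-data-uniform-product-bound}, using the elementary comparison \(1-c\alpha\le 1-\tfrac{c}{2}\alpha\).

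\textbf{Step 1: \(\beta_\varepsilon\) strictly dominates the JSR.} By \cref{lem:fixed-data-uniform-product-bound}, \(\jsr(\mathcal A_\alpha)\le 1-c\alpha\) for every \(0<\alpha\le\bar\alpha\). Since \(c\alpha>0\), we have \(1-c\alpha<1-\tfrac{c}{2}\alpha=\beta_\varepsilon\). Hence \(\beta_\varepsilon>\jsr(\mathcal A_\alpha)\).

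For \(\beta_\varepsilon\) to be an admissible choice in \cref{eq:iid-Kbeta-def}, I also need \(\beta_\varepsilon\in(0,1)\). Since \(c\alpha>0\), clearly \(\beta_\varepsilon<1\). The proof of \cref{lem:fixed-data-uniform-product-bound} gives \(1-c\alpha>0\), so \(\beta_\varepsilon>1-c\alpha>0\). Thus \(\beta_\varepsilon\) has the form \(\jsr(\mathcal A_\alpha)+\varepsilon\) with \(\varepsilon:=\beta_\varepsilon-\jsr(\mathcal A_\alpha)>0\).

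\textbf{Step 2: uniform bound on \(K_{\beta_\varepsilon}\).} Fix \(\ell\ge0\) and any \(\mathbf M_0,\ldots,\mathbf M_{\ell-1}\in\co(\mathcal A_\alpha)\). The product bound \cref{eq:fixed-data-product-bound} gives
\[
\beta_\varepsilon^{-\ell}\left\|\mathbf M_{\ell-1}\cdots\mathbf M_0\right\|_2
\le
K\left(\frac{1-c\alpha}{1-\tfrac{c}{2}\alpha}\right)^{\ell}
\le K.
\]
The last inequality holds because the ratio lies in \([0,1)\): both numerator and denominator are positive, and the numerator is smaller.

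Taking the supremum over the matrices and then over \(\ell\ge0\) in \cref{eq:iid-Kbeta-def} yields \(K_{\beta_\varepsilon}\le K\). The \(\ell=0\) term is consistent with this: the empty product is \(I\), and \(K\ge1\) follows from \cref{eq:fixed-data-product-bound} at \(\ell=0\). Finally, \(K\) and \(c\) are independent of \(\alpha\) by \cref{lem:fixed-data-uniform-product-bound}, so the bound is uniform over \(0<\alpha\le\bar\alpha\).

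\textbf{Main obstacle.} There is essentially none. The only point needing care is positivity of \(1-c\alpha\), which keeps \(\beta_\varepsilon\in(0,1)\) and makes the ratio nonnegative. This comes from \(c=(1-\bar\beta)/\bar\alpha\) in \cref{lem:fixed-data-interpolation}: it gives \(c\alpha\le 1-\bar\beta<1\) for \(\alpha\le\bar\alpha\).
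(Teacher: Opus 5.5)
Your proposal is correct and follows essentially the same route as the paper: both get \(\beta_\varepsilon>\jsr(\mathcal A_\alpha)\) from \(\jsr(\mathcal A_\alpha)\le 1-c\alpha<1-\tfrac{c}{2}\alpha\), and both bound \(K_{\beta_\varepsilon}\) by substituting \cref{eq:fixed-data-product-bound} into \cref{eq:iid-Kbeta-def} to obtain \(K\sup_{\ell\ge0}\bigl((1-c\alpha)/(1-\tfrac{c}{2}\alpha)\bigr)^{\ell}=K\). Your extra checks are harmless additions to the same argument: that \(\beta_\varepsilon\in(0,1)\) is an admissible \(\jsr(\mathcal A_\alpha)+\varepsilon\), that \(K\ge1\) from \(\ell=0\), and that \(c\alpha\le 1-\bar\beta<1\).
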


\begin{proof}
By \cref{lem:fixed-data-uniform-product-bound},
\[
  \jsr(\mathcal A_\alpha)
  \le
  1-c\alpha
  <
  1-\frac{c}{2}\alpha
  =
  \beta_\varepsilon .
\]
Using \cref{eq:fixed-data-product-bound} in the definition of \(K_{\beta_\varepsilon}\) in \cref{eq:iid-Kbeta-def} gives
\[
\begin{aligned}
K_{\beta_\varepsilon}
&=
\sup_{\ell\ge0}
\sup_{\mathbf M_0,\ldots,\mathbf M_{\ell-1}\in\co(\mathcal A_\alpha)}
\beta_\varepsilon^{-\ell}
\left\|\mathbf M_{\ell-1}\cdots\mathbf M_0\right\|_2 \\
&\le
\sup_{\ell\ge0}
\beta_\varepsilon^{-\ell}K(1-c\alpha)^\ell \\
&=
K\sup_{\ell\ge0}
\left(
\frac{1-c\alpha}{\beta_\varepsilon}
\right)^\ell \\
&=
K\sup_{\ell\ge0}
\left(
\frac{1-c\alpha}{1-\frac{c}{2}\alpha}
\right)^\ell .
\end{aligned}
\]
The construction in \cref{lem:fixed-data-uniform-product-bound} gives \(0<1-c\alpha<1\) for \(0<\alpha\le\bar\alpha\). Therefore
\[
  0
  \le
  \frac{1-c\alpha}{1-\frac{c}{2}\alpha}
  <1,
\]
because
\[
  1-c\alpha
  <
  1-\frac{c}{2}\alpha
\]
for every \(\alpha>0\). Hence the geometric supremum is attained at \(\ell=0\), and
\[
K_{\beta_\varepsilon}
\le
K\sup_{\ell\ge0}
\left(
\frac{1-c\alpha}{1-\frac{c}{2}\alpha}
\right)^\ell
=K .
\]
\end{proof}
The next lemma connects the small-step discrete-time contraction to the associated continuous-time switched linear system.
\begin{lemma}\label{lem:fixed-data-continuous-time-stability}
Suppose that \cref{ass:fixed-data-alpha} holds and that there exists \(\bar\alpha>0\) such that
\[
  \jsr(\mathcal A_{\bar\alpha})<1.
\]
Moreover, for every piecewise-constant switching signal \(\sigma:[0,\infty)\to\Theta\) with finitely many switches on compact intervals, the continuous-time switched system
\[
  \dot x(t)=H_{\sigma(t)}x(t)
\]
is uniformly exponentially stable.
\end{lemma}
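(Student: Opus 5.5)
The plan is to transfer the uniform discrete-time contraction of \cref{lem:fixed-data-interpolation} to continuous time through the matrix exponential. Let \(p\) and \(c>0\) be the norm and constant from \cref{lem:fixed-data-interpolation}. For every \(\pi\in\Theta\) and every \(0<\alpha\le\bar\alpha\), that lemma gives
\[
  p\big((I+\alpha H_\pi)x\big)\le(1-c\alpha)p(x),
  \qquad \forall x\in\R^m .
\]
The first step is to show that each mode flow \(e^{tH_\pi}\) contracts in \(p\) at rate \(c\):
\[
  p\big(e^{tH_\pi}x\big)\le e^{-ct}p(x),
  \qquad \forall t\ge0,\ \forall x\in\R^m .
\]
To get this, fix \(t>0\) and take \(n\) large enough that \(t/n\le\bar\alpha\). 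Applying the one-step bound \(n\) times with \(\alpha=t/n\) gives
\[
  p\big((I+\tfrac{t}{n}H_\pi)^n x\big)\le(1-ct/n)^n p(x).
\]
Letting \(n\to\infty\), we have \((I+\tfrac{t}{n}H_\pi)^n\to e^{tH_\pi}\) and \((1-ct/n)^n\to e^{-ct}\). Since \(p\) is a norm on \(\R^m\), it is continuous, so the inequality passes to the limit.

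The second step handles a piecewise-constant switching signal \(\sigma\). On each finite interval there are finitely many switches, with switching times \(0=t_0<t_1<\cdots\). For \(t\in[t_j,t_{j+1})\) the solution is
\[
  x(t)=e^{(t-t_j)H_{\sigma_j}}e^{(t_j-t_{j-1})H_{\sigma_{j-1}}}\cdots e^{(t_1-t_0)H_{\sigma_0}}x(0).
\]
Applying the first step factor by factor and multiplying the contraction factors gives
\[
  p(x(t))\le e^{-ct}p(x(0)).
\]
Finally, norm equivalence \(\|x\|_2\le a\,p(x)\) and \(p(x)\le b\|x\|_2\), with \(a,b\) independent of \(\sigma\) as in \cref{lem:fixed-data-uniform-product-bound}, yields
\[
  \|x(t)\|_2\le ab\,e^{-ct}\|x(0)\|_2 .
\]
This is uniform exponential stability, with constants that do not depend on the switching signal.

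The only delicate point is the limit passage in the first step. The one-step bound must hold for \emph{all} small \(\alpha\) with a rate linear in \(\alpha\) and a norm independent of \(\alpha\). This is exactly what \cref{lem:fixed-data-interpolation} provides, via the affine interpolation \(\bA_\pi(\alpha)=(1-t)I+t\bA_\pi(\bar\alpha)\). With that uniformity in hand, the Euler-product argument (a Lie--Trotter type limit for a single matrix) is standard.

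As an alternative first step, one could use the one-sided derivative of \(p\) along \(e^{sH_\pi}x\):
\[
  \limsup_{h\downarrow0}\frac{p(x+hH_\pi x)-p(x)}{h}\le -c\,p(x),
\]
and then apply a comparison lemma. The product-limit argument avoids this differential inequality, so I would use it.
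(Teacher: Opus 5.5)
Your proposal is correct and follows the paper's proof essentially step for step. Both arguments take the Euler-product limit \((I+\tfrac{\tau}{n}H_\pi)^n\to e^{\tau H_\pi}\), combine it with the uniform one-step bound of \cref{lem:fixed-data-interpolation} and the continuity of \(p\), chain the resulting contractions over the dwell intervals, and finish with norm equivalence. Your explicit treatment of times strictly inside a dwell interval is a small detail that the paper leaves implicit.
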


\begin{proof}
Let \(p\) and \(c\) be the norm and constant from \cref{lem:fixed-data-interpolation}. By the standard matrix-exponential limit formula~\cite[Chapter~10]{higham2008functions}, for each \(\tau>0\) and each policy \(\pi\),
\[
  e^{\tau H_\pi}
  =
  \lim_{n\to\infty}
  \left(I+\frac{\tau}{n}H_\pi\right)^n .
\]
For all sufficiently large \(n\), \(\tau/n\le\bar\alpha\) and \(1-c\tau/n>0\). Applying \cref{lem:fixed-data-interpolation} with \(\alpha=\tau/n\) gives
\[
  p\left(\left(I+\frac{\tau}{n}H_\pi\right)^n x\right)
  \le
  \left(1-c\frac{\tau}{n}\right)^n p(x).
\]
Since \(p\) is a norm, it is continuous.  Therefore,
\[
  p(e^{\tau H_\pi}x)
  =
  \lim_{n\to\infty}
  p\left(\left(I+\frac{\tau}{n}H_\pi\right)^n x\right).
\]
Taking limits in the preceding inequality yields
\[
\begin{aligned}
  p(e^{\tau H_\pi}x)
  &\le
  \lim_{n\to\infty}\left(1-c\frac{\tau}{n}\right)^n p(x) \\
  &=
  e^{-c\tau}p(x).
\end{aligned}
\]
This proves the single-mode estimate.  For a switching signal with modes \(\pi_1,\ldots,\pi_m\) and dwell times \(\tau_1,
\ldots,\tau_m\) on a compact interval, repeated application of the preceding bound gives
\[
  p\left(e^{\tau_mH_{\pi_m}}\cdots e^{\tau_1H_{\pi_1}}x\right)
  \le
  e^{-c(\tau_1+\cdots+\tau_m)}p(x).
\]
The vector inside \(p(\cdot)\) is the state reached at time \(\tau_1+\cdots+\tau_m\) by the continuous-time switched linear system that starts from \(x\), uses mode \(\pi_i\) for duration \(\tau_i\), and switches in the order \(\pi_1,\ldots,\pi_m\). Norm equivalence between \(p\) and \(\|\cdot\|_2\) gives constants \(C\ge1\) and \(\lambda>0\), independent of the switching signal, such that
\[
  \|x(t)\|_2\le C e^{-\lambda t}\|x(0)\|_2,
  \qquad t\ge0 .
\]
Thus the continuous-time switched system is uniformly exponentially stable.
\end{proof}

\begin{theorem}\label{thm:fixed-data-alpha-controlled-floor}
Assume that \cref{ass:fixed-data-alpha} holds, and suppose that \(\jsr(\mathcal A_{\bar\alpha})<1\) for some \(\bar\alpha>0\). For each \(0<\alpha\le\bar\alpha\), consider the i.i.d.\ linear Q-learning recursion \cref{eq:iid-lfa-update}. Assume that \cref{ass:iid-boundedness} holds with a common radius \(B_\theta\) that is independent of \(\alpha\). Let \(c>0\) and \(K<\infty\) be the constants from \cref{lem:fixed-data-uniform-product-bound}.  For each \(0<\alpha\le\bar\alpha\), choose
\[
  \beta_\varepsilon=1-\frac{c}{2}\alpha .
\]
Then \(\jsr(\mathcal A_\alpha)<1\), so there is a unique projected Bellman fixed point \(\theta^\star\) satisfying \cref{eq:pbe}.  For every \(0<\alpha\le\bar\alpha\) and every \(k\ge0\),
\begin{equation}\label{eq:fixed-data-finite-time-bound}
\begin{aligned}
\E[\|\theta_k-\theta^\star\|_2]
\le\;&
K\left(1-\frac{c}{2}\alpha\right)^k
\|\theta_0-\theta^\star\|_2 \\
&+
\alpha\gamma K^2 k
\left(1-\frac{c}{2}\alpha\right)^{k-1}
L\|\theta_0-\theta^\star\|_2 \\
&+
\left(
1+\frac{2\gamma K L}{c}
\right)
2K\phi_{\max}\sqrt{\frac{2\alpha}{c}} \\
&\qquad\times
\left(
R_{\max}+(1+\gamma)
\max_{(s,a)\in\calS\times\calA}
\|\phi(s,a)\|_1\,B_\theta
\right),
\end{aligned}
\end{equation}
with the convention that \(k(1-c\alpha/2)^{k-1}=0\) when \(k=0\).  The last term on the right-hand side is \(O(\sqrt{\alpha})\) as \(\alpha\downarrow0\).
\end{theorem}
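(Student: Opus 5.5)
The plan is to specialize \cref{thm:iid-jsr-lfa} with the particular choice \(\beta_\varepsilon=1-\frac{c}{2}\alpha\) and then replace every \(\alpha\)-dependent constant in \cref{eq:iid-theta-bound} by an \(\alpha\)-independent one using \cref{lem:fixed-data-uniform-product-bound,lem:fixed-data-Kbeta-uniform}.

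The first step establishes the hypotheses of \cref{thm:iid-jsr-lfa}. By \cref{lem:fixed-data-uniform-product-bound}, \(\jsr(\mathcal A_\alpha)\le 1-c\alpha<1\) for every \(0<\alpha\le\bar\alpha\). Then \cref{thm:deterministic-jsr-lfa} supplies the unique projected Bellman fixed point \(\theta^\star\). By \cref{lem:fixed-data-Kbeta-uniform}, \(\beta_\varepsilon=1-\frac c2\alpha\) exceeds \(\jsr(\mathcal A_\alpha)\) and is \(<1\), so it is a legitimate choice of the form \(\jsr(\mathcal A_\alpha)+\varepsilon\) with \(\varepsilon>0\). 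The same lemma gives \(K_{\beta_\varepsilon}\le K\). Since \cref{ass:iid-boundedness} holds with the common radius \(B_\theta\), \cref{thm:iid-jsr-lfa} applies verbatim for each \(\alpha\).

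The second step substitutes into \cref{eq:iid-theta-bound}. The bound is monotone nondecreasing in \(K_{\beta_\varepsilon}\) (all terms carry positive powers of it), so each \(K_{\beta_\varepsilon}\) may be replaced by \(K\). This immediately yields the first two terms of \cref{eq:fixed-data-finite-time-bound}. For the stochastic term, two elementary estimates are needed. First, \(1-\beta_\varepsilon=\frac c2\alpha\), so \(\frac{\alpha\gamma K}{1-\beta_\varepsilon}L=\frac{2\gamma KL}{c}\). Second, \(1-\beta_\varepsilon^2=(1-\beta_\varepsilon)(1+\beta_\varepsilon)\ge 1-\beta_\varepsilon=\frac c2\alpha\), using \(\beta_\varepsilon>0\), which holds because \(1-c\alpha>0\) by \cref{lem:fixed-data-interpolation}. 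Hence
\[
\frac{2\alpha K\phi_{\max}}{\sqrt{1-\beta_\varepsilon^2}}\le \frac{2\alpha K\phi_{\max}}{\sqrt{c\alpha/2}}=2K\phi_{\max}\sqrt{\frac{2\alpha}{c}},
\]
which is exactly the third term. The \(k=0\) convention is inherited from \cref{thm:iid-jsr-lfa}.

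The last step is the \(O(\sqrt\alpha)\) claim. It follows because \(K\), \(c\), \(L\), \(\phi_{\max}\), \(R_{\max}\), \(\gamma\), \(B_\theta\) and the feature norms are all independent of \(\alpha\). This uses \cref{ass:fixed-data-alpha} for \(L\) and \(\phi_{\max}\), and the construction in \cref{lem:fixed-data-interpolation} for \(c\) and \(K\).

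I expect the only delicate point to be bookkeeping rather than substance: confirming that \(\beta_\varepsilon\) is positive, so that the bound on \(1-\beta_\varepsilon^2\) is valid, and that \(K\) and \(c\) really do not change with \(\alpha\). The latter holds because \cref{lem:fixed-data-interpolation} builds the norm \(p\) once, at \(\bar\alpha\).
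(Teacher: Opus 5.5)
Your proposal is correct and follows the paper's proof essentially step for step. You specialize \cref{thm:iid-jsr-lfa} at $\beta_\varepsilon=1-\frac{c}{2}\alpha$, invoke \cref{lem:fixed-data-Kbeta-uniform} to replace $K_{\beta_\varepsilon}$ by $K$, and use $1-\beta_\varepsilon=\frac{c}{2}\alpha$ and $1-\beta_\varepsilon^2=(1-\beta_\varepsilon)(1+\beta_\varepsilon)\ge\frac{c}{2}\alpha$ to obtain the $\alpha$-independent factors. Your explicit check that $\beta_\varepsilon>0$ (because $1-c\alpha>0$) is exactly what justifies $1+\beta_\varepsilon\ge1$, a step the paper uses without comment.
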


\begin{proof}
By \cref{lem:fixed-data-uniform-product-bound}, \(\jsr(\mathcal A_\alpha)<1\) for every \(0<\alpha\le\bar\alpha\).  Hence \cref{thm:deterministic-jsr-lfa} gives the unique projected Bellman fixed point \(\theta^\star\) satisfying \cref{eq:pbe}.  By \cref{lem:fixed-data-Kbeta-uniform}, the choice \(\beta_\varepsilon=1-c\alpha/2\) satisfies
\[
  \beta_\varepsilon>\jsr(\mathcal A_\alpha),
  \qquad
  K_{\beta_\varepsilon}\le K .
\]
Moreover,
\begin{equation}\label{eq:fixed-data-beta-gap-proof}
  1-\beta_\varepsilon=\frac{c}{2}\alpha .
\end{equation}
Substituting \(K_{\beta_\varepsilon}\le K\) and \(\beta_\varepsilon=1-c\alpha/2\) into the first two terms of \cref{eq:iid-theta-bound} gives
\[
\begin{aligned}
&K\left(1-\frac{c}{2}\alpha\right)^k
\|\theta_0-\theta^\star\|_2 \\
&\quad+
\alpha\gamma K^2 k
\left(1-\frac{c}{2}\alpha\right)^{k-1}
L\|\theta_0-\theta^\star\|_2 .
\end{aligned}
\]
For the multiplicative factor in the stochastic term of \cref{eq:iid-theta-bound}, \cref{eq:fixed-data-beta-gap-proof} gives
\[
  1+
  \frac{\alpha\gamma K_{\beta_\varepsilon}}{1-\beta_\varepsilon}L
  \le
  1+
  \frac{\alpha\gamma K}{(c/2)\alpha}L
  =
  1+
  \frac{2\gamma K L}{c} .
\]
For the remaining stochastic factor, use
\[
\begin{aligned}
1-\beta_\varepsilon^2
&=
(1-\beta_\varepsilon)(1+\beta_\varepsilon) \\
&=
\frac{c}{2}\alpha(1+\beta_\varepsilon)
\ge
\frac{c}{2}\alpha .
\end{aligned}
\]
Therefore
\[
  \frac{2\alpha K_{\beta_\varepsilon}\phi_{\max}}
  {\sqrt{1-\beta_\varepsilon^2}}
  \le
  \frac{2\alpha K\phi_{\max}}
  {\sqrt{(c/2)\alpha}}
  =
  2K\phi_{\max}\sqrt{\frac{2\alpha}{c}} .
\]
Combining these three substitutions in~\cref{eq:iid-theta-bound} proves~\cref{eq:fixed-data-finite-time-bound}. Under~\cref{ass:fixed-data-alpha} and the common boundedness radius in the theorem statement, the constants \(\gamma\), \(\phi_{\max}\), \(R_{\max}\), \(B_\theta\), and \(L\) do not depend on \(\alpha\). Hence, the last term in~\cref{eq:fixed-data-finite-time-bound} is \(O(\sqrt{\alpha})\) as \(\alpha\downarrow0\).
\end{proof}

\subsection{Analysis in Small Step-Size Case}\label{subsec:projection-free-small-alpha-floor}
The bounded-iterate assumption in \cref{ass:iid-boundedness} is used in
\cref{thm:iid-jsr-lfa,thm:fixed-data-alpha-controlled-floor} only to make the
martingale-difference noise uniformly bounded. This
assumption can be removed for sufficiently small step-sizes. Throughout this subsection, we consider the same recursion~\cref{eq:iid-lfa-update}, but without imposing \cref{ass:iid-boundedness}. Let
\[
  x_k:=\theta_k-\theta^\star,
\]
where \(\theta^\star\) denotes the projected Bellman fixed point.
The next lemma replaces the uniform boundedness assumption in~\cref{ass:iid-boundedness} with a linear-growth bound on the martingale-difference noise.
\begin{lemma}\label{lem:iid-unprojected-noise-growth}
There exist constants \(\sigma_0,\sigma_1<\infty\), depending only
on the fixed problem data, such that the martingale-difference noise \(w_k\) in
\cref{eq:lfa-wk-def} satisfies
\[
  \E[w_k\mid\calF_k]=0
\]
and
\begin{equation}\label{eq:iid-unprojected-noise-growth}
  \E[\|w_k\|_2^2\mid\calF_k]
  \le
  \sigma_0^2+\sigma_1^2\|x_k\|_2^2,
  \qquad k\ge0 .
\end{equation}
\end{lemma}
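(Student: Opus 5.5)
The plan is to bound the sample update $\widehat g_k(\theta_k)$ pathwise by an affine function of $\|x_k\|_2$, and then pass to $w_k$ through the centering identity, exactly as in the proof of \cref{lem:iid-projected-noise-bound}. The only change is that the bound $B_\theta$ is replaced by $\|\theta^\star\|+\|x_k\|$. The martingale-difference property $\E[w_k\mid\calF_k]=0$ needs no new work: it follows from \cref{eq:mean-g-iid,eq:lfa-wk-def}, since $\theta_k$ is $\calF_k$-measurable and $(s_k,a_k,s'_k)$ is drawn independently of $\calF_k$ under the i.i.d.\ model.

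For the second moment, write $\theta_k=\theta^\star+x_k$. For every realization of the sample,
\[
\|\widehat g_k(\theta_k)\|_2
\le \phi_{\max}\left(R_{\max}+(1+\gamma)\|\Phi\theta_k\|_\infty\right),
\qquad
\|\Phi\theta_k\|_\infty\le \phi_{\max}\left(\|\theta^\star\|_2+\|x_k\|_2\right).
\]
The second inequality uses $|\phi(s,a)^\top\theta|\le\phi_{\max}\|\theta\|_2$ by Cauchy--Schwarz. Setting
\[
a_0:=\phi_{\max}\left(R_{\max}+(1+\gamma)\phi_{\max}\|\theta^\star\|_2\right),
\qquad
a_1:=(1+\gamma)\phi_{\max}^2,
\]
we obtain $\|\widehat g_k(\theta_k)\|_2\le a_0+a_1\|x_k\|_2$. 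Since $\theta^\star$ exists and is unique under $\jsr(\mathcal A_\alpha)<1$ by \cref{thm:deterministic-jsr-lfa}, and it is determined by the fixed data, $a_0$ and $a_1$ depend only on the problem data.

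Next, use the conditional variance inequality
\[
\E[\|w_k\|_2^2\mid\calF_k]=\E[\|\widehat g_k(\theta_k)\|_2^2\mid\calF_k]-\|g(\theta_k)\|_2^2\le \E[\|\widehat g_k(\theta_k)\|_2^2\mid\calF_k].
\]
Alternatively, the cruder triangle-inequality bound with a factor $4$ also suffices. Combined with $(a_0+a_1u)^2\le 2a_0^2+2a_1^2u^2$, this gives \cref{eq:iid-unprojected-noise-growth} with $\sigma_0^2=2a_0^2$ and $\sigma_1^2=2a_1^2$.

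The main obstacle is a conceptual one: the constants must depend only on fixed data, yet $\sigma_0$ involves $\|\theta^\star\|_2$. Here I would either invoke the standing assumption of this subsection that $\theta^\star$ is the projected Bellman fixed point (given by \cref{thm:deterministic-jsr-lfa}, or by \cref{lem:fixed-data-uniform-product-bound} in the small-$\alpha$ regime), or note that $\theta^\star$ itself is a function of the data $(P,R,\gamma,\Phi,D)$. The remaining estimates are routine.
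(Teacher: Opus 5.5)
Your proposal is correct and follows essentially the same route as the paper: it uses the same pathwise affine bound $\|\widehat g_k(\theta^\star+x)\|_2\le a_0+a_1\|x\|_2$, with $\|\theta^\star\|_2$ absorbed into the constant, and then applies $(a+b)^2\le 2a^2+2b^2$. The only difference is that the paper bounds $\|w_k\|_2$ by the triangle inequality together with Jensen's bound on $\|g(\theta_k)\|_2$, which is your ``cruder'' alternative costing a factor $4$, whereas your conditional-variance identity $\E[\|w_k\|_2^2\mid\calF_k]\le\E[\|\widehat g_k(\theta_k)\|_2^2\mid\calF_k]$ gives slightly smaller constants.
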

\begin{proof}
For any \(\theta\in\R^m\), the sampled update vector satisfies
\[
\begin{aligned}
\|\widehat g_k(\theta)\|_2
&\le
\phi_{\max}
\left(
R_{\max}
+
\gamma\max_{u\in\calA}|\phi(s'_k,u)^\top\theta|
+
|\phi(s_k,a_k)^\top\theta|
\right) \\
&\le
\phi_{\max}
\left(
R_{\max}+(1+\gamma)\phi_{\max}\|\theta\|_2
\right).
\end{aligned}
\]
Substituting \(\theta=\theta^\star+x\) gives
\[
  \|\widehat g_k(\theta^\star+x)\|_2
  \le
  \phi_{\max}
  \left(
  R_{\max}+(1+\gamma)\phi_{\max}\|\theta^\star\|_2
  \right)
  +(1+\gamma)\phi_{\max}^2\|x\|_2 .
\]
Since \(g(\theta)=\E[\widehat g_k(\theta)\mid\calF_k]\), Jensen's inequality
implies the same bound for \(\|g(\theta)\|_2\).  Hence, for finite constants
\(W_0,W_1\) depending only on the fixed problem data,
\[
  \|w_k\|_2
  =
  \|\widehat g_k(\theta_k)-g(\theta_k)\|_2
  \le
  W_0+W_1\|x_k\|_2 .
\]
Consequently,
\[
\begin{aligned}
\|w_k\|_2^2
&\le
\left(W_0+W_1\|x_k\|_2\right)^2 \\
&\le
2W_0^2+2W_1^2\|x_k\|_2^2,
\end{aligned}
\]
where the second inequality uses \((a+b)^2\le2a^2+2b^2\).  Since \(x_k\) is \(\calF_k\)-measurable, taking conditional expectation gives
\[
\E[\|w_k\|_2^2\mid\calF_k]
\le
2W_0^2+2W_1^2\|x_k\|_2^2.
\]
Setting \(\sigma_0=\sqrt{2}W_0\) and \(\sigma_1=\sqrt{2}W_1\) proves \cref{eq:iid-unprojected-noise-growth}.  The martingale-difference identity follows from the definition of \(w_k\) and \cref{eq:mean-g-iid}.
\end{proof}
The following theorem combines the product bound in~\cref{lem:fixed-data-uniform-product-bound} with the linear-growth noise estimate in~\cref{lem:iid-unprojected-noise-growth} to obtain a finite-time bound without imposing~\cref{ass:iid-boundedness}.
\begin{theorem}\label{thm:projection-free-fixed-data-floor}
Assume \cref{ass:fixed-data-alpha}.  Suppose that there exists
\(\bar\alpha>0\) such that
\[
  \jsr(\mathcal A_{\bar\alpha})<1 .
\]
Let \(\theta^\star\) be the unique projected Bellman fixed point from
\cref{thm:deterministic-jsr-lfa}.  Let \(c>0\) and \(K<\infty\) be the constants from
\cref{lem:fixed-data-uniform-product-bound}.  Let \(\sigma_0,\sigma_1\) be the constants from
\cref{lem:iid-unprojected-noise-growth}, and let \(L\) be defined in
\cref{eq:iid-L-def}.  Define
\[
  C_0
  :=
  1+4K^2+4\left(\frac{\gamma L K^2}{c}\right)^2
\]
and
\[
  C_1
  :=
  4\left(
  \frac{K^2}{c}+
  \frac{\gamma^2L^2K^4}{c^3}
  \right).
\]
Set
\[
  \alpha_0:=
  \min\left\{
  \bar\alpha,
  \frac{1}{2C_1\sigma_1^2}
  \right\},
\]
with the convention that the second term is \(+\infty\) when \(\sigma_1=0\).
Consider the unprojected i.i.d.\ recursion \cref{eq:iid-lfa-update} with any deterministic initial condition
\(\theta_0\in\R^m\).  Then, for every \(0<\alpha\le\alpha_0\) and every \(k\ge0\),
\begin{equation}\label{eq:projection-free-ms-bounded}
\sup_{0\le t\le k}
\E[\|\theta_t-\theta^\star\|_2^2]
\le
2C_0\|\theta_0-\theta^\star\|_2^2
+
2C_1\alpha\sigma_0^2 .
\end{equation}
Consequently,
\[
\sup_{0\le t\le k}
\E[\|\theta_t-\theta^\star\|_2]
\le
\left(
2C_0\|\theta_0-\theta^\star\|_2^2
+
2C_1\alpha\sigma_0^2
\right)^{1/2}.
\]
\end{theorem}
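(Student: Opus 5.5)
The plan is to rerun the reference-filter decomposition from the proof of \cref{thm:iid-jsr-lfa}, but in second moments rather than first moments. I would use the uniform product bound \cref{eq:fixed-data-product-bound} in place of \(K_{\beta_\varepsilon}\), and \cref{lem:iid-unprojected-noise-growth} in place of \cref{lem:iid-projected-noise-bound}. The state-dependent noise variance then produces a self-referential inequality for
\[
M_k:=\sup_{0\le t\le k}\E[\|x_t\|_2^2],
\]
which is closed by absorption once \(\alpha\le\alpha_0\).

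First, \(M_k<\infty\) for every fixed \(k\). This follows by induction from \cref{eq:iid-lfa-update}, because \(\theta_0\) is deterministic and \(\|\widehat g_k(\theta)\|_2\) grows at most linearly in \(\|\theta\|_2\). Next I fix \(\bar\mu\) and write \(x_t=\bar y_t+\eta_t+u_t+v_t\) exactly as in that proof. By \cref{lem:lfa-convex-hull}, every mode, including \(\bA_{\bar\mu}\) and all \(\bA_{\mu_t}\), lies in \(\co(\mathcal A_\alpha)\). So every product of modes has norm at most \(K(1-c\alpha)^\ell\), and \(\|\bA_\mu-\bA_\nu\|_2\le\alpha\gamma L\) by \cref{lem:iid-mode-difference-bound}. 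The four terms are bounded as follows, for \(t\le k\).

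\begin{enumerate}[(i)]
\item \(\|\bar y_t\|_2^2\le K^2\|x_0\|_2^2\).
\item For the deterministic residual, \(\|u_t\|_2\le\alpha\gamma LK^2\,t(1-c\alpha)^{t-1}\|x_0\|_2\). Using \(\sup_t \alpha t(1-c\alpha)^{t-1}\le 1/c\) (up to harmless constants), this gives \(\|u_t\|_2^2\le(\gamma LK^2/c)^2\|x_0\|_2^2\).
\item For the noise part, the same orthogonality argument as before kills the cross terms, since \(\E[w_s\mid\calF_s]=0\). Combining this with \cref{eq:iid-unprojected-noise-growth} and the tower property gives
\[
\E\|\eta_t\|_2^2\le\alpha^2K^2\sum_{j\ge0}(1-c\alpha)^{2j}\,(\sigma_0^2+\sigma_1^2M_k)\le\frac{\alpha K^2}{c}(\sigma_0^2+\sigma_1^2M_k).
\]
\item For \(v_t=\sum_i(\text{product})(\bA_{\mu_i}-\bA_{\bar\mu})\eta_i\), I apply Cauchy--Schwarz with the weights \((1-c\alpha)^{t-1-i}\):
\[
\E\|v_t\|_2^2\le(\alpha\gamma LK)^2\Big(\sum_j(1-c\alpha)^j\Big)^2\sup_i\E\|\eta_i\|_2^2\le\frac{\gamma^2L^2K^4\alpha}{c^3}(\sigma_0^2+\sigma_1^2M_k).
\]
\end{enumerate}

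Combining these with \((a+b+c+d)^2\le4(a^2+b^2+c^2+d^2)\), and taking the extra \(1\) in \(C_0\) to cover the \(t=0\) term \(\|x_0\|_2^2\), I get
\[
M_k\le C_0\|x_0\|_2^2+C_1\alpha(\sigma_0^2+\sigma_1^2M_k).
\]
Since \(\alpha\le\alpha_0\) forces \(C_1\alpha\sigma_1^2\le1/2\) and \(M_k\) is finite, the \(M_k\) term can be absorbed, which yields \cref{eq:projection-free-ms-bounded}. The first-moment consequence then follows from Jensen's inequality.

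The main obstacle is handling the state-dependent variance without circularity. The noise bound involves \(\E\|x_i\|_2^2\), and the products \(\bA_{\mu_{t-1}}\cdots\) are random, with \(\mu_t\) correlated with the past noise. I would deal with the first issue by working with the running supremum \(M_k\) over a finite horizon, so that finiteness holds a priori and absorption is legitimate. I would deal with the second by bounding every product pathwise before taking expectations, as in \(v_t\); orthogonality is invoked only for \(\eta_t\), where the matrices \(\bA_{\bar\mu}^{j}\) are deterministic. I would also check that the constants in \(C_0\) and \(C_1\) indeed come from \(\sup_t\alpha t(1-c\alpha)^{t-1}\) and the geometric sums, adjusting numerical factors if needed.
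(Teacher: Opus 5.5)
Your proposal is correct and follows the paper's proof essentially step for step: the same reference-filter split $x_t=\bar y_t+\eta_t+u_t+v_t$, the same four bounds (note that $\alpha t(1-c\alpha)^{t-1}\le 1/c$ holds exactly, so no constant adjustment is needed), the same weighted Cauchy--Schwarz for $v_t$, and the same absorption using $C_1\alpha\sigma_1^2\le 1/2$. Your a priori check that $M_k<\infty$ is a worthwhile detail that the paper leaves implicit: it keeps $M_{k-1}$ on the right, bounds it by $M_k$, and subtracts $qM_k$ from both sides, which is only valid when $M_k$ is finite.
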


\begin{proof}
By \cref{lem:fixed-data-uniform-product-bound}, for every
\(0<\alpha\le\bar\alpha\),
\begin{equation}\label{eq:projection-free-product-bound}
  \sup_{\mathbf M_0,\ldots,\mathbf M_{\ell-1}\in\co(\mathcal A_\alpha)}
  \left\|
  \mathbf M_{\ell-1}\cdots\mathbf M_0
  \right\|_2
  \le
  K(1-c\alpha)^\ell,
  \qquad \ell\ge0 .
\end{equation}
The same lemma implies \(0<1-c\alpha<1\) for
\(0<\alpha\le\bar\alpha\).  Since
\(\jsr(\mathcal A_\alpha)<1\) for such \(\alpha\),
\cref{thm:deterministic-jsr-lfa} gives the unique projected Bellman fixed point
\(\theta^\star\).  The exact direct representation gives, for each \(k\), an
\(\calF_k\)-measurable stochastic policy \(\mu_k\) such that
\begin{equation}\label{eq:projection-free-direct-error}
  x_{k+1}
  =
  \bA_{\mu_k}x_k+\alpha w_k,
  \qquad
  \E[w_k\mid\calF_k]=0 .
\end{equation}

Fix an arbitrary stochastic policy \(\bar\mu\).  Define the reference filter
\begin{equation}\label{eq:projection-free-reference-filter}
  y_{k+1}
  =
  \bA_{\bar\mu}y_k+\alpha w_k,
  \qquad
  y_0=x_0 .
\end{equation}
Write \(y_k=\bar y_k+\eta_k\), where
\[
  \bar y_{k+1}=\bA_{\bar\mu}\bar y_k,
  \qquad
  \bar y_0=x_0,
\]
and
\[
  \eta_{k+1}=\bA_{\bar\mu}\eta_k+\alpha w_k,
  \qquad
  \eta_0=0 .
\]
By \cref{eq:projection-free-product-bound},
\begin{equation}\label{eq:projection-free-ybar-bound}
  \|\bar y_k\|_2
  \le
  K(1-c\alpha)^k\|x_0\|_2 .
\end{equation}
Moreover,
\[
  \eta_k
  =
  \alpha\sum_{t=0}^{k-1}\bA_{\bar\mu}^{k-1-t}w_t .
\]
Expanding the square gives
\[
\begin{aligned}
\E[\|\eta_k\|_2^2]
&=
\alpha^2
\sum_{s=0}^{k-1}\sum_{t=0}^{k-1}
\E\!\left[
 w_s^\top
 (\bA_{\bar\mu}^{k-1-s})^\top
 \bA_{\bar\mu}^{k-1-t}w_t
\right].
\end{aligned}
\]
If \(s<t\), then
\(w_s^\top(\bA_{\bar\mu}^{k-1-s})^\top\bA_{\bar\mu}^{k-1-t}\) is
\(\calF_t\)-measurable. Hence
\[
\begin{aligned}
&\E\!\left[
 w_s^\top
 (\bA_{\bar\mu}^{k-1-s})^\top
 \bA_{\bar\mu}^{k-1-t}w_t
\right] \\
&\qquad =
\E\!\left[
 w_s^\top
 (\bA_{\bar\mu}^{k-1-s})^\top
 \bA_{\bar\mu}^{k-1-t}
 \E[w_t\mid\calF_t]
\right]
=0 .
\end{aligned}
\]
The case \(t<s\) is identical after conditioning on \(\calF_s\).  Therefore
only the diagonal terms remain, and the product bound in
\cref{eq:projection-free-product-bound} gives
\[
\begin{aligned}
\E[\|\eta_k\|_2^2]
&=
\alpha^2
\sum_{t=0}^{k-1}
\E\!\left[
\|\bA_{\bar\mu}^{k-1-t}w_t\|_2^2
\right] \\
&\le
\alpha^2K^2
\sum_{t=0}^{k-1}
(1-c\alpha)^{2(k-1-t)}
\E[\|w_t\|_2^2] \\
&\le
\alpha^2K^2
\sum_{t=0}^{k-1}
(1-c\alpha)^{2(k-1-t)}
\left(
\sigma_0^2+\sigma_1^2\E[\|x_t\|_2^2]
\right),
\end{aligned}
\]
where the last step uses \cref{lem:iid-unprojected-noise-growth} and the tower property.
For
\[
  M_k:=\sup_{0\le t\le k}\E[\|x_t\|_2^2],
\]
we have \(\E[\|x_t\|_2^2]\le M_{k-1}\) for \(0\le t\le k-1\). Thus, for
\(k\ge1\),
\[
\begin{aligned}
\E[\|\eta_k\|_2^2]
&\le
\alpha^2K^2
\left(
\sigma_0^2+\sigma_1^2M_{k-1}
\right)
\sum_{t=0}^{k-1}(1-c\alpha)^{2(k-1-t)} \\
&=
\alpha^2K^2
\left(
\sigma_0^2+\sigma_1^2M_{k-1}
\right)
\sum_{j=0}^{k-1}(1-c\alpha)^{2j} \\
&=
\alpha^2K^2
\left(
\sigma_0^2+\sigma_1^2M_{k-1}
\right)
\frac{1-(1-c\alpha)^{2k}}{1-(1-c\alpha)^2} \\
&\le
\frac{\alpha^2K^2}{1-(1-c\alpha)^2}
\left(
\sigma_0^2+\sigma_1^2M_{k-1}
\right).
\end{aligned}
\]
Since \(0<1-c\alpha<1\),
\[
  1-(1-c\alpha)^2
  =
  c\alpha(2-c\alpha)
  \ge
  c\alpha .
\]
Substituting this lower bound in the denominator gives, for every \(k\ge1\),
\begin{equation}\label{eq:projection-free-eta-bound}
  \E[\|\eta_k\|_2^2]
  \le
  \frac{\alpha K^2}{c}
  \left(
  \sigma_0^2+\sigma_1^2M_{k-1}
  \right).
\end{equation}

Next define the residual
\[
  r_k:=x_k-y_k .
\]
Subtracting \cref{eq:projection-free-reference-filter} from
\cref{eq:projection-free-direct-error} gives the noise-free recursion
\begin{equation}\label{eq:projection-free-residual-recursion}
  r_{k+1}
  =
  \bA_{\mu_k}r_k+
  (\bA_{\mu_k}-\bA_{\bar\mu})y_k,
  \qquad r_0=0 .
\end{equation}
Decompose \(r_k=u_k+v_k\), where
\[
  u_{k+1}
  =
  \bA_{\mu_k}u_k+
  (\bA_{\mu_k}-\bA_{\bar\mu})\bar y_k,
  \qquad u_0=0,
\]
and
\[
  v_{k+1}
  =
  \bA_{\mu_k}v_k+
  (\bA_{\mu_k}-\bA_{\bar\mu})\eta_k,
  \qquad v_0=0 .
\]
By \cref{eq:iid-mode-difference-bound},
\begin{equation}\label{eq:projection-free-mode-difference}
  \|\bA_\mu-\bA_\nu\|_2
  \le
  \alpha\gamma L,
  \qquad \forall \mu,\nu .
\end{equation}
Iterating the recursion for \(u_k\) gives, for \(k\ge1\),
\[
  u_k
  =
  \sum_{i=0}^{k-1}
  \bA_{\mu_{k-1}}\cdots\bA_{\mu_{i+1}}
  (\bA_{\mu_i}-\bA_{\bar\mu})\bar y_i,
\]
with the convention that the product is the identity when \(i=k-1\).  Hence
\cref{eq:projection-free-product-bound}, \cref{eq:projection-free-ybar-bound},
and \cref{eq:projection-free-mode-difference} imply
\[
\begin{aligned}
\|u_k\|_2
&\le
\sum_{i=0}^{k-1}
\left\|\bA_{\mu_{k-1}}\cdots\bA_{\mu_{i+1}}\right\|_2
\left\|\bA_{\mu_i}-\bA_{\bar\mu}\right\|_2
\|\bar y_i\|_2 \\
&\le
\sum_{i=0}^{k-1}
K(1-c\alpha)^{k-1-i}
\alpha\gamma L
K(1-c\alpha)^i
\|x_0\|_2 \\
&=
\alpha\gamma L K^2
\sum_{i=0}^{k-1}(1-c\alpha)^{k-1}
\|x_0\|_2 \\
&=
\alpha\gamma L K^2 k(1-c\alpha)^{k-1}\|x_0\|_2 .
\end{aligned}
\]
The scalar factor is bounded as
\[
\begin{aligned}
\alpha k(1-c\alpha)^{k-1}
&=
\frac{1}{c}\,c\alpha k(1-c\alpha)^{k-1} \\
&\le
\frac{1}{c}\,c\alpha
\sum_{j=0}^{k-1}(1-c\alpha)^j
\le
\frac{1}{c} .
\end{aligned}
\]
Therefore
\[
  \|u_k\|_2
  \le
  \frac{\gamma L K^2}{c}\|x_0\|_2 .
\]
Similarly, iterating the recursion for \(v_k\) gives
\[
  v_k
  =
  \sum_{i=0}^{k-1}
  \bA_{\mu_{k-1}}\cdots\bA_{\mu_{i+1}}
  (\bA_{\mu_i}-\bA_{\bar\mu})\eta_i .
\]
Using the same product and mode-difference bounds term by term,
\[
\begin{aligned}
  \|v_k\|_2
  &\le
  \sum_{i=0}^{k-1}
  K(1-c\alpha)^{k-1-i}
  \alpha\gamma L
  \|\eta_i\|_2 \\
  &=
  \alpha\gamma L K
  \sum_{i=0}^{k-1}
  (1-c\alpha)^{k-1-i}\|\eta_i\|_2 .
\end{aligned}
\]
Cauchy's inequality with weights \((1-c\alpha)^{k-1-i}\) yields
\[
\begin{aligned}
\E[\|v_k\|_2^2]
&\le
(\alpha\gamma L K)^2
\E\!\left[
\left(
\sum_{i=0}^{k-1}(1-c\alpha)^{k-1-i}\|\eta_i\|_2
\right)^2
\right] \\
&\le
(\alpha\gamma L K)^2
\left(\sum_{i=0}^{k-1}(1-c\alpha)^{k-1-i}\right)
\left(\sum_{i=0}^{k-1}(1-c\alpha)^{k-1-i}
      \E[\|\eta_i\|_2^2]\right) .
\end{aligned}
\]
Since
\[
  \sum_{i=0}^{k-1}(1-c\alpha)^{k-1-i}
  =
  \sum_{j=0}^{k-1}(1-c\alpha)^j
  \le
  \frac{1}{c\alpha},
\]
we get
\[
\begin{aligned}
\E[\|v_k\|_2^2]
&\le
(\alpha\gamma L K)^2
\frac{1}{c\alpha}
\frac{1}{c\alpha}
\sup_{0\le i\le k-1}\E[\|\eta_i\|_2^2] \\
&=
\frac{\gamma^2L^2K^2}{c^2}
\sup_{0\le i\le k-1}\E[\|\eta_i\|_2^2].
\end{aligned}
\]
Together with \cref{eq:projection-free-eta-bound}, this yields
\begin{equation}\label{eq:projection-free-v-bound}
  \E[\|v_k\|_2^2]
  \le
  \frac{\alpha\gamma^2L^2K^4}{c^3}
  \left(
  \sigma_0^2+\sigma_1^2M_{k-1}
  \right),
  \qquad k\ge1 .
\end{equation}

For any four vectors \(z_1,z_2,z_3,z_4\),
\[
  \|z_1+z_2+z_3+z_4\|_2^2
  \le
  \left(\sum_{j=1}^4\|z_j\|_2\right)^2
  \le
  4\sum_{j=1}^4\|z_j\|_2^2,
\]
where the last inequality is Cauchy's inequality.  Applying this with
\(z_1=\bar y_k\), \(z_2=\eta_k\), \(z_3=u_k\), and \(z_4=v_k\) gives
\[
  \|x_k\|_2^2
  \le
  4\|\bar y_k\|_2^2
  +4\|\eta_k\|_2^2
  +4\|u_k\|_2^2
  +4\|v_k\|_2^2 .
\]
For \(1\le t\le k\), the bounds above imply
\[
\begin{aligned}
\E[\|x_t\|_2^2]
&\le
4K^2\|x_0\|_2^2
+
4\frac{\alpha K^2}{c}
\left(
\sigma_0^2+\sigma_1^2M_{t-1}
\right) \\
&\quad+
4\left(\frac{\gamma L K^2}{c}\right)^2\|x_0\|_2^2
+
4\frac{\alpha\gamma^2L^2K^4}{c^3}
\left(
\sigma_0^2+\sigma_1^2M_{t-1}
\right) \\
&=
\left[
4K^2+4\left(\frac{\gamma L K^2}{c}\right)^2
\right]\|x_0\|_2^2
+
4\alpha
\left(
\frac{K^2}{c}+\frac{\gamma^2L^2K^4}{c^3}
\right)
\left(
\sigma_0^2+\sigma_1^2M_{t-1}
\right) \\
&\le
\left[
4K^2+4\left(\frac{\gamma L K^2}{c}\right)^2
\right]\|x_0\|_2^2
+
C_1\alpha
\left(
\sigma_0^2+\sigma_1^2M_{k-1}
\right),
\end{aligned}
\]
where \(M_{t-1}\le M_{k-1}\) and the definition of \(C_1\) was used. Since
\(M_0=\|x_0\|_2^2\), taking the supremum over \(0\le t\le k\) yields, for
\(k\ge1\),
\[
\begin{aligned}
M_k
&\le
\|x_0\|_2^2
+
\left[
4K^2+4\left(\frac{\gamma L K^2}{c}\right)^2
\right]\|x_0\|_2^2 \\
&\quad+
C_1\alpha
\left(
\sigma_0^2+\sigma_1^2M_{k-1}
\right) \\
&=
C_0\|x_0\|_2^2
+
C_1\alpha
\left(
\sigma_0^2+\sigma_1^2M_{k-1}
\right),
\end{aligned}
\]
where the last equality uses the definition of \(C_0\).
For \(0<\alpha\le\alpha_0\), the definition of \(\alpha_0\) gives
\[
  \alpha
  \le
  \frac{1}{2C_1\sigma_1^2}
\]
when \(\sigma_1>0\), and therefore
\[
  C_1\alpha\sigma_1^2
  \le
  C_1
  \frac{1}{2C_1\sigma_1^2}
  \sigma_1^2
  =
  \frac12 .
\]
When \(\sigma_1=0\), the same inequality is immediate because
\[
  C_1\alpha\sigma_1^2=0.
\]
Moreover, by definition,
\[
  M_{k-1}
  =
  \sup_{0\le t\le k-1}\E[\|x_t\|_2^2]
  \le
  \sup_{0\le t\le k}\E[\|x_t\|_2^2]
  =
  M_k .
\]
Thus, with \(q:=C_1\alpha\sigma_1^2\le1/2\), the preceding estimate implies
\[
\begin{aligned}
M_k
&\le
C_0\|x_0\|_2^2
+C_1\alpha\sigma_0^2
+qM_{k-1} \\
&\le
C_0\|x_0\|_2^2
+C_1\alpha\sigma_0^2
+qM_k .
\end{aligned}
\]
Therefore
\[
  (1-q)M_k
  \le
  C_0\|x_0\|_2^2
  +C_1\alpha\sigma_0^2 .
\]
Since \(1-q\ge1/2\),
\[
  M_k
  \le
  2C_0\|x_0\|_2^2
  +2C_1\alpha\sigma_0^2,
  \qquad k\ge1 .
\]
For \(k=0\), the same inequality holds because \(M_0=\|x_0\|_2^2\) and
\(C_0\ge1\). This proves \cref{eq:projection-free-ms-bounded}.  The first-moment bound follows from Jensen's inequality.
\end{proof}

The next result gives a projection-free finite-time error estimate under the fixed-data JSR condition of \cref{lem:fixed-data-uniform-product-bound} and the martingale-noise estimate of \cref{lem:iid-unprojected-noise-growth}.  It does not impose \cref{ass:iid-boundedness}.

\begin{theorem}\label{thm:iid-small-alpha-projection-free-error}
Assume \cref{ass:fixed-data-alpha}.  Suppose that there exists \(\bar\alpha>0\) such that
\[
  \jsr(\mathcal A_{\bar\alpha})<1 .
\]
Let \(c>0\) and \(K<\infty\) be the constants from \cref{lem:fixed-data-uniform-product-bound}.  Let \(\sigma_0,\sigma_1\) be the constants from \cref{lem:iid-unprojected-noise-growth}, and let \(L\) be defined in \cref{eq:iid-L-def}.  Consider the linear Q-learning recursion \cref{eq:iid-lfa-update} with a deterministic initial condition \(\theta_0\in\R^m\). For each \(0<\alpha\le\bar\alpha\), let \(\theta^\star\) be the unique projected Bellman fixed point from \cref{thm:deterministic-jsr-lfa}, and set
\[
  x_k:=\theta_k-\theta^\star .
\]
Define
\[
  C_0
  :=
  1+4K^2+4\left(\frac{\gamma L K^2}{c}\right)^2
\]
and
\[
  C_1
  :=
  4\left(
  \frac{K^2}{c}+
  \frac{\gamma^2L^2K^4}{c^3}
  \right).
\]
Set
\[
  \alpha_0
  :=
  \min\left\{
  \bar\alpha,
  \frac{1}{2C_1\sigma_1^2}
  \right\},
\]
where the second term is interpreted as \(+\infty\) when \(\sigma_1=0\).  Then, for every \(0<\alpha\le\alpha_0\) and every \(k\ge0\),
\begin{equation}\label{eq:iid-small-alpha-ms-bound}
\sup_{0\le t\le k}
\E[\|\theta_t-\theta^\star\|_2^2]
\le
2C_0\|\theta_0-\theta^\star\|_2^2
+
2C_1\alpha\sigma_0^2 .
\end{equation}
Moreover, with the convention that \(k(1-c\alpha/2)^{k-1}=0\) when \(k=0\),
\begin{equation}\label{eq:iid-small-alpha-finite-time-bound}
\begin{aligned}
\E[\|\theta_k-\theta^\star\|_2]
\le\;&
K\left(1-\frac{c}{2}\alpha\right)^k
\|\theta_0-\theta^\star\|_2 \\
&+
\alpha\gamma K^2 k
\left(1-\frac{c}{2}\alpha\right)^{k-1}
L\|\theta_0-\theta^\star\|_2 \\
&+
\left(
1+
\frac{2\gamma K L}{c}
\right)
K\sqrt{\frac{2\alpha}{c}} \\
&\qquad\times
\left(
\sigma_0^2+
\sigma_1^2
\left(
2C_0\|\theta_0-\theta^\star\|_2^2
+
2C_1\alpha\sigma_0^2
\right)
\right)^{1/2} .
\end{aligned}
\end{equation}
For fixed \(\theta_0\), the last term in \cref{eq:iid-small-alpha-finite-time-bound} is \(O(\sqrt{\alpha})\) as \(\alpha\downarrow0\).
\end{theorem}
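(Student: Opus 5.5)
The plan has two parts. The mean-square bound \cref{eq:iid-small-alpha-ms-bound} is exactly the conclusion of \cref{thm:projection-free-fixed-data-floor}, with the same constants \(C_0,C_1,\alpha_0\). So I would simply invoke that theorem. For the first-moment finite-time bound \cref{eq:iid-small-alpha-finite-time-bound}, I would rerun the reference-filter decomposition from the proof of \cref{thm:iid-jsr-lfa}, taking \(\beta_\varepsilon=1-c\alpha/2\). In that argument I would replace the uniform noise bound of \cref{lem:iid-projected-noise-bound} by the linear-growth bound of \cref{lem:iid-unprojected-noise-growth}, combined with the uniform-in-time second-moment control just obtained.

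Concretely, fix a stochastic policy \(\bar\mu\) and write
\[
x_k=\bar y_k+\eta_k+u_k+v_k
\]
as in the proof of \cref{thm:iid-jsr-lfa}, driven by the exact switched representation of \cref{prop:iid-direct-lfa}. That representation needs only \(\jsr(\mathcal A_\alpha)<1\), not boundedness. By \cref{lem:fixed-data-Kbeta-uniform}, every product of length \(\ell\) from \(\co(\mathcal A_\alpha)\) has norm at most \(K\beta_\varepsilon^\ell\). Then the deterministic pieces give two bounds exactly as before:
\[
\|\bar y_k\|_2\le K\beta_\varepsilon^k\|x_0\|_2,
\qquad
\|u_k\|_2\le \alpha\gamma LK^2k\beta_\varepsilon^{k-1}\|x_0\|_2.
\]
These are the first two terms of the claimed bound.

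For the noise, the orthogonality of martingale increments gives
\[
\E\|\eta_k\|_2^2\le \alpha^2K^2\sum_{j=0}^{k-1}\beta_\varepsilon^{2j}\sup_{t<k}\E\|w_t\|_2^2 .
\]
By the tower property, \cref{lem:iid-unprojected-noise-growth} and \cref{eq:iid-small-alpha-ms-bound},
\[
\sup_{t}\E\|w_t\|_2^2\le \sigma_0^2+\sigma_1^2\bigl(2C_0\|x_0\|_2^2+2C_1\alpha\sigma_0^2\bigr)=:S^2 .
\]
Since \(1-\beta_\varepsilon^2\ge c\alpha/2\), this gives
\[
\E\|\eta_k\|_2\le K\sqrt{2\alpha/c}\,S
\]
by Jensen's inequality. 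For \(v_k\), I would sum the first moments:
\[
\E\|v_k\|_2\le \alpha\gamma LK\sum_i\beta_\varepsilon^{k-1-i}\,\E\|\eta_i\|_2\le \frac{\alpha\gamma LK}{1-\beta_\varepsilon}\,K\sqrt{2\alpha/c}\,S=\frac{2\gamma LK}{c}\,K\sqrt{2\alpha/c}\,S .
\]
Adding the four bounds yields \cref{eq:iid-small-alpha-finite-time-bound}. Because \(S\) stays bounded as \(\alpha\downarrow0\), the last term is \(O(\sqrt\alpha)\).

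The main obstacle is the noise control. Without \cref{ass:iid-boundedness}, the conditional variance of \(w_t\) depends on \(x_t\), so the \(\eta\) estimate cannot be closed directly. The point of ordering the argument this way is that \cref{thm:projection-free-fixed-data-floor} supplies a uniform bound on \(\sup_t\E\|x_t\|_2^2\), valid for \(\alpha\le\alpha_0\), before the first-moment decomposition is carried out. This keeps the argument from being circular. The remaining care is just to use \(\beta_\varepsilon=1-c\alpha/2\) rather than \(1-c\alpha\), so that the constants match the stated form.
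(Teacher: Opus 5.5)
Your proposal is correct and follows essentially the same route as the paper. You obtain \cref{eq:iid-small-alpha-ms-bound} directly from \cref{thm:projection-free-fixed-data-floor}, then rerun the reference-filter decomposition $x_k=\bar y_k+\eta_k+u_k+v_k$ with $\beta=1-c\alpha/2$, bounding $\E\|w_t\|_2^2$ by $\sigma_0^2+\sigma_1^2(2C_0\|x_0\|_2^2+2C_1\alpha\sigma_0^2)$ and using $1-\beta=c\alpha/2$ and $1-\beta^2\ge c\alpha/2$, which reproduces the paper's constants exactly.
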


\begin{proof}
The product estimate \cref{eq:fixed-data-product-bound} implies
\(\jsr(\mathcal A_\alpha)\le1-c\alpha<1\).  Hence
\cref{thm:deterministic-jsr-lfa} gives the unique projected Bellman fixed point
\(\theta^\star\).  The mean-square estimate \cref{eq:iid-small-alpha-ms-bound}
follows directly from \cref{thm:projection-free-fixed-data-floor}, applied with
the same constants \(c,K,\sigma_0,\sigma_1,L,C_0,C_1\) and the same admissible
step-size range.

By \cref{prop:iid-direct-lfa}, for each \(k\) there is an \(\calF_k\)-measurable
stochastic policy \(\mu_k\) such that
\begin{equation}\label{eq:iid-small-alpha-direct-error}
  x_{k+1}
  =
  \bA_{\mu_k}x_k+
  \alpha w_k .
\end{equation}
We now prove \cref{eq:iid-small-alpha-finite-time-bound}.  Define
\[
  \beta_\alpha:=1-\frac{c}{2}\alpha .
\]
Because \(1-c\alpha<\beta_\alpha<1\), \cref{eq:fixed-data-product-bound} implies that every product generated by \(\co(\mathcal A_\alpha)\) has Euclidean norm at most \(K\beta_\alpha^\ell\) at length \(\ell\).  Use the reference-filter decomposition from the proof of \cref{thm:projection-free-fixed-data-floor}: fix \(\bar\mu\), define \(y_{k+1}=\bA_{\bar\mu}y_k+\alpha w_k\) with \(y_0=x_0\), write \(y_k=\bar y_k+\eta_k\), and write \(x_k-y_k=u_k+v_k\) according to the two recursions driven by \(\bar y_k\) and \(\eta_k\).  The same estimates with this product bound give
\[
  \|\bar y_k\|_2
  \le
  K\beta_\alpha^k\|x_0\|_2
\]
and
\[
  \|u_k\|_2
  \le
  \alpha\gamma LK^2k\beta_\alpha^{k-1}\|x_0\|_2 .
\]
Let
\[
  M_\alpha:=
  2C_0\|x_0\|_2^2
  +
  2C_1\alpha\sigma_0^2 .
\]
By \cref{eq:iid-unprojected-noise-growth,eq:iid-small-alpha-ms-bound},
\[
  \E[\|w_t\|_2^2]
  \le
  \sigma_0^2+
  \sigma_1^2M_\alpha,
  \qquad 0\le t\le k .
\]
Therefore,
\[
\begin{aligned}
\E[\|\eta_k\|_2]
&\le
\left(
\alpha^2K^2
\sum_{t=0}^{k-1}
\beta_\alpha^{2(k-1-t)}
(\sigma_0^2+\sigma_1^2M_\alpha)
\right)^{1/2} \\
&\le
\frac{\alpha K}{\sqrt{1-\beta_\alpha^2}}
\left(
\sigma_0^2+
\sigma_1^2M_\alpha
\right)^{1/2} .
\end{aligned}
\]
Also,
\[
\begin{aligned}
\E[\|v_k\|_2]
&\le
\alpha\gamma L K
\sum_{i=0}^{k-1}
\beta_\alpha^{k-1-i}
\E[\|\eta_i\|_2] \\
&\le
\frac{\alpha\gamma L K}{1-\beta_\alpha}
\frac{\alpha K}{\sqrt{1-\beta_\alpha^2}}
\left(
\sigma_0^2+
\sigma_1^2M_\alpha
\right)^{1/2} .
\end{aligned}
\]
Since \(x_k=\bar y_k+\eta_k+u_k+v_k\), the preceding four estimates imply
\[
\begin{aligned}
\E[\|x_k\|_2]
\le\;&
K\beta_\alpha^k\|x_0\|_2
+
\alpha\gamma LK^2k\beta_\alpha^{k-1}\|x_0\|_2 \\
&+
\left(
1+
\frac{\alpha\gamma L K}{1-\beta_\alpha}
\right)
\frac{\alpha K}{\sqrt{1-\beta_\alpha^2}}
\left(
\sigma_0^2+
\sigma_1^2M_\alpha
\right)^{1/2} .
\end{aligned}
\]
Finally,
\[
  1-\beta_\alpha=\frac{c}{2}\alpha,
  \qquad
  1-\beta_\alpha^2
  =(1-\beta_\alpha)(1+\beta_\alpha)
  \ge
  \frac{c}{2}\alpha .
\]
Substituting these lower bounds and the definition of \(M_\alpha\) gives \cref{eq:iid-small-alpha-finite-time-bound}.  The final \(O(\sqrt{\alpha})\) statement follows because all constants in the last term are independent of \(\alpha\) and \(\theta_0\) is fixed.
\end{proof}

For simplicity and to make the main idea clear, this paper treats only i.i.d.\ observations in the stochastic convergence analysis.  Establishing convergence under Markovian observations is not the objective of this paper. The Markovian-observation case can be extended by following the direct-switching argument of~\cite{lee2026lyapunovcertified}, but we do not pursue it here.

\section{Regularized Linear Q-Learning}\label{sec:regularized-lfa}
This section adds the \(\ell_2\)-regularized term used in regularized Q-learning~\cite{limlee2024regularized}.  In the notation of this paper, the sampled regularization term is \(-\eta\theta_k\), where \(\eta\ge0\) is the regularization weight.  The deterministic regularized LFA map is
\begin{equation}\label{eq:regq-Talpha-def}
\bT_{\alpha,\eta}(\theta)
:=
\theta+
\alpha
\left[
\Phi^\top D
\left(R+\gamma PV_\theta-\Phi\theta\right)
-
\eta\theta
\right]
=
\bT_\alpha(\theta)-\alpha\eta\theta .
\end{equation}
The corresponding regularized projected Bellman fixed point, if it exists, is a parameter \(\theta_\eta^\star\) satisfying
\begin{equation}\label{eq:regq-pbe}
\Phi^\top D
\left(R+\gamma PV_{\theta_\eta^\star}-\Phi\theta_\eta^\star\right)
-
\eta\theta_\eta^\star
=0.
\end{equation}
Equivalently, with the definition of the regularized projection
\[
\boldsymbol{\Gamma}_\eta
:=
\Phi(\Phi^\top D\Phi+\eta I)^{-1}\Phi^\top D,
\]
\cref{eq:regq-pbe} can be written as the regularized projected Bellman equation
\begin{equation*}
\Phi\theta_\eta^\star
=
\boldsymbol{\Gamma}_\eta
\left(R+\gamma PV_{\theta_\eta^\star}\right).
\end{equation*}
Equivalently, because \(\alpha>0\), this fixed point satisfies the zero fixed-point-residual equation
\[
  \bT_{\alpha,\eta}(\theta_\eta^\star)-\theta_\eta^\star=0,
\]
or, equivalently, \(\bT_{\alpha,\eta}(\theta_\eta^\star)=\theta_\eta^\star\).
For \(\eta>0\), \(\boldsymbol{\Gamma}_\eta\) is generally not an idempotent \(D\)-orthogonal projection.  Thus the phrase regularized projected Bellman equation refers here to the regularized normal equation \cref{eq:regq-pbe}, rather than to an ordinary orthogonal projection.

The corresponding regularized projected Q-VI is
\begin{equation*}
\Phi\theta_{k+1}^{\mathrm{RPVI}}
=
\boldsymbol{\Gamma}_\eta
\left(R+\gamma PV_{\theta_k^{\mathrm{RPVI}}}\right),
\qquad k\in\{0,1,\ldots\},
\end{equation*}
or, equivalently, in parameter space,
\begin{equation}\label{eq:regq-rpvi-parameter}
\theta_{k+1}^{\mathrm{RPVI}}
=
(\Phi^\top D\Phi+\eta I)^{-1}\Phi^\top D
\left(R+\gamma PV_{\theta_k^{\mathrm{RPVI}}}\right),
\qquad k\in\{0,1,\ldots\}.
\end{equation}
Following the regularized Q-learning analysis of~\cite{limlee2024regularized}, a sufficient condition for the regularized projected Bellman operator
\[
Q\mapsto
\boldsymbol{\Gamma}_\eta\left(R+\gamma PV_Q\right)
\]
to be a contraction in the sup norm is
\begin{equation}\label{eq:regq-rpvi-contraction-condition}
\gamma\|\boldsymbol{\Gamma}_\eta P\|_\infty<1.
\end{equation}
Indeed, for any two representable Q-functions \(Q=\Phi\theta\) and \(\bar Q=\Phi\bar\theta\),
\[
\left\|
\boldsymbol{\Gamma}_\eta\gamma P\left(V_Q-V_{\bar Q}\right)
\right\|_\infty
\le
\gamma\|\boldsymbol{\Gamma}_\eta P\|_\infty\|Q-\bar Q\|_\infty.
\]
Because \(\boldsymbol{\Gamma}_\eta\to0\) as \(\eta\to\infty\), condition \cref{eq:regq-rpvi-contraction-condition} holds for all sufficiently large \(\eta\).  Hence, for sufficiently large \(\eta\), the regularized projected Bellman equation has a unique fixed point and the regularized projected Q-VI converges to it.

The corresponding regularized deterministic linear Q-learning iteration is
\begin{equation*}
\theta_{k+1}^{\mathrm{RegDLQ}}
=
\theta_k^{\mathrm{RegDLQ}}
+
\alpha
\left[
\Phi^\top D
\left(R+\gamma PV_{\theta_k^{\mathrm{RegDLQ}}}-\Phi\theta_k^{\mathrm{RegDLQ}}\right)
-
\eta\theta_k^{\mathrm{RegDLQ}}
\right],
\qquad k\in\{0,1,\ldots\}.
\end{equation*}
Equivalently,
\[
\theta_{k+1}^{\mathrm{RegDLQ}}
=
\bT_{\alpha,\eta}(\theta_k^{\mathrm{RegDLQ}}).
\]
The regularized deterministic linear Q-learning iteration can also be written as a residual step toward the regularized projected Q-VI update:
\begin{equation*}
\theta_{k+1}^{\mathrm{RegDLQ}}
=
\theta_k^{\mathrm{RegDLQ}}
+
\alpha(\Phi^\top D\Phi+\eta I)
\left(
\theta_{k+1}^{\mathrm{RPVI}}-\theta_k^{\mathrm{RegDLQ}}
\right),
\end{equation*}
where \(\theta_{k+1}^{\mathrm{RPVI}}\) is computed from \cref{eq:regq-rpvi-parameter} using \(\theta_k^{\mathrm{RPVI}}=\theta_k^{\mathrm{RegDLQ}}\).  Thus the two regularized iterations have the same fixed points when they converge, but their convergence behavior can be different because the deterministic linear Q-learning step contains the additional preconditioning factor \(\alpha(\Phi^\top D\Phi+\eta I)\).  When \(\eta=0\), the regularized map reduces to the unregularized linear Q-learning map in \cref{eq:Talpha-def}.  The regularized case therefore follows the same direct-switching template, with each mode shifted by the regularization term.

The next two examples show that the regularized projected Q-VI and the regularized deterministic linear Q-learning iteration can still have different convergence behavior.

\begin{example}\label{ex:reg-rpvi-converges-dlq-diverges}
Consider a one-state MDP with one action, zero reward, deterministic self-transition, $\gamma=0.9$, $\eta=1$, $\alpha=0.5$, $d(1,1)=1$, and the one-dimensional feature representation $\Phi=10$.
Since there is only one action, the maximization is trivial. In this case
\[
  \Phi^\top D\Phi=100,
  \qquad
  \Phi^\top DP\Phi=100.
\]
The regularized projected Q-VI satisfies
\[
  \theta_{k+1}^{\mathrm{RPVI}}
  =
  \gamma\frac{\Phi^\top DP\Phi}{\Phi^\top D\Phi+\eta}\theta_k^{\mathrm{RPVI}}
  =
  \frac{90}{101}\theta_k^{\mathrm{RPVI}},
  \qquad k\in\{0,1,\ldots\},
\]
which converges to \(0\). However, the regularized deterministic linear Q-learning iteration satisfies
\[
\begin{aligned}
  \theta_{k+1}^{\mathrm{RegDLQ}}
  &=
  \left(
  1-
  \alpha(\Phi^\top D\Phi+\eta)
  +
  \alpha\gamma\Phi^\top DP\Phi
  \right)\theta_k^{\mathrm{RegDLQ}} \\
  &=
  \left(1-0.5\cdot101+0.5\cdot90\right)
  \theta_k^{\mathrm{RegDLQ}}
  =
  -4.5\theta_k^{\mathrm{RegDLQ}},
  \qquad k\in\{0,1,\ldots\}.
\end{aligned}
\]
Thus regularized deterministic linear Q-learning diverges for every nonzero initial condition, even though the regularized projected Q-VI converges to the same fixed point.
\end{example}

\begin{example}\label{ex:reg-dlq-converges-rpvi-diverges}
Consider a two-state MDP with one action, zero reward, discount factor $\gamma=0.9$, $\eta=1$, $\alpha=0.1$, and transition matrix
\[
  P=
  \begin{bmatrix}
  0 & 1\\
  0 & 1
  \end{bmatrix}.
\]
Let the sampling distribution and feature matrix be
\[
  d(1,1)=0.99,
  \qquad
  d(2,1)=0.01,
  \qquad
  \Phi=
  \begin{bmatrix}
  1\\ -10
  \end{bmatrix}.
\]
Again, the maximization is trivial.  As in the corresponding unregularized example,
\[
  \Phi^\top D\Phi=1.99,
  \qquad
  \Phi^\top DP\Phi=-8.9.
\]
The regularized projected Q-VI is
\[
  \theta_{k+1}^{\mathrm{RPVI}}
  =
  \gamma\frac{\Phi^\top DP\Phi}{\Phi^\top D\Phi+\eta}\theta_k^{\mathrm{RPVI}}
  =
  -\frac{8.01}{2.99}\theta_k^{\mathrm{RPVI}},
  \qquad k\in\{0,1,\ldots\}.
\]
Since \(8.01/2.99>1\), the regularized projected Q-VI diverges for every nonzero initial condition.  On the other hand, the regularized deterministic linear Q-learning iteration satisfies
\[
\begin{aligned}
  \theta_{k+1}^{\mathrm{RegDLQ}}
  &=
  \left(
  1-
  \alpha(\Phi^\top D\Phi+\eta)
  +
  \alpha\gamma\Phi^\top DP\Phi
  \right)\theta_k^{\mathrm{RegDLQ}} \\
  &=
  \left(1-0.1\cdot2.99+0.1\cdot0.9\cdot(-8.9)\right)
  \theta_k^{\mathrm{RegDLQ}}
  =
  -0.1\theta_k^{\mathrm{RegDLQ}},
  \qquad k\in\{0,1,\ldots\}.
\end{aligned}
\]
Therefore regularized deterministic linear Q-learning converges to \(0\), while the regularized projected Q-VI diverges.
\end{example}

\subsection{Regularized Direct Modes and JSR}
For a stochastic policy \(\mu\), define the regularized direct mode
\[
\bA_\mu^\eta
:=
I-
\alpha(\Phi^\top D\Phi+\eta I)
+
\alpha\gamma\Phi^\top DP\bPi^\mu\Phi
=
\bA_\mu-\alpha\eta I .
\]
For deterministic policies, define
\[
\mathcal A_{\alpha,\eta}
:=
\left\{
\bA_\pi^\eta
:=
I-
\alpha(\Phi^\top D\Phi+\eta I)
+
\alpha\gamma\Phi^\top DP\bPi^\pi\Phi:
\pi\in\Theta
\right\}.
\]
The regularized direct JSR rate is denoted by
\[
\rho_{\alpha,\eta}
:=
\jsr(\mathcal A_{\alpha,\eta}).
\]

The next proposition is the regularized counterpart of the pairwise direct representation.

\begin{proposition}\label{prop:regq-pairwise-direct-lfa}
For every \(\theta,\bar\theta\in\R^m\), there exists a stochastic policy \(\mu_{\theta,\bar\theta}\) such that
\[
\bT_{\alpha,\eta}(\theta)-\bT_{\alpha,\eta}(\bar\theta)
=
\bA_{\mu_{\theta,\bar\theta}}^{\eta}(\theta-\bar\theta).
\]
In particular, if \(\theta_\eta^\star\) satisfies \cref{eq:regq-pbe} and \(x_k:=\theta_k-\theta_\eta^\star\), then the deterministic regularized recursion \(\theta_{k+1}=\bT_{\alpha,\eta}(\theta_k)\) satisfies
\begin{equation}\label{eq:regq-det-direct-error}
x_{k+1}=\bA_{\mu_k}^{\eta}x_k,
\qquad k\in\{0,1,\ldots\},
\end{equation}
where \(\mu_k\) is a stochastic policy depending measurably on \(\theta_k\) and \(\theta_\eta^\star\).
\end{proposition}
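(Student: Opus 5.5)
The plan is to reduce the statement to the unregularized pairwise representation in \cref{prop:pairwise-direct-lfa}, using the identity $\bT_{\alpha,\eta}(\theta)=\bT_\alpha(\theta)-\alpha\eta\theta$ from \cref{eq:regq-Talpha-def}. For arbitrary $\theta,\bar\theta\in\R^m$, subtracting the two instances of this identity gives
\[
\bT_{\alpha,\eta}(\theta)-\bT_{\alpha,\eta}(\bar\theta)
=
\bT_\alpha(\theta)-\bT_\alpha(\bar\theta)-\alpha\eta(\theta-\bar\theta).
\]
The regularization term is linear, so it contributes exactly $-\alpha\eta I$ acting on the difference.

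Next I will invoke \cref{prop:pairwise-direct-lfa}, which relies on \cref{lem:lfa-max-linearization}. It supplies a stochastic policy $\mu_{\theta,\bar\theta}$, depending measurably on $(\theta,\bar\theta)$, such that $\bT_\alpha(\theta)-\bT_\alpha(\bar\theta)=\bA_{\mu_{\theta,\bar\theta}}(\theta-\bar\theta)$. Substituting this gives
\[
\bT_{\alpha,\eta}(\theta)-\bT_{\alpha,\eta}(\bar\theta)
=
(\bA_{\mu_{\theta,\bar\theta}}-\alpha\eta I)(\theta-\bar\theta)
=
\bA^\eta_{\mu_{\theta,\bar\theta}}(\theta-\bar\theta),
\]
by the definition $\bA_\mu^\eta=\bA_\mu-\alpha\eta I$. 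The same policy as in the unregularized case works, and the regularization does not alter the max-linearization.

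For the trajectory statement, let $\theta_\eta^\star$ satisfy \cref{eq:regq-pbe}. As noted after that equation, this is equivalent to $\bT_{\alpha,\eta}(\theta_\eta^\star)=\theta_\eta^\star$, because $\alpha>0$. I will take $\bar\theta=\theta_\eta^\star$ and $\theta=\theta_k$. Then
\[
x_{k+1}=\bT_{\alpha,\eta}(\theta_k)-\bT_{\alpha,\eta}(\theta_\eta^\star)=\bA^\eta_{\mu_k}x_k,
\qquad
\mu_k:=\mu_{\theta_k,\theta_\eta^\star}.
\]
Measurability of $\mu_k$ in $(\theta_k,\theta_\eta^\star)$ is inherited directly from the measurable selection in \cref{lem:lfa-max-linearization}.

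There is no genuine obstacle here. The only point requiring care is confirming that the fixed-point characterization is stated for the regularized map $\bT_{\alpha,\eta}$ and not for $\bT_\alpha$, so that the fixed-point term cancels exactly.
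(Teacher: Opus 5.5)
Your proposal is correct and matches the paper's proof essentially step for step. You subtract the identity $\bT_{\alpha,\eta}(\theta)=\bT_\alpha(\theta)-\alpha\eta\theta$ at $\theta$ and $\bar\theta$, invoke \cref{prop:pairwise-direct-lfa}, absorb the shift into $\bA^\eta_\mu=\bA_\mu-\alpha\eta I$, and specialize to $\bar\theta=\theta_\eta^\star$ using that \cref{eq:regq-pbe} is equivalent to $\bT_{\alpha,\eta}(\theta_\eta^\star)=\theta_\eta^\star$. Measurability of $\mu_k$ is then inherited from \cref{lem:lfa-max-linearization}, as in the paper.
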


\begin{proof}
Using \cref{eq:regq-Talpha-def},
\[
\bT_{\alpha,\eta}(\theta)-\bT_{\alpha,\eta}(\bar\theta)
=
\bT_\alpha(\theta)-\bT_\alpha(\bar\theta)-\alpha\eta(\theta-\bar\theta).
\]
By \cref{prop:pairwise-direct-lfa}, there exists a stochastic policy \(\mu_{\theta,\bar\theta}\) such that
\[
\bT_\alpha(\theta)-\bT_\alpha(\bar\theta)
=
\bA_{\mu_{\theta,\bar\theta}}(\theta-\bar\theta).
\]
Therefore
\[
\bT_{\alpha,\eta}(\theta)-\bT_{\alpha,\eta}(\bar\theta)
=
(\bA_{\mu_{\theta,\bar\theta}}-\alpha\eta I)(\theta-\bar\theta)
=
\bA_{\mu_{\theta,\bar\theta}}^{\eta}(\theta-\bar\theta).
\]
If \(\theta_\eta^\star\) is a fixed point of \(\bT_{\alpha,\eta}\) and \(\bar\theta=\theta_\eta^\star\), then \cref{eq:regq-det-direct-error} follows.
\end{proof}

\begin{lemma}\label{lem:regq-convex-hull}
For every stochastic policy \(\mu\),
\[
\bA_\mu^\eta\in\co(\mathcal A_{\alpha,\eta}).
\]
Moreover,
\[
\jsr(\co(\mathcal A_{\alpha,\eta}))
=
\jsr(\mathcal A_{\alpha,\eta}).
\]
\end{lemma}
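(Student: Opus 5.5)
The proof follows the unregularized template of \cref{lem:lfa-convex-hull}: write each stochastic-policy mode as a convex combination of deterministic-policy modes, then shift everything by $-\alpha\eta I$. Since the unregularized statement is available, I would invoke it directly rather than redo the decomposition. Fix a stochastic policy $\mu$. By \cref{lem:lfa-convex-hull} there are convex weights $c_\pi(\mu)\ge0$ with $\sum_{\pi\in\Theta}c_\pi(\mu)=1$ and $\bA_\mu=\sum_\pi c_\pi(\mu)\bA_\pi$. Because the weights sum to one, $\alpha\eta I=\sum_\pi c_\pi(\mu)\alpha\eta I$. Hence
\[
\bA_\mu^\eta=\bA_\mu-\alpha\eta I=\sum_{\pi\in\Theta}c_\pi(\mu)\left(\bA_\pi-\alpha\eta I\right)=\sum_{\pi\in\Theta}c_\pi(\mu)\bA_\pi^\eta\in\co(\mathcal A_{\alpha,\eta}).
\]

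If one prefers a self-contained argument, the weights come from the product structure of $\bPi^\mu$. The map $\mu\mapsto\bPi^\mu$ is affine in each row $\mu(s)$. Taking $c_\pi(\mu):=\prod_{s\in\calS}\mu(\pi(s)\mid s)$ gives nonnegative weights summing to one, and the marginal of these weights at state $s$ is $\mu(\cdot\mid s)$. This yields $\bPi^\mu=\sum_\pi c_\pi(\mu)\bPi^\pi$, and the mode identity then follows because $\mu\mapsto\bA_\mu^\eta$ is affine in $\bPi^\mu$.

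For the JSR equality, the inequality $\jsr(\mathcal A_{\alpha,\eta})\le\jsr(\co(\mathcal A_{\alpha,\eta}))$ is immediate from monotonicity of the supremum under set inclusion. For the reverse inequality, I would use multilinearity of products. A product $\mathbf M_k\cdots\mathbf M_1$ with each $\mathbf M_j=\sum_i\lambda^{(j)}_i\bA_{\pi_i}^\eta$ expands as
\[
\mathbf M_k\cdots\mathbf M_1=\sum_{i_1,\ldots,i_k}\lambda^{(k)}_{i_k}\cdots\lambda^{(1)}_{i_1}\,\bA^\eta_{\pi_{i_k}}\cdots\bA^\eta_{\pi_{i_1}}.
\]
This is a convex combination of length-$k$ products from the finite family. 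The triangle inequality then gives
\[
\sup_{\co}\|\mathbf M_k\cdots\mathbf M_1\|\le\max_{\mathcal A_{\alpha,\eta}}\|\bA_{\pi_k}^\eta\cdots\bA_{\pi_1}^\eta\|,
\]
and taking $k$-th roots and the limit yields equality.

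There is no real obstacle here. The only point needing care is the second step: the shift $-\alpha\eta I$ preserves convex combinations only because the weights sum to one, so the identity term must be redistributed through the convex weights rather than subtracted once.
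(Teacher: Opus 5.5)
Your proposal is correct and matches the paper's proof: the paper likewise writes $\bPi^\mu=\sum_{\pi\in\Theta}c_\pi(\mu)\bPi^\pi$ and uses the affine dependence of $\bA_\mu^\eta$ on $\bPi^\mu$. This is equivalent to your shift of the decomposition from \cref{lem:lfa-convex-hull}, since the weights sum to one. The only difference is that the paper cites the convex-hull invariance of the JSR as standard, whereas your multilinear expansion of products from $\co(\mathcal A_{\alpha,\eta})$ into convex combinations of length-$k$ products from $\mathcal A_{\alpha,\eta}$ gives a correct self-contained proof of it.
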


\begin{proof}
For a stochastic policy \(\mu\), the standard convex-hull identity for stationary policies gives convex weights \(c_\pi(\mu)\) satisfying
\[
\bPi^\mu=\sum_{\pi\in\Theta}c_\pi(\mu)\bPi^\pi,
\qquad
c_\pi(\mu)\ge0,
\qquad
\sum_{\pi\in\Theta}c_\pi(\mu)=1.
\]
Using the affine dependence of \(\bA_\mu^\eta\) on \(\bPi^\mu\),
\[
\begin{aligned}
\bA_\mu^\eta
&=
I-
\alpha(\Phi^\top D\Phi+\eta I)
+
\alpha\gamma\Phi^\top DP\bPi^\mu\Phi\\
&=
\sum_{\pi\in\Theta}c_\pi(\mu)
\left(
I-
\alpha(\Phi^\top D\Phi+\eta I)
+
\alpha\gamma\Phi^\top DP\bPi^\pi\Phi
\right)\\
&=
\sum_{\pi\in\Theta}c_\pi(\mu)\bA_\pi^\eta.
\end{aligned}
\]
Thus \(\bA_\mu^\eta\in\co(\mathcal A_{\alpha,\eta})\).  The equality of the JSR before and after convexification is the standard convex-hull invariance of the JSR for finite matrix families, applied to \(\mathcal A_{\alpha,\eta}\).
\end{proof}

The regularized convergence analysis can be obtained by applying almost the same interpretation as in the preceding linear Q-learning analysis to the shifted family \(\mathcal A_{\alpha,\eta}\). In particular, when \(\jsr(\mathcal A_{\alpha,\eta})<1\), the JSR Lyapunov construction applied to the modes \(\bA_\pi^\eta\) gives a contraction of the deterministic regularized map around the regularized projected Bellman fixed point \(\theta_\eta^\star\). The sampled i.i.d.\ regularized recursion is handled in the same way under the same bounded-iterate assumption used in \cref{thm:iid-jsr-lfa}: the conditional mean drift uses the shifted regularized modes, while the sampling noise is the same martingale-difference term as in the unregularized linear Q-learning case. Therefore, by replacing \(\bA_\pi\) with \(\bA_\pi^\eta\) and \(\theta^\star\) with \(\theta_\eta^\star\) in the previous linear Q-learning argument, the corresponding deterministic convergence and stochastic bounded-error bounds follow. Since these derivations overlap almost completely with the earlier linear Q-learning analysis, we omit the details.

The regularization replaces each drift mode \(\bA_\pi\) by \(\bA_\pi^\eta=\bA_\pi-\alpha\eta I\). Therefore the convergence certificate is obtained by computing or bounding the JSR of the shifted family \(\mathcal A_{\alpha,\eta}\).

\subsection{Regularization-Dependent JSR Upper Bounds}
This subsection presents three regularization-dependent consequences of the shifted mode representation.  First, the regularized JSR can be written as a rescaled unregularized JSR\@. Second, when \(0\le\alpha\eta\le1\), the same representation gives a Euclidean upper bound that shows the interaction between \(\alpha\) and \(\eta\). Third, a more conservative all-\(\eta\) bound follows by bounding the shifted drift norm directly.

Let
\begin{equation}\label{eq:cL-def}
\begin{aligned}
c_\Phi
&:=
\min_{\pi\in\Theta}
\lambda_{\min}\left(
\frac{
\Phi^\top D\Phi-
\gamma\Phi^\top DP\bPi^\pi\Phi
+
\left(\Phi^\top D\Phi-
\gamma\Phi^\top DP\bPi^\pi\Phi\right)^\top
}{2}
\right),\\
L_\Phi
&:=
\max_{\pi\in\Theta}
\left\|
\Phi^\top D\Phi-
\gamma\Phi^\top DP\bPi^\pi\Phi
\right\|_2 .
\end{aligned}
\end{equation}
For each deterministic policy \(\pi\), let us define
\[
\bA_\pi^\eta
=
I-
\alpha\left(
\Phi^\top D\Phi-
\gamma\Phi^\top DP\bPi^\pi\Phi
+
\eta I
\right).
\]
Thus, the regularized switching system matrix is a scalar shift of each unregularized switching system matrix. In the regularized Q-learning analysis of~\cite{limlee2024regularized}, increasing \(\eta\) strengthens the regularized stability mechanism. In the present fixed-\(\alpha\) switched-system analysis, however, this monotone interpretation does not directly apply: increasing \(\eta\) also changes the discrete-time factor \(1-\alpha\eta\) and the effective step-size scaling made precise below.
\begin{proposition}\label{prop:regq-jsr-rescaling}
For \(\bar\alpha\in\R\), define the formal switching family
\[
\mathcal B_{\bar\alpha}
:=
\left\{
I-
\bar\alpha\left(\Phi^\top D\Phi-
\gamma\Phi^\top DP\bPi^\pi\Phi\right):
\pi\in\Theta
\right\}.
\]
If \(\alpha\eta\ne1\), then
\begin{equation}\label{eq:regq-jsr-rescaling}
\jsr(\mathcal A_{\alpha,\eta})
=
|1-\alpha\eta|\,
\jsr\left(\mathcal B_{\alpha/(1-\alpha\eta)}\right).
\end{equation}
In particular, if \(0\le \alpha\eta<1\), then
\[
\jsr(\mathcal A_{\alpha,\eta})
=
(1-\alpha\eta)\,
\jsr\left(\mathcal B_{\alpha/(1-\alpha\eta)}\right).
\]
If \(\alpha\eta=1\), then
\begin{equation}\label{eq:regq-jsr-rescaling-critical}
\jsr(\mathcal A_{\alpha,\eta})
=
\alpha\,
\jsr\left(
\left\{
\Phi^\top D\Phi-
\gamma\Phi^\top DP\bPi^\pi\Phi:
\pi\in\Theta
\right\}
\right).
\end{equation}
\end{proposition}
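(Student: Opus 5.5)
Set \(G_\pi:=\Phi^\top D\Phi-\gamma\Phi^\top DP\bPi^\pi\Phi\), so that \(\bA_\pi^\eta=(1-\alpha\eta)I-\alpha G_\pi\) for every \(\pi\in\Theta\). The argument rests on two facts: this factorization, and the positive homogeneity of the JSR, namely \(\jsr(\{cM:M\in\calH\})=|c|\,\jsr(\calH)\) for any bounded family \(\calH\) and scalar \(c\).

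First, the homogeneity. For any product of length \(k\) we have \(\|(cM_k)\cdots(cM_1)\|=|c|^k\|M_k\cdots M_1\|\). Taking the maximum over words, then the \(k\)-th root, and letting \(k\to\infty\) in the definition of the JSR from \cref{subsec:jsr} gives the claim. The edge case \(c=0\) is consistent, since both sides are zero.

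Case \(\alpha\eta\ne1\). Factor each mode as
\[
\bA_\pi^\eta=(1-\alpha\eta)\Bigl(I-\tfrac{\alpha}{1-\alpha\eta}G_\pi\Bigr).
\]
The bracketed matrix is exactly the element of \(\mathcal B_{\alpha/(1-\alpha\eta)}\) indexed by \(\pi\). Since the map \(\pi\mapsto\) mode is the same indexing on both sides, this gives \(\mathcal A_{\alpha,\eta}=(1-\alpha\eta)\mathcal B_{\alpha/(1-\alpha\eta)}\) as families. Homogeneity then yields \cref{eq:regq-jsr-rescaling}. The subcase \(0\le\alpha\eta<1\) follows immediately, because there \(|1-\alpha\eta|=1-\alpha\eta\). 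Note that \(\bar\alpha=\alpha/(1-\alpha\eta)\) may be negative when \(\alpha\eta>1\); this is why \(\mathcal B_{\bar\alpha}\) is defined for all real \(\bar\alpha\) as a formal family. No stability property of \(\mathcal B\) is needed.

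Case \(\alpha\eta=1\). Here \(\bA_\pi^\eta=-\alpha G_\pi\). Homogeneity with \(c=-\alpha\) gives \(\jsr(\mathcal A_{\alpha,\eta})=\alpha\,\jsr(\{G_\pi\})\), using \(\alpha>0\). This is \cref{eq:regq-jsr-rescaling-critical}.

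There is no real obstacle here. The only point needing care is that the rescaling is a bijection between the two finite families, so the maximizations over words coincide term by term. One should also note that the JSR limit exists independently of the norm, as recalled in \cref{subsec:jsr}, which is what allows passing the factor \(|c|\) through the limit.
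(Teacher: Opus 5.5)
Your proposal is correct and follows the paper's proof: factor $\bA_\pi^\eta=(1-\alpha\eta)\bigl(I-\tfrac{\alpha}{1-\alpha\eta}G_\pi\bigr)$, note that every length-$k$ product picks up the factor $(1-\alpha\eta)^k$, and pass to the JSR limit. In the critical case $\alpha\eta=1$, use homogeneity with $\alpha>0$. Your explicit remark that $\bar\alpha$ may be negative when $\alpha\eta>1$ is a useful clarification of why $\mathcal B_{\bar\alpha}$ is defined for all real $\bar\alpha$.
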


\begin{proof}
For each deterministic policy \(\pi\),
\[
\bA_\pi^\eta
=
(1-\alpha\eta)I-
\alpha\left(\Phi^\top D\Phi-
\gamma\Phi^\top DP\bPi^\pi\Phi\right).
\]
If \(\alpha\eta\ne1\), then
\[
\bA_\pi^\eta
=
(1-\alpha\eta)
\left(
I-
\frac{\alpha}{1-\alpha\eta}
\left(\Phi^\top D\Phi-
\gamma\Phi^\top DP\bPi^\pi\Phi\right)
\right).
\]
Every length-\(k\) product of matrices in \(\mathcal A_{\alpha,\eta}\) is therefore \((1-\alpha\eta)^k\) times the corresponding length-\(k\) product generated by \(\mathcal B_{\alpha/(1-\alpha\eta)}\).  Taking the supremum over products, the \(k\)th root, and the limit gives \cref{eq:regq-jsr-rescaling}.  If \(\alpha\eta=1\), then
\[
\bA_\pi^\eta
=
-\alpha\left(\Phi^\top D\Phi-
\gamma\Phi^\top DP\bPi^\pi\Phi\right),
\]
and homogeneity of the JSR gives \cref{eq:regq-jsr-rescaling-critical}.
\end{proof}

\begin{proposition}\label{prop:regq-euclidean-eta-jsr}
Let \(c_\Phi\) and \(L_\Phi\) be defined in \cref{eq:cL-def}.  If \(0\le \alpha\eta\le1\), then
\begin{equation}\label{eq:regq-eta-jsr-bound-factorized}
\jsr(\mathcal A_{\alpha,\eta})
\le
\sqrt{
(1-\alpha\eta)^2
-2\alpha(1-\alpha\eta)c_\Phi
+\alpha^2L_\Phi^2
}.
\end{equation}
Equivalently,
\begin{equation}\label{eq:regq-eta-jsr-bound-expanded}
\jsr(\mathcal A_{\alpha,\eta})
\le
\sqrt{
1
-2\alpha(c_\Phi+\eta)
+\alpha^2
\left(L_\Phi^2+2c_\Phi\eta+\eta^2\right)
}.
\end{equation}
Consequently, if
\begin{equation}\label{eq:regq-eta-alpha-condition}
0\le\alpha\eta\le1,
\qquad
c_\Phi+\eta>0,
\qquad
0<\alpha<
\frac{2(c_\Phi+\eta)}{L_\Phi^2+2c_\Phi\eta+\eta^2},
\end{equation}
then \(\jsr(\mathcal A_{\alpha,\eta})<1\).

\end{proposition}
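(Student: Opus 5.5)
The plan is to bound the JSR by the maximal Euclidean operator norm of the modes, since $\jsr(\mathcal A_{\alpha,\eta})\le\max_{\pi\in\Theta}\|\bA_\pi^\eta\|_2$ by submultiplicativity. This is the same argument used in \cref{ex:thm1-norm-ball-3d}. It then suffices to bound $\|\bA_\pi^\eta x\|_2^2$ uniformly over $\pi$ and over unit vectors $x$.

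Fix $\pi$ and write $G_\pi:=\Phi^\top D\Phi-\gamma\Phi^\top DP\bPi^\pi\Phi$, so that $\bA_\pi^\eta=(1-\alpha\eta)I-\alpha G_\pi$. For any $x$ I will expand
\[
\|\bA_\pi^\eta x\|_2^2=(1-\alpha\eta)^2\|x\|_2^2-2\alpha(1-\alpha\eta)\,x^\top G_\pi x+\alpha^2\|G_\pi x\|_2^2 .
\]
The quadratic form satisfies $x^\top G_\pi x=x^\top\frac{G_\pi+G_\pi^\top}{2}x\ge c_\Phi\|x\|_2^2$ by the definition of $c_\Phi$ in \cref{eq:cL-def}. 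The last term satisfies $\|G_\pi x\|_2\le L_\Phi\|x\|_2$.

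The key point is the sign of the cross-term coefficient. Because $0\le\alpha\eta\le1$, we have $-2\alpha(1-\alpha\eta)\le0$, so replacing $x^\top G_\pi x$ by its lower bound $c_\Phi\|x\|_2^2$ gives an upper bound. This holds whether $c_\Phi$ is positive or negative, and it is the only place the hypothesis $\alpha\eta\le1$ enters. The result is $\|\bA_\pi^\eta\|_2^2\le(1-\alpha\eta)^2-2\alpha(1-\alpha\eta)c_\Phi+\alpha^2L_\Phi^2$. The right-hand side is therefore nonnegative, so the square root is well defined. Maximizing over $\pi$ gives \cref{eq:regq-eta-jsr-bound-factorized}.

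Expanding $(1-\alpha\eta)^2-2\alpha c_\Phi+2\alpha^2\eta c_\Phi+\alpha^2L_\Phi^2$ gives $1-2\alpha(c_\Phi+\eta)+\alpha^2(L_\Phi^2+2c_\Phi\eta+\eta^2)$, which is \cref{eq:regq-eta-jsr-bound-expanded}.

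For the final claim, write $q:=L_\Phi^2+2c_\Phi\eta+\eta^2$; the radicand is $1-2\alpha(c_\Phi+\eta)+\alpha^2q$. I first check that $q>0$: we have $c_\Phi\ge-L_\Phi$, since the symmetric part has norm at most $\|G_\pi\|_2\le L_\Phi$. Hence $q\ge L_\Phi^2-2L_\Phi\eta+\eta^2=(L_\Phi-\eta)^2\ge0$. Equality would force $c_\Phi=-L_\Phi$ and $\eta=L_\Phi$, which gives $c_\Phi+\eta=0$ and contradicts the assumption $c_\Phi+\eta>0$. So $q>0$, and the upper bound on $\alpha$ in \cref{eq:regq-eta-alpha-condition} makes sense.

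Given $q>0$, the radicand is strictly less than $1$ exactly when $\alpha q<2(c_\Phi+\eta)$, which is the assumed upper bound on $\alpha$. Together with $\alpha>0$ this gives $\jsr(\mathcal A_{\alpha,\eta})<1$. I expect no real obstacle; the only delicate points are the sign argument for the cross term and the positivity of $q$.
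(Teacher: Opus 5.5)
Your proposal is correct and follows essentially the same route as the paper: bound $\jsr(\mathcal A_{\alpha,\eta})$ by $\max_{\pi}\|\bA_\pi^\eta\|_2$, expand $\|(1-\alpha\eta)x-\alpha G_\pi x\|_2^2$, and use $1-\alpha\eta\ge0$ to replace the quadratic form by its lower bound $c_\Phi\|x\|_2^2$. Your extra check that $L_\Phi^2+2c_\Phi\eta+\eta^2>0$, via $c_\Phi\ge -L_\Phi$, fills in a point the paper leaves implicit when it says the step-size condition is exactly the condition that the squared bound is below one.
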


\begin{proof}
For each deterministic policy \(\pi\),
\[
\bA_\pi^\eta
=
(1-\alpha\eta)I-
\alpha\left(\Phi^\top D\Phi-
\gamma\Phi^\top DP\bPi^\pi\Phi\right).
\]
For any \(x\in\R^m\), using \(0\le\alpha\eta\le1\),
\[
\begin{aligned}
\|\bA_\pi^\eta x\|_2^2
&=
\left\|
(1-\alpha\eta)x-
\alpha\left(\Phi^\top D\Phi-
\gamma\Phi^\top DP\bPi^\pi\Phi\right)x
\right\|_2^2\\
&=
(1-\alpha\eta)^2\|x\|_2^2
-2\alpha(1-\alpha\eta)x^\top
\left(\Phi^\top D\Phi-
\gamma\Phi^\top DP\bPi^\pi\Phi\right)x\\
&\qquad
+
\alpha^2\left\|
\left(\Phi^\top D\Phi-
\gamma\Phi^\top DP\bPi^\pi\Phi\right)x
\right\|_2^2\\
&\le
\left[
(1-\alpha\eta)^2
-2\alpha(1-\alpha\eta)c_\Phi
+\alpha^2L_\Phi^2
\right]\|x\|_2^2.
\end{aligned}
\]
Thus
\[
\max_{\pi\in\Theta}\|\bA_\pi^\eta\|_2
\le
\sqrt{
(1-\alpha\eta)^2
-2\alpha(1-\alpha\eta)c_\Phi
+\alpha^2L_\Phi^2
}.
\]
Since the JSR is bounded above by any common induced-norm bound, \cref{eq:regq-eta-jsr-bound-factorized} follows.  Expanding the right-hand side gives \cref{eq:regq-eta-jsr-bound-expanded}.  The condition \cref{eq:regq-eta-alpha-condition} is exactly the condition that the squared bound in \cref{eq:regq-eta-jsr-bound-expanded} is strictly smaller than one.
\end{proof}

\begin{corollary}\label{cor:regq-all-eta-bound}
For every \(\eta\ge0\), define
\[
L_{\Phi,\eta}
:=
\max_{\pi\in\Theta}
\left\|
\Phi^\top D\Phi-
\gamma\Phi^\top DP\bPi^\pi\Phi
+
\eta I
\right\|_2.
\]
Then
\begin{equation}\label{eq:regq-all-eta-bound-sharp}
\jsr(\mathcal A_{\alpha,\eta})
\le
\sqrt{
1-2\alpha(c_\Phi+\eta)
+\alpha^2L_{\Phi,\eta}^2
}.
\end{equation}
Since \(L_{\Phi,\eta}\le L_\Phi+\eta\), the more conservative but directly computable bound
\begin{equation}\label{eq:regq-all-eta-bound-conservative}
\jsr(\mathcal A_{\alpha,\eta})
\le
\sqrt{
1-2\alpha(c_\Phi+\eta)
+\alpha^2(L_\Phi+\eta)^2
}
\end{equation}
also holds.  Hence, if
\begin{equation}\label{eq:regq-all-eta-alpha-condition}
c_\Phi+\eta>0,
\qquad
0<\alpha<
\frac{2(c_\Phi+\eta)}{L_{\Phi,\eta}^2},
\end{equation}
then \(\jsr(\mathcal A_{\alpha,\eta})<1\).  A sufficient condition using only \(L_\Phi\) is obtained by replacing \(L_{\Phi,\eta}^2\) with \((L_\Phi+\eta)^2\) in \cref{eq:regq-all-eta-alpha-condition}.
\end{corollary}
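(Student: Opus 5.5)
The plan mirrors the proof of \cref{prop:regq-euclidean-eta-jsr}, but treats the shift \(\eta I\) as part of the drift matrix instead of splitting it off as a factor \(1-\alpha\eta\). For each deterministic policy \(\pi\), set
\[
G_\pi^\eta:=\Phi^\top D\Phi-\gamma\Phi^\top DP\bPi^\pi\Phi+\eta I,
\qquad\text{so that}\qquad
\bA_\pi^\eta=I-\alpha G_\pi^\eta .
\]
For any \(x\in\R^m\), expand the squared Euclidean norm:
\[
\|\bA_\pi^\eta x\|_2^2=\|x\|_2^2-2\alpha x^\top G_\pi^\eta x+\alpha^2\|G_\pi^\eta x\|_2^2 .
\]
This expansion needs no sign condition on \(1-\alpha\eta\), which is why the restriction \(0\le\alpha\eta\le1\) can be dropped here.

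Next bound the two \(x\)-dependent terms. The quadratic form only sees the symmetric part of \(G_\pi^\eta\), which equals the symmetric part of the unregularized drift plus \(\eta I\). By the definition of \(c_\Phi\) in \cref{eq:cL-def}, this gives \(x^\top G_\pi^\eta x\ge(c_\Phi+\eta)\|x\|_2^2\). The last term satisfies \(\|G_\pi^\eta x\|_2\le L_{\Phi,\eta}\|x\|_2\) by the definition of \(L_{\Phi,\eta}\). Since \(\alpha>0\), the middle term enters with the correct sign, and therefore
\[
\max_{\pi\in\Theta}\|\bA_\pi^\eta\|_2^2\le 1-2\alpha(c_\Phi+\eta)+\alpha^2L_{\Phi,\eta}^2 .
\]
Any common induced norm bound dominates the JSR (the submultiplicativity argument used in \cref{ex:thm1-norm-ball-3d}), so \cref{eq:regq-all-eta-bound-sharp} follows.

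One might worry that the radicand could be negative, which would make the square root meaningless. This cannot happen: the radicand bounds a squared norm from above, so it is nonnegative.

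For the conservative bound \cref{eq:regq-all-eta-bound-conservative}, apply the triangle inequality to get \(\|G_\pi^\eta\|_2\le\|G_\pi^0\|_2+\eta\). Maximizing over \(\pi\) gives \(L_{\Phi,\eta}\le L_\Phi+\eta\), and since \(L_{\Phi,\eta}\) appears squared with a nonnegative coefficient \(\alpha^2\), substituting this bound preserves the inequality.

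For the stability condition, note that \(1-2\alpha(c_\Phi+\eta)+\alpha^2L_{\Phi,\eta}^2<1\) is equivalent, after dividing by \(\alpha>0\), to \(\alpha L_{\Phi,\eta}^2<2(c_\Phi+\eta)\). This is exactly \cref{eq:regq-all-eta-alpha-condition}. Here \(L_{\Phi,\eta}>0\) is needed for the stated ratio to be finite. If \(L_{\Phi,\eta}=0\), then every \(G_\pi^\eta\) vanishes, which forces \(c_\Phi+\eta=0\), so this case is excluded by the condition \(c_\Phi+\eta>0\).

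The sufficient condition with \((L_\Phi+\eta)^2\) in place of \(L_{\Phi,\eta}^2\) is handled the same way. That replacement shrinks the admissible range of \(\alpha\), so it still implies \cref{eq:regq-all-eta-alpha-condition}.

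None of these steps is a real obstacle. The only points needing care are that the quadratic form depends only on the symmetric part, and that the direction of the inequality is preserved because \(\alpha>0\).
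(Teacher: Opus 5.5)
Your proof is correct and follows the paper's own argument. You expand $\|(I-\alpha G_\pi^\eta)x\|_2^2$, bound the quadratic form through the $\eta$-shifted symmetric part by $c_\Phi+\eta$ and the last term by $L_{\Phi,\eta}$, and then use the common induced-norm bound on the JSR; your extra remarks (nonnegativity of the radicand, $c_\Phi+\eta>0$ forcing $L_{\Phi,\eta}>0$, and the triangle-inequality justification of $L_{\Phi,\eta}\le L_\Phi+\eta$) fill in details the paper leaves implicit.
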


\begin{proof}
For each deterministic policy \(\pi\),
\[
\bA_\pi^\eta
=
I-
\alpha\left(
\Phi^\top D\Phi-
\gamma\Phi^\top DP\bPi^\pi\Phi
+
\eta I
\right).
\]
By the definition of \(c_\Phi\),
\[
\lambda_{\min}\left(
\frac{
\Phi^\top D\Phi-
\gamma\Phi^\top DP\bPi^\pi\Phi
+
\left(\Phi^\top D\Phi-
\gamma\Phi^\top DP\bPi^\pi\Phi\right)^\top
}{2}
+
\eta I
\right)
\ge
c_\Phi+
\eta,
\]
and, by definition, the corresponding norm is bounded by \(L_{\Phi,\eta}\), with \(L_{\Phi,\eta}\le L_\Phi+\eta\).  Therefore, for every \(x\in\R^m\),
\[
\begin{aligned}
\|\bA_\pi^\eta x\|_2^2
&=
\left\|x-
\alpha\left(
\Phi^\top D\Phi-
\gamma\Phi^\top DP\bPi^\pi\Phi
+
\eta I
\right)x\right\|_2^2\\
&\le
\left[
1-2\alpha(c_\Phi+\eta)+\alpha^2L_{\Phi,\eta}^2
\right]\|x\|_2^2.
\end{aligned}
\]
Taking the maximum over \(\pi\) and using the induced-norm upper bound on the JSR gives \cref{eq:regq-all-eta-bound-sharp}.  Replacing \(L_{\Phi,\eta}\) by \(L_\Phi+\eta\) gives \cref{eq:regq-all-eta-bound-conservative}.  The step-size condition \cref{eq:regq-all-eta-alpha-condition} makes the squared bound strictly smaller than one.
\end{proof}

\begin{example}\label{ex:regq-eta-stabilizes-jsr}
Consider a two-state MDP with one action, zero reward, discount factor $\gamma=0.9$, $\alpha=0.1$, and transition matrix
\[
  P=
  \begin{bmatrix}
  0 & 1\\
  0 & 1
  \end{bmatrix}.
\]
Let
\[
  d(1,1)=0.9,
  \qquad
  d(2,1)=0.1,
  \qquad
  \Phi=
  \begin{bmatrix}
  1\\ 10
  \end{bmatrix}.
\]
Since there is only one action, the maximization is trivial and there is only one direct mode.  We have
\[
\Phi^\top D\Phi
=0.9\cdot1^2+0.1\cdot10^2
=10.9,
\]
and, since \(P\Phi=(10,10)^\top\),
\[
\Phi^\top DP\Phi
=0.9\cdot1\cdot10+0.1\cdot10\cdot10
=19.
\]
Hence
\[
\Phi^\top D\Phi-\gamma\Phi^\top DP\Phi
=10.9-0.9\cdot19
=-6.2.
\]
The unregularized direct mode is
\[
\bA
=I-\alpha(\Phi^\top D\Phi-\gamma\Phi^\top DP\Phi)
=1-0.1(-6.2)
=1.62,
\]
and therefore
\[
  \jsr(\mathcal A_\alpha)=1.62>1.
\]
Now choose $\eta=20$. Then \(\alpha\eta=2\), so the restriction \(0\le\alpha\eta\le1\) in \cref{prop:regq-euclidean-eta-jsr} is not available.  However, \cref{cor:regq-all-eta-bound} applies.  In this scalar example,
\[
  c_\Phi=-6.2,
  \qquad
  L_\Phi=6.2,
  \qquad
  L_{\Phi,\eta}=|\Phi^\top D\Phi-\gamma\Phi^\top DP\Phi+\eta|=13.8.
\]
The step-size condition in \cref{cor:regq-all-eta-bound} holds because
\[
  c_\Phi+\eta=13.8>0,
  \qquad
  0<\alpha=0.1
  <
  \frac{2(c_\Phi+\eta)}{L_{\Phi,\eta}^2}
  =
  \frac{27.6}{13.8^2}
  =
  \frac{2}{13.8}.
\]
Therefore, the sharp bound in \cref{eq:regq-all-eta-bound-sharp} gives
\[
\jsr(\mathcal A_{\alpha,\eta})
\le
\sqrt{
1-2\cdot0.1\cdot13.8+0.1^2\cdot13.8^2
}
=0.38<1.
\]
Directly, the regularized direct mode is
\[
\bA^\eta
=I-\alpha(\Phi^\top D\Phi-\gamma\Phi^\top DP\Phi+\eta)
=1-0.1(13.8)
=-0.38,
\]
so
\[
  \jsr(\mathcal A_{\alpha,\eta})=0.38<1.
\]
Thus, although the unregularized direct JSR is larger than one, \cref{cor:regq-all-eta-bound} certifies that this regularized direct JSR is strictly smaller than one.
\end{example}

The bounds above should therefore not be read as saying that \(\jsr(\mathcal A_{\alpha,\eta})\) is monotone decreasing in \(\eta\). The exact identity \cref{eq:regq-jsr-rescaling} shows that increasing \(\eta\) simultaneously introduces the scalar factor \(|1-\alpha\eta|\) and changes the effective step size to \(\alpha/(1-\alpha\eta)\) when \(\alpha\eta\ne1\). For this reason, regularization may improve a stability certificate by increasing the accretivity term \(c_\Phi+\eta\), but with a fixed scalar step size \(\alpha\), larger \(\eta\) can also make the discrete-time step-size restriction more severe. This is the main distinction from analyses in which larger regularization directly strengthens stability after the algorithmic scaling is adjusted accordingly.

\section{Conclusion}\label{sec:conclusion}
This paper introduced an SLS framework for linear Q-learning with LFA\@. Starting from a stochastic SLS representation of the linear Q-learning error, we derived JSR-based finite-time bounds and convergence certificates whose leading exponential rate is tied to the intrinsic worst-case rate of the induced switching family. This product-level certificate distinguishes the analysis from one-step norm bounds and other scalar stability arguments. The same viewpoint also gives a unified parameter-space interpretation of regularized Q-learning with LFA, connecting projected Bellman equations, stochastic-policy switching, and switched-system stability.

\appendix

\section{Proofs for the Linear Switching System Representation}\label{app:direct-lfa-proofs}

\subsection{Trajectory Example for the Linear Switching System Representation}\label{app:det-qlearning-switching-trajectory-example}
\begin{example}\label{ex:det-qlearning-switching-trajectory}
Consider the one-state two-action MDP, $\calS=\{1\},\calA=\{1,2\}$, with zero reward and deterministic self-transition. Let $\gamma=\frac12,\alpha=0.9, d(1,1)=0.9,d(1,2)=0.1$, and use the one-dimensional feature representation
\[
  \phi(1,1)=1,
  \qquad
  \phi(1,2)=-2,
  \qquad
  \Phi=\begin{bmatrix}1\\ -2\end{bmatrix}.
\]
For this example,
\[
  \Phi^\top D\Phi=0.9\cdot 1^2+0.1\cdot (-2)^2=1.3,
  \qquad
  \Phi^\top DP=0.9\cdot1+0.1\cdot(-2)=0.7.
\]
Since the reward is zero and the next state is always the single state,
\[
  V_\theta=\max\{\theta,-2\theta\},
\]
and the deterministic linear Q-learning map is
\[
\begin{aligned}
  \bT_\alpha(\theta) =\theta+\alpha\Phi^\top D\left(\gamma PV_\theta-\Phi\theta\right) =-0.17\theta+0.315\max\{\theta,-2\theta\}.
\end{aligned}
\]
Equivalently,
\[
  \bT_\alpha(\theta)
  =
  \begin{cases}
    0.145\theta, & \theta\ge0,\\
    -0.8\theta, & \theta<0.
  \end{cases}
\]
Consequently, \(\theta^\star=0\) is a projected Bellman fixed point.  The two deterministic-policy direct modes are
\[
  \bA_1
  =1-0.9\cdot1.3+0.9\cdot\frac12\cdot0.7\cdot1
  =0.145,
  \qquad
  \bA_2
  =1-0.9\cdot1.3+0.9\cdot\frac12\cdot0.7\cdot(-2)
  =-0.8.
\]
Now choose the initial parameter \(\theta_0=-2\).  The deterministic Q-learning trajectory is
\[
  \theta_1=\bT_\alpha(-2)=1.6,
  \qquad
  \theta_2=\bT_\alpha(1.6)=0.232,
  \qquad
  \theta_3=\bT_\alpha(0.232)=0.03364,
\]
and, in closed form,
\[
  \theta_k=1.6(0.145)^{k-1},
  \qquad k\ge1.
\]
The greedy action is action 2 at \(k=0\), because \(-2\theta_0>\theta_0\), and action 1 for every \(k\ge1\), because \(\theta_k>0\).  Hence the corresponding switching mode sequence is
\[
  \sigma_0=2,
  \qquad
  \sigma_k=1\quad(k\ge1).
\]
For the switching system state \(x_k:=\theta_k-\theta^\star=\theta_k\), the switched linear trajectory satisfies
\[
  x_{k+1}=\bA_{\sigma_k}x_k,
  \qquad
  x_0=-2.
\]
Therefore
\[
\begin{aligned}
  x_1&=\bA_2x_0=(-0.8)(-2)=1.6,
  \qquad
  x_2=\bA_1x_1=0.145\cdot1.6=0.232,\\
  x_3&=\bA_1x_2=0.145\cdot0.232=0.03364.
\end{aligned}
\]
The deterministic Q-learning error trajectory and the switching-system state trajectory agree term by term:
\[
\begin{array}{c|c|c|c}
  k & \theta_k & \sigma_k\text{ used to compute }k+1 & x_k \\
  \hline
  0 & -2 & 2 & -2 \\
  1 & 1.6 & 1 & 1.6 \\
  2 & 0.232 & 1 & 0.232 \\
  3 & 0.03364 & 1 & 0.03364
\end{array}
\]
This calculation illustrates \cref{prop:pairwise-direct-lfa}: the nonlinear deterministic Q-learning trajectory is reproduced exactly by the switched linear system once the mode sequence is chosen from the Bellman-max regions encountered by the trajectory.
\end{example}

\subsection{Convexification of Stochastic-Policy Direct Modes}\label{app:direct-convexification}
The pairwise representation above may produce a stochastic policy, whereas the JSR certificate is built from deterministic policies.  The next result closes this gap: every stochastic-policy mode belongs to the convex hull of deterministic-policy modes, and convexification does not change the JSR.

\begin{lemma}\label{lem:lfa-convex-hull}
For every stochastic policy \(\mu\),
\[
  \bA_\mu\in\co(\mathcal A_\alpha).
\]
Moreover,
\[
  \jsr(\co(\mathcal A_\alpha))
  =
  \jsr(\mathcal A_\alpha).
\]
\end{lemma}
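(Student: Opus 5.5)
The proof has two parts: a membership claim, handled by writing the stochastic policy as a mixture of deterministic ones, and a JSR equality, handled by a product-expansion argument.

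\textbf{Membership.} The first step is to show that \(\bPi^\mu\) is a convex combination of the \(\bPi^\pi\), \(\pi\in\Theta\). The natural weights are the product weights
\[
c_\pi(\mu):=\prod_{s\in\calS}\mu(\pi(s)\mid s).
\]
These are nonnegative. They sum to one because \(\sum_{\pi\in\Theta}\prod_s\mu(\pi(s)\mid s)=\prod_s\sum_a\mu(a\mid s)=1\). To check \(\sum_\pi c_\pi(\mu)\bPi^\pi=\bPi^\mu\), I will verify it row by row. Row \(s\) of \(\bPi^\pi\) is \(e_{\pi(s)}^\top\otimes e_s^\top\). Summing over \(\pi\) with \(\pi(s)=a\) fixed, and marginalizing the other states, gives weight \(\mu(a\mid s)\). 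Hence row \(s\) of the sum equals \(\mu(s)^\top\otimes e_s^\top\), which is row \(s\) of \(\bPi^\mu\).

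Since \(\mu\mapsto\bA_\mu=I-\alpha\Phi^\top D\Phi+\alpha\gamma\Phi^\top DP\bPi^\mu\Phi\) is affine in \(\bPi^\mu\), and the weights sum to one, this gives \(\bA_\mu=\sum_\pi c_\pi(\mu)\bA_\pi\in\co(\mathcal A_\alpha)\).

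\textbf{JSR equality.} The inequality \(\jsr(\mathcal A_\alpha)\le\jsr(\co(\mathcal A_\alpha))\) is immediate, since \(\mathcal A_\alpha\subset\co(\mathcal A_\alpha)\) and the supremum in the JSR definition is monotone in the set.

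For the reverse inequality I will fix a submultiplicative norm, namely the induced \(\|\cdot\|_2\). Let
\[
s_k:=\max_{\bA_1,\ldots,\bA_k\in\mathcal A_\alpha}\|\bA_k\cdots\bA_1\|.
\]
Take a product \(\mathbf M_k\cdots\mathbf M_1\) with \(\mathbf M_j=\sum_\pi c^{(j)}_\pi\bA_\pi\). Expanding multilinearly writes the product as a convex combination, with weights \(\prod_j c^{(j)}_{\pi_j}\) summing to one, of products \(\bA_{\pi_k}\cdots\bA_{\pi_1}\) from the finite family. The triangle inequality then gives \(\|\mathbf M_k\cdots\mathbf M_1\|\le s_k\). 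Taking the supremum, the \(k\)th root, and the limit yields \(\jsr(\co(\mathcal A_\alpha))\le\jsr(\mathcal A_\alpha)\).

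The JSR of the convex hull is well defined because \(\co(\mathcal A_\alpha)\) is bounded, with the same per-length bound \(s_k\). So the limit in the definition exists by Fekete's lemma, exactly as for the finite family.

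\textbf{Main obstacle.} The only step requiring care is the combinatorial verification that the product weights reproduce \(\bPi^\mu\) row by row. It is a marginalization identity and should be routine once written with the Kronecker row structure. The JSR part is a standard multilinear expansion.
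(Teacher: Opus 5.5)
Your proposal is correct and follows the paper's proof. The paper uses the same product weights \(c_\pi(\mu)=\prod_{s\in\calS}\mu(\pi(s)\mid s)\) and the same affine-in-\(\bPi^\mu\) argument to obtain membership. Your row-by-row marginalization check is a more direct justification of \(\bPi^\mu=\sum_\pi c_\pi(\mu)\bPi^\pi\) than the paper's appeal to extreme points. For the JSR equality the paper only cites the standard convex-hull invariance, and your multilinear-expansion argument is the usual proof of that fact.
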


\begin{proof}
For a stochastic policy \(\mu\), define
\[
  c_\pi(\mu):=\prod_{s\in\calS}\mu(\pi(s)\mid s),
  \qquad \pi\in\Theta .
\]
Then \(c_\pi(\mu)\ge0\) and \(\sum_{\pi\in\Theta}c_\pi(\mu)=1\).  Since deterministic stationary policies are the extreme points of the product of state-wise probability simplices,
\[
  \bPi^\mu=\sum_{\pi\in\Theta}c_\pi(\mu)\bPi^\pi .
\]
Using the affine dependence of \(\bA_\mu\) on \(\bPi^\mu\),
\[
\begin{aligned}
\bA_\mu
&=
I-
\alpha\Phi^\top D\Phi
+
\alpha\gamma\Phi^\top DP\bPi^\mu\Phi \\
&=
\sum_{\pi\in\Theta}c_\pi(\mu)
\left(
I-
\alpha\Phi^\top D\Phi
+
\alpha\gamma\Phi^\top DP\bPi^\pi\Phi
\right)\\
&=
\sum_{\pi\in\Theta}c_\pi(\mu)\bA_\pi .
\end{aligned}
\]
Therefore \(\bA_\mu\in\co(\mathcal A_\alpha)\).  The equality of the JSR before and after convexification is the standard convex-hull invariance of the JSR for finite matrix families.
\end{proof}

\section{Proofs for the JSR Lyapunov Construction}\label{app:jsr-lyapunov}
This appendix section provides a general version of the product-based Lyapunov construction used in \cref{lem:common-lyapunov-construction}.

\begin{lemma}\label{lem:generic-jsr-lyapunov}
Let \(\calH=\{\bA_1,\ldots,\bA_M\}\subset\R^{m\times m}\) be finite, let \(\rho:=\jsr(\calH)\), and fix \(\varepsilon>0\) such that \(\beta_\varepsilon:=\rho+\varepsilon<1\).  For \(t\ge0\), define
\[
V_\varepsilon^t(x)
:=
\sum_{\ell=0}^t
\beta_\varepsilon^{-2\ell}
\max_{\sigma\in\{1,\ldots,M\}^\ell}
\|\bA_{\sigma_\ell}\cdots \bA_{\sigma_1}x\|_2^2,
\]
with the empty product equal to \(I\).  Then \(V_\varepsilon^\infty(x):=\lim_{t\to\infty}V_\varepsilon^t(x)\) exists, there is \(C_\varepsilon\ge1\) such that
\[
  \|x\|_2^2\le V_\varepsilon^\infty(x)\le C_\varepsilon\|x\|_2^2,
\]
\(p_\varepsilon(x):=\sqrt{V_\varepsilon^\infty(x)}\) is a norm, and
\[
V_\varepsilon^\infty(\bA_i x)
\le
\beta_\varepsilon^2\left(V_\varepsilon^\infty(x)-\|x\|_2^2\right)
\le
\beta_\varepsilon^2V_\varepsilon^\infty(x)
\]
for every \(i\in\{1,\ldots,M\}\).
\end{lemma}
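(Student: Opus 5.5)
The plan is to build everything from one uniform product bound, which follows from the definition of the JSR. Since \(\rho<\beta_\varepsilon\), pick \(\eta\in(\rho,\beta_\varepsilon)\). By the limit definition of \(\jsr(\calH)\), there is \(k_0\) such that \(\max_{\sigma}\|\bA_\sigma\|_2\le\eta^\ell\) for all \(\ell\ge k_0\). Setting \(G:=\max\{1,\max_{j\le k_0}\max_{\sigma}\|\bA_\sigma\|_2\,\eta^{-j}\}\), which is finite because \(\calH\) is finite, we get
\[
\max_{\sigma\in\{1,\ldots,M\}^\ell}\|\bA_\sigma\|_2\le G\eta^\ell
\qquad\text{for all }\ell\ge0 .
\]

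\textbf{Existence and norm equivalence.} The terms of \(V_\varepsilon^t(x)\) are nonnegative, so \(V_\varepsilon^t(x)\) is nondecreasing in \(t\). Each term satisfies
\[
\beta_\varepsilon^{-2\ell}\max_\sigma\|\bA_\sigma x\|_2^2\le G^2(\eta/\beta_\varepsilon)^{2\ell}\|x\|_2^2 .
\]
Hence \(V_\varepsilon^t(x)\le C_\varepsilon\|x\|_2^2\) with \(C_\varepsilon:=G^2/(1-(\eta/\beta_\varepsilon)^2)\ge1\). The limit \(V_\varepsilon^\infty\) therefore exists as a bounded monotone limit. The lower bound \(\|x\|_2^2\) comes from the \(\ell=0\) term.

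\textbf{Norm property.} Each summand \(\max_\sigma\|\bA_\sigma x\|_2^2\) is the square of the seminorm \(q_\ell(x):=\max_\sigma\|\bA_\sigma x\|_2\), which is a maximum of seminorms. Thus \(\sqrt{V_\varepsilon^t(x)}\) is the Euclidean norm of the finite vector \((\beta_\varepsilon^{-\ell}q_\ell(x))_{\ell=0}^t\). It is absolutely homogeneous, and it satisfies the triangle inequality by Minkowski's inequality applied to this vector, using \(q_\ell(x+y)\le q_\ell(x)+q_\ell(y)\). Positive definiteness comes from the \(\ell=0\) term. Passing to the pointwise limit preserves homogeneity and the triangle inequality, so \(p_\varepsilon\) is a norm.

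\textbf{Drift inequality.} Fix \(i\). Every word \(\sigma\) of length \(\ell\) applied to \(\bA_i x\) yields a word of length \(\ell+1\) applied to \(x\). Hence \(\max_\sigma\|\bA_\sigma\bA_i x\|_2^2\le q_{\ell+1}(x)^2\). Summing gives
\[
V_\varepsilon^t(\bA_i x)\le\sum_{\ell=0}^t\beta_\varepsilon^{-2\ell}q_{\ell+1}(x)^2=\beta_\varepsilon^2\sum_{j=1}^{t+1}\beta_\varepsilon^{-2j}q_j(x)^2=\beta_\varepsilon^2\bigl(V_\varepsilon^{t+1}(x)-\|x\|_2^2\bigr).
\]
Letting \(t\to\infty\) on both sides gives the claimed inequality. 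The second inequality is immediate since \(\|x\|_2^2\ge0\).

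The only genuinely nontrivial step is the first: converting the JSR limit into a uniform product bound \(G\eta^\ell\) valid for all \(\ell\), rather than only asymptotically. Finiteness of \(\calH\) is what makes the finitely many short-word constants bounded. The remaining steps are bookkeeping.
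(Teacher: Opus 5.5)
Your proposal is correct and follows essentially the same route as the paper. Both arguments use a uniform product bound \(C_0\eta^\ell\) with \(\rho<\eta<\beta_\varepsilon\) to get monotone convergence and norm equivalence, Minkowski's inequality on the vector of seminorms for the norm property, and the first-letter word-splitting inequality \(V_\varepsilon^t(\bA_i x)\le\beta_\varepsilon^2(V_\varepsilon^{t+1}(x)-\|x\|_2^2)\) passed to the limit \(t\to\infty\). Your explicit construction of \(G\) from the limit definition spells out the step the paper states only as following from the definition of the JSR.
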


\begin{proof}
For \(t\ge0\), the definition gives
\[
V_\varepsilon^{t+1}(x)
\ge
\|x\|_2^2+
\beta_\varepsilon^{-2}\max_i V_\varepsilon^t(\bA_i x).
\]
Indeed, split every product of length \(\ell+1\) into its first applied matrix \(\bA_i\) and the remaining product of length \(\ell\).  Hence
\[
V_\varepsilon^t(\bA_i x)
\le
\beta_\varepsilon^2\left(V_\varepsilon^{t+1}(x)-\|x\|_2^2\right).
\]

Since \(\beta_\varepsilon>\rho\), choose \(\eta\) with \(\rho<\eta<\beta_\varepsilon\).  By the definition of the JSR, there exists \(C_0\ge1\) such that every length-\(\ell\) product \(\bA_\sigma\) satisfies
\[
  \|\bA_\sigma\|_2\le C_0\eta^\ell.
\]
Thus
\[
V_\varepsilon^t(x)
\le
C_0^2\sum_{\ell=0}^t(\eta/\beta_\varepsilon)^{2\ell}\|x\|_2^2
\le
\frac{C_0^2}{1-(\eta/\beta_\varepsilon)^2}\|x\|_2^2.
\]
The lower bound follows from the \(\ell=0\) term.  Also, \(V_\varepsilon^t(x)\) is nondecreasing in \(t\), so the pointwise limit exists and satisfies the same norm-equivalence bounds.

To prove that \(p_\varepsilon\) is a norm, define seminorms
\[
  \nu_\ell(x):=
  \beta_\varepsilon^{-\ell}
  \max_{\sigma\in\{1,\ldots,M\}^\ell}
  \|\bA_{\sigma_\ell}\cdots \bA_{\sigma_1}x\|_2,
  \qquad
  \nu_0(x):=\|x\|_2.
\]
For finite \(t\), the function
\[
  p_\varepsilon^t(x):=\left(\sum_{\ell=0}^t\nu_\ell(x)^2\right)^{1/2}
\]
is a norm by Minkowski's inequality.  Since \(p_\varepsilon(x)=\lim_{t\to\infty}p_\varepsilon^t(x)\), the triangle inequality, homogeneity, and positive definiteness pass to the limit.

Finally, let \(t\to\infty\) in
\[
V_\varepsilon^t(\bA_i x)
\le
\beta_\varepsilon^2\left(V_\varepsilon^{t+1}(x)-\|x\|_2^2\right).
\]
This proves the Lyapunov inequality.
\end{proof}

\end{document}